\definecolor{darkblue}{rgb}{0.0, 0.0, 0.55}
\DeclareMathOperator*{\argmin}{arg\,min}
\newcommand{\bu}{{\boldsymbol u}}
\newcommand{\bx}{{\boldsymbol x}}
\newcommand{\by}{{\boldsymbol y}}
\newcommand{\bX}{{\boldsymbol X}}
\newcommand{\bbf}{{\boldsymbol f}}
\newcommand{\dd}{\mathcal{\dagger}}
\newcommand{\ea}{\end{array}}
\newcommand{\ee}{\end{equation}}
\newcommand{\bea}{\begin{eqnarray}}
\newcommand{\eea}{\end{eqnarray}}
\newcommand{\beaa}{\begin{eqnarray*}}
\newcommand{\eeaa}{\end{eqnarray*}}
\def\dbR{\mathbb{R}}
\def\E{\mathbb{E}}
\def\bx{{\bf x}}
\def\by{{\bf y}}
\def\bz{{\bf z}}
\def\qed{ \hfill\qedsymbol}
\newcommand{\basa}{\begin{assumption}}
\newcommand{\easa}{\end{assumption}}
\newcommand{\bas}{\begin{assum}}
\newcommand{\eas}{\end{assum}}
\def\dd{\mathrm{d}}
\def\limP2{\,\mathop{\buildrel \Pi_2\over\longrightarrow\,}}
\def\1{{\bf 1}}
\def\by{{\bf y}}
\def\:{\!:\!}
\newtheorem{assump}{Assumption}
\newtheorem{theorem}{Theorem}
\newtheorem{lemma}{Lemma}
\newtheorem{proposition}{Proposition}
\newtheorem{assumption}{Assumption}
\newcommand{\bM}{{\boldsymbol M}}
\icmltitlerunning{Provably Convergent  Schr\"{o}dinger Bridge with Applications to Probabilistic Time Series Imputation}
\begin{document}

\twocolumn[
\icmltitle{Provably Convergent  Schr\"{o}dinger Bridge with Applications to Probabilistic Time Series Imputation}



\icmlsetsymbol{equal}{*}

\begin{icmlauthorlist}
\icmlauthor{Yu Chen}{equal,msml}
\icmlauthor{Wei Deng}{equal,msml}
\icmlauthor{Shikai Fang}{equal,utah}
\icmlauthor{Fengpei Li}{equal,msml} 
\icmlauthor{Nicole Tianjiao Yang}{emory} \\
\icmlauthor{Yikai Zhang}{msml}
\icmlauthor{Kashif Rasul}{msml}
\icmlauthor{Shandian Zhe}{utah}
\icmlauthor{Anderson Schneider}{msml}
\icmlauthor{Yuriy Nevmyvaka}{msml}
\end{icmlauthorlist}

\icmlaffiliation{utah}{School of Computing, University of Utah (Fang completed part of the work while interning at Morgan Stanley)}
\icmlaffiliation{msml}{Machine Learning Research, Morgan Stanley, NY}
\icmlaffiliation{emory}{Department of Mathematics, Emory University}

\icmlcorrespondingauthor{Wei Deng}{weideng056@gmail.com}
 
\icmlkeywords{Machine Learning, ICML}

\vskip 0.3in
]

\printAffiliationsAndNotice{\icmlEqualContribution} %

\begin{abstract}
The Schr\"{o}dinger bridge problem (SBP) is gaining increasing attention in generative modeling and showing promising potential even in comparison with the score-based generative models (SGMs). SBP can be interpreted as an entropy-regularized optimal transport problem, which conducts projections onto every other marginal alternatingly. However, in practice, only approximated projections are accessible and their convergence is not well understood. To fill this gap,
we present a first convergence analysis of the Schr\"{o}dinger bridge algorithm based on approximated projections. As for its practical applications, we apply SBP to probabilistic time series imputation by generating missing values conditioned on observed data. We show that optimizing the transport cost improves the performance and the proposed algorithm achieves the state-of-the-art result in healthcare and environmental data while exhibiting the advantage of exploring both temporal and feature patterns in probabilistic time series imputation.

\end{abstract}

\section{Introduction}

Time series data is extensively studied in various fields such as finance \citep{Large_Dimensional_Latent_Factor}, healthcare \citep{phy_2012}, and meteorology. However, incomplete or partial observations, equipment failures, and human errors may inevitably lead to the missing value problem, severely limiting the interpretation of the time series. For instance, the inherent illiquidity of certain assets can result in the occurrence of missing values, which in turn impacts our ability to devise reliable trading strategies \citep{Illiquidity_Premia}. In populations of Intensive Care Units (ICUs), predicting mortality rates based on time-series observations of vital signs is essential \citep{phy_2012}. However, the presence of missing data has greatly limited the efficacy of medications and surgical treatments.

One standard approach to tackle such a problem is to leverage score-based generative models (SGMs) \citep{nonequilibrium_thermodynamics_15, DDPM, song_likelihood_training, score_sde}, which propose to recover the data distribution through a backward process that estimates the scores of posterior distributions conditioned on the observed data.  This conditional nature directly motivates the study of Conditional Score-based Diffusion models for Imputation (CSDI) \citep{CSDI}. CSDI is able to learn the temporal-feature patterns well and achieves state-of-the-art performance in probabilistic time series imputation. However, transporting between terminal distributions is often quite expensive for SGMs and CSDI, which requires extensive computations and hyperparameter tuning. As such, a more efficient algorithm is needed to reduce the transport cost.

The Schr\"{o}dinger bridge problem (SBP) was initially proposed to solve problems in quantum mechanics and can be transformed into the entropy-regularized optimal transport (EOT) \citep{DSB, leonard_14, chen_21, Nutz_22_a}. Solving the EOT formulation gives rise to the iterative proportional fitting (IPF) algorithm \citep{Kullback_68, IPF_95}, which provides a principled paradigm to minimize the transport cost and facilitates the estimation of score functions to generate samples of higher quality; 
SBP further enables the generalization of linear Gaussian priors to non-linear families with more acceleration potential \citep{Chen21, Gaussian_SB, Pavon_CPAM_21, CSGLD, AWSGLD}. 

Despite the theoretical potential, the existing SBP-based generative models assume we can obtain the exact projections for the IPF algorithm, but in practice, it is often only approximated by deep neural networks \citep{DSB} or Gaussian processes \citep{SBP_max_llk}. In order to fill the gap, we extend the IPF algorithm by allowing for the approximated projections and refer to it as the approximate IPF (aIPF) algorithm; we further conduct theoretical analysis for aIPF based on the optimal transport theory, which deepens the understanding of training budgets in score approximations. Empirically, we apply the SBP-based generative models to probabilistic
time series imputation and demonstrate that minimizing the transport cost improves performance. We summarize our contributions as follows:
\begin{itemize}
    \item We show a first convergence analysis for Schr\"{o}dinger bridge with approximated projections and characterize the relation between training errors and the number of iterations. Our theory motivates future research for devising provably convergent Schr\"{o}dinger bridge (SB) algorithms and paves the way for understanding when SB is faster than SGMs. To bridge the gap between theoretical understanding and practical algorithms, we also draw connections between the aIPF algorithm and the divergence-based likelihood training of forward and backward stochastic differential equations (FB-SDEs).
    \item We apply the Schr\"{o}dinger bridge algorithm to probabilistic time series imputation. We show that optimizing the transport cost visibly improves the performance on synthetic data and achieves the state-of-the-art performance on real-world datasets. 
\end{itemize}

\section{Related Work}

\paragraph{Schrödinger Bridge}
Schr\"{o}dinger bridge problem (SBP) is known for the quantum mechanics formulation and is closely related to stochastic optimal control (SOC) \citep{Chen16, Pavon_CPAM_21, Caluya21} and optimal transport \citep{Compute_OT}. Recent works leverage SBP for generative modeling \citep{DSB, gefei_21} and explore theoretical properties \citep{Nutz_22_a, Nutz_22_func, khrulkov2022understanding, Lavenant_Santambrogio_22}. \citet{Conditional_DSB} apply the amortized formulation to model the conditional SBP for images and state-space models. \citet{forward_backward_SDE} propose likelihood training for SBP approximation based on divergence objectives \citep{Hutchinson89, FFJORD} and forward-backward stochastic differential equations (FB-SDEs) \citep{Ma_FB_SDE}; similar results are shown in \citet{SBP_max_llk}.

\paragraph{Time-series Imputations via Generative Models }

Multivariate time series imputation is challenging because of the temporal-feature dependencies and the irregular locations of missing values. To handle these issues, several recent works use deep generative learning and conditional sampling to achieve competitive performance. Generative techniques in these works include Gaussian processes \citep{multitask_GP}, VAEs \citep{GP_VAE}, neural ODEs \citep{latent_ode, NIPS2020_Emmanuel, control_neural_ode}, neural SDEs \cite{scalable_SDE, Continuous_Latent_Flows}, and GANs \citep{luo2019e2gan}.
Imputation methods based on recurrent networks or attention networks can be found in \citep{che2018recurrent, BRITS, attention_ts}. Curvature flow methods are potentially applicable \citep{Malladi_Sethian}.

\section{Preliminaries}

\subsection{Likelihood Training of SGMs} \label{sec:likelihood_training}
The score-based generative models (SGMs \citep{score_sde}) have become the go-to framework for 
generative models. SGMs first inject noise into the data and then recover it from a backward process \citep{Anderson82}
\begin{subequations}
\begin{align}
\mathrm{d} \bx_t &= \bbf(\bx_t, t) \mathrm{d} t+g(t) \mathrm{d} \bm{\mathrm{w}}_t, \label{SGM-SDE-f}\\
\mathrm{d} \bx_t&=\small{\left[\bbf(\bx_t, t)-g(t)^2 \nabla \log p_t\left(\bx_t\right)\right] \mathrm{d} t+g(t) \mathrm{d} \bar{\bm{\mathrm{w}}}_t}, \label{SGM-SDE-b}
\end{align}
\end{subequations}

where $\{\bx_t\}_{t=0}^{T} \in \mathbb{R}^d$ \footnote[4]{$d$ is the data dimension and can be reshaped to other formats.}, $\bx_0\sim p_{\text {data }}$, and $\bx_T\sim p_{\text {prior}}$; 
$\bbf\equiv \bbf\left(\bx_t, t\right)$ is the vector field; $g\equiv g(t)$ is the diffusion term; $\bm{\mathrm{w}}_t$ is the standard Brownian motion; $\bar{\bm{\mathrm{w}}}_t$ is a Brownian motion with time moving backward from $T$ to $0$; $p_t$ is the marginal density of the forward process \eqref{SGM-SDE-f} at time $t$. The score function  $\nabla \log p_t\left(\cdot\right)$ is approximated via a model $s_{\theta}(\cdot, t)$; $p_{\text {data }}$ is simulated via the backward process \eqref{SGM-SDE-b} starting at $\bx_T$. SGMs \citep{DDPM} aim to train $s_{\theta}(\cdot, t)$ by minimizing the mean squared error between the ground-truth score and estimator such that $\mathbb{E} \big[\lambda(t)\|s_{\theta}(\bx_t, t) - \nabla \log p_t\left(\bx_t \mid \bx_0\right)\|_2^2\big]$, where the weight $\lambda(t)$ is set manually. \citet{song_likelihood_training} proposes to maximize the likelihood to learn $s_{\theta}(\bx,t)$ such that 
\begin{align*}
\footnotesize
    &\log p_0^{\mathrm{SDE}}\left(\bx_0\right)  \geq  
    \mathbb{E}_{p_{0T}(\cdot|\bx_0))}\left[\log p_T\left(\bx_T\right)\right]\\
    &-\frac{1}{2} \int_0^T \mathbb{E}_{p_{0t}(\cdot|\bx_0)}\left[g^2\left\|\mathbf{s}_t\right\|_2^2+2\nabla \cdot\left(g^2 \mathbf{s}_t - \bbf \right)\right] \mathrm{d} t, \notag
\end{align*}
where ${s}_t = s_{\theta}(\bx_t, t)$ ,  $p_{0t}(\cdot|\bx_0) = p_{0t}(\bx_{t}|\bx_{0})$ stands for the conditional density of $\bx_{t}$, which evolves with the trajectory of \eqref{SGM-SDE-f}. The 
inequality becomes an equality if the estimator ${s}_t$ exactly matches the score function. Thus, optimizing the lower bound provides an efficient scheme to maximize the data likelihood.

\subsection{Schr\"{o}dinger Bridge Problem}

Even though SGMs have demonstrated success in generative models, they still suffer from transport inefficiency. A \emph{long evolving time} $T$ of the forward process \eqref{SGM-SDE-f} is required to facilitate the score estimation and guarantee that $\bx_t$ will converge close to a prior distribution. Besides, the choice of priors is repeatedly \emph{constrained to Gaussian} and further limits the acceleration potential. To tackle this issue, the dynamical Schr\"{o}dinger Bridge problem (SBP) aims to solve 
\begin{align}\label{dynamic_SBP}
    &\inf_{\mathbb{P}\in \mathcal{D}(\mu_{\star}, \nu_{\star})}\text{KL}(\mathbb{P}|\mathbb{Q}), 
\end{align}
where $\mathcal{D}(\mu_{\star}, \nu_{\star})$ denotes the space of \emph{path measures} with marginal probability measures $\mu_{\star}$ and $\nu_{\star}$ at time $t=0$ and $t=T$, respectively; $\mathbb{Q}$ is the prior measure, usually induced by Brownian motion or Ornstein-Uhlenbeck process; $\text{KL}(\cdot|\mathbb{Q})$ denotes the KL divergence with respect to the measure $\mathbb{Q}$.

The dynamical SBP can be interpreted from stochastic optimal control (SOC) (see section 4.4 in \citet{Chen21})
\begin{align}
\tiny
    &\inf_{\bu\in \mathcal{U}} \E\bigg\{\int_0^T \frac{1}{2}\|\bu(\bx_t,t)\|^2_2 \mathrm{d} t \bigg\} \label{SBP_classical_main}\\
    \text{s.t.} &\  \footnotesize{\mathrm{d} \bx_t=\left[\bbf(\bx_t, t)+g(t)\bu(\bx_t,t)\right]\mathrm{d} t+ \sqrt{2\varepsilon}g(t) \mathrm{d} \bm{\mathrm{w}}_t} \notag\\
    &\ \ \ \ \bx_0\sim \mu_{\star}(\cdot),\ \  \bx_T\sim \nu_{\star}(\cdot)\notag
    ,
\end{align}
where $\mathcal{U}$ is the control set $\bu:\mathbb{R}^d\times [0,T]\rightarrow \mathbb{R}^d$; 
the state-space is $\mathbb{R}^d$ and is sometimes omitted; the expectation is taken w.r.t the joint state PDF $\rho(\bx, t)$; $\varepsilon$ is a regularizer.

\section{Provably Convergent Schr\"{o}dinger Bridge}

Diffusion models have shown superiority in generative models and time series imputation, which motivate interesting theoretical works \citep{lee2022convergence, Sitan_22_sampling_is_easy, DSB, stat_efficiency_SGM}. As a theoretical ideal candidate, Schr\"{o}dinger bridge also has gained tremendous attention \citep{DSB, SBP_max_llk, gefei_21, forward_backward_SDE}, however, the practical theory has not been studied in the literature. 

To bridge the gap between theory and practice, we initiate the convergence study of the practical Schr\"{o}dinger bridge algorithm based on general cost functions and highlight the connections between SBP, EOT, and FB-SDEs.

\subsection{Schr\"{o}dinger Bridge: from Dynamic to Static}
\label{static_SBP}

By applying the disintegration of measures \citep{leonard_14_chain_rule}, the chain rule \citep{DSB} for the KL divergence for the dynamical SBP \eqref{dynamic_SBP} follows
\begin{equation*}
\begin{split}
\small
    \text{KL}(\mathbb{P}|\mathbb{Q})=\text{KL}(\pi|\mathcal{G})+\iint \text{KL}(\mathbb{P}^{ \bx_T}_{\bx_0}|\mathbb{Q}^{\bx_T}_{\bx_0})\dd\pi(\bx_0, \bx_T).
\end{split}
\end{equation*}
where $\pi:=(\mu_{\star},\nu_{\star})$ is a coupling with marginals $\mu_{\star}$ and $\nu_{\star}$; $\mathcal{G}$ is a Gibbs measure: $\mathrm{d}\mathcal{G} \propto e^{-c_{\varepsilon}}\mathrm{d} (\mu_{\star} \otimes \nu_{\star})$; $c_{\varepsilon}$ is a cost function in Eq.\eqref{def_cost_varphi_psi}; $\otimes$ is the product measure; the marginals of $\mathbb{P}$ (or $\mathbb{Q}$) at $t=0$ and $T$ follow from $\mu_{\star}$ and $\nu_{\star}$; $\mathbb{P}^{ \bm{x}_T}_{\bm{x}_0}:=\mathbb{P}(\cdot|\bx_0=\bm{x}_0, \bx_T=\bm{x}_T)$ (or $\mathbb{Q}^{ \bm{x}_T}_{\bm{x}_0}$) denotes a diffusion bridge of $\mathbb{P}$ (or $\mathbb{Q}$) from $\bm{x}_0$ to $\bm{x}_T$. 

Assuming the same bridges for $\mathbb{P}$ and $\mathbb{Q}$, the \emph{static} SBP yields a coupling $\pi_{\star}$ (see Lemma \ref{solution_property} in Appendix \ref{static_SB_property}):
\begin{equation}\label{static_SB}
    \pi_{\star}=\argmin_{\pi\in \Pi(\mu_{\star}, \nu_{\star})} \text{KL}(\pi|\mathcal{G}),
\end{equation}
where $\Pi(\mu_{\star}, \nu_{\star})$ is the set of couplings with marginals $\mu_{\star}$ and $\nu_{\star}$. Moreover, the static SBP yields a structural representation for \eqref{static_SB} \citep{Compute_OT, Nutz22_note}:
\begin{equation*}\label{main_solution_property}
    \mathrm{d}\pi_{\star}(\bx, \by)=e^{{\varphi_{\star}}(\bx) +  \psi_{\star}(\by)-c_{\varepsilon}(\bx, \by)}\mathrm{d} (\mu_{\star} \otimes \nu_{\star}),
\end{equation*}
where ${\varphi_{\star}}$ and $ \psi_{\star}$ are the Schr\"{o}dinger potential functions.

\subsection{From Static SBP to Entropic Optimal Transport}
\label{SBP_EOT}

Next, the equivalence between the static SBP and entropic optimal transport (EOT) follows that:
\begin{equation*}
\begin{split}
\footnotesize
    \text{KL}(\pi|\mathcal{G})&=\iint\log\left(\frac{d\pi}{d (\mu_{\star} \otimes \nu_{\star})} \frac{d (\mu_{\star} \otimes \nu_{\star})}{d \mathcal{G}}\right)d \pi \\
    &\dot{=}\text{KL}(\pi|\mu_{\star} \otimes \nu_{\star}) + \iint  \log  e^{c_{\varepsilon}}d\pi\\
    &=\iint  c_{\varepsilon} d\pi + \text{KL}(\pi|\mu_{\star} \otimes \nu_{\star}), 
\end{split}
\end{equation*}
where $\dot{=}$ denotes an equality that's up to a constant. Problem \eqref{static_SB} is equivalent to the EOT with a 1-regularizer: 
\begin{equation}\label{EOT_problem_supp}
\footnotesize
    \inf_{\pi\in \Pi(\mu_{\star}, \nu_{\star})} \iint  c_{\varepsilon}(\bx, \by)\pi(\mathrm{d}\bx, \mathrm{d}\by) + 1\cdot \text{KL}(\pi|\mu_{\star} \otimes \nu_{\star}).
\end{equation}

\subsection{Approximate Iterative Proportional Fitting (aIPF)}

Recall that the first and second marginal of the coupling $\pi_{\star}$ follow from $\mu_{\star}$ and $\nu_{\star}$, respectively. As detailed in section \ref{derive_schrodinger_eqn}, we can arrive at \emph{Schr\"{o}dinger equations}
\begin{align}
    \int  e^{{\varphi_{\star
    }}(\bx)+ \psi_{\star
    }(\by)-c_{\varepsilon}(\bx,\by)}\mu_{\star} (\mathrm{d}\bx)&=1 \quad \nu_{\star}\text{-}a.s.\label{SE1}\\
    \int  e^{{\varphi_{\star
    }}(\bx)+ \psi_{\star
    }(\by)-c_{\varepsilon}(\bx,\by)}\nu_{\star} (\mathrm{d}\by)&=1 \quad \mu_{\star}\text{-}a.s..\label{SE2}
\end{align}
Notably, the score functions also give rise to a variant of \emph{Schr\"{o}dinger equations}, as shown in Eq.\eqref{schrodinger_eqn_supp}, establishing an inherent link between scoring functions and Schr\"odinger potentials.


To obtain the desired $\pi_{\star}:=(\mu_{\star},\nu_{\star})$, a standard tool is the iterative proportional fitting (IPF) (also known as Sinkhorn algorithm) \citep{IPF_95}. The exact IPF algorithm alternatingly projects the coupling $\pi_k:=(\mu_k, \nu_k)$ at iteration $k$ to every other marginal such that for any $k \in \mathbb{N}$:
$$\mu_{2k+1}= \mu_{\star}, \ \ \nu_{2k}= \nu_{\star}.$$

To wit, we solve every other marginal alternatingly and show the convergence of the marginals to the correct distribution
\begin{equation*}
\footnotesize
    \mu_{2k} \xRightarrow{\text{convergence}}    \mu_{\star}, \ \   \nu_{2k+1} \xRightarrow{\text{convergence}}    \nu_{\star}.
\end{equation*}

However, it is too expensive in practice to obtain the exact marginals $\mu_{\star}$ and $\nu_{\star}$ via Eq.\eqref{SE1} and \eqref{SE2}. 
To solve this problem, it is inevitable to approximate the projections (numerically solved via FB-SDE in Eq.\eqref{bs_sde_formulation}) through specific tools, such as deep neural networks \citep{DSB, forward_backward_SDE} or Gaussian process \citep{SBP_max_llk} 
\begin{equation}
\begin{split}
\label{marginal_eqn}
    \mu_{2k+1}&=\mu_{\star, k+1}\approx\mu_{\star}, \quad \nu_{2k}\ \ \ = \nu_{\star, k}\approx\nu_{\star},\\
\end{split}
\end{equation}
where $\mu_{\star, k+1}$ (or $\nu_{\star, k}$) is an approximate measure at iteration $k+1$ (or $k$) that is close to $\mu_{\star}$ (or $\nu_{\star}$). The approximate IPF (aIPF) is presented in Algorithm \ref{sinkhorn} and the comparison to the exact IPF is illustrated in Figure.\ref{fig:sinkhorn}.

\begin{figure}
\centering
\includegraphics[width=70mm]{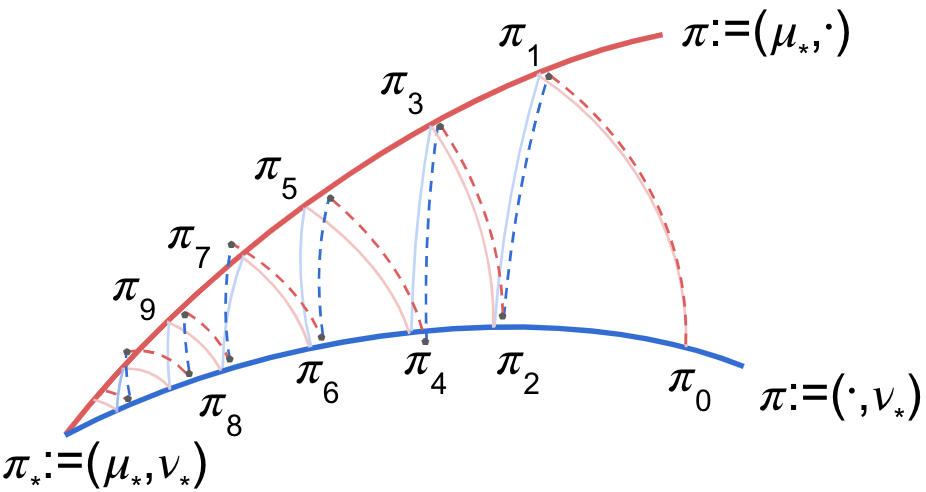}
\caption{IPF vs aIPF. The light solid lines (IPF) show the iterates of exact projections; the dotted lines (aIPF) present the approximate projections towards $\pi_{\star}:=(\mu_{\star}, \nu_{\star})$.}
\label{fig:sinkhorn}
\end{figure}

\begin{algorithm}[ht]\caption{One iteration of approximate IPF (aIPF). $ \psi_k$ and $\varphi_k$ denote the estimates of potential functions at iteration $k$. Empirically, the integral is estimated via divergence-based likelihood training of FB-SDEs in section \ref{Sec:SBP-llk}. }\label{sinkhorn}
\begin{align*}
     \psi_k(\by)&\approx-\log \int_{\mathbb{R}^d} e^{ \varphi_k(\bx)-c_{\varepsilon}(\bx,\by)} \mu_{\star}(\mathrm{d}\bx)\\
     \varphi_{k+1}(\bx)&\approx -\log \int_{\mathbb{R}^d} e^{ \psi_k(\by)-c_{\varepsilon}(\bx,\by)} \nu_{\star}(\mathrm{d}\by).
\end{align*}
\end{algorithm}

\subsection{Convergence Analysis}

For the convergence study, the existing (heuristic) finite sample bound  \citep{SBP_max_llk} conjectured that 
\begin{equation*}
\begin{split}
\small
    \|\mu_{2k}-\mu_{\star}\| &\lesssim  \frac{1}{k} +\sum_{i=0}^k\epsilon K^i\\
    \|\nu_{2k-1}-\nu_{\star}\| &\lesssim  \frac{1}{k} +\sum_{i=0}^k\epsilon K^i,
\end{split}
\end{equation*}
where $\|\cdot\|$ is some metric and the aIPF projection operator is assumed to be $K$-Lipchitz. However, ensuring $K<1$ based on general cost functions is not trivial \citep{sinkhorn_exp_general}. As such, a \emph{fundamental question} remains open 
\begin{center}
    {\it \textcolor{darkblue}{Can we ensure the approximation error is less dependent on the number of iterations $k$?}}
\end{center}
Answering this question yields concrete guidance on the computational complexity and tells us when the approximation error doesn't get arbitrarily worse. To achieve this target,  we first lay out the following standard assumptions.

\begin{assump}[Dissipativity]\label{Dissipativity}
$\mu_{\star}$ and $\nu_{\star}$ satisfy the dissipative condition for some constants $m_{\text{ds}}>0$ and $b_{\text{ds}}\geq 0$.
\begin{equation*}\label{dissipative2}
\begin{split}
    &\bigg\langle \bx, -\nabla \log \frac{\mathrm{d} \mu_{\star}}{\mathrm{d}\bx}(\bx)\bigg\rangle\geq m_{\text{ds}}\|\bx\|_2^2 -b_{\text{ds}}\\
    &\bigg\langle \by, -\nabla \log \frac{\mathrm{d} \nu_{\star}}{\mathrm{d}\by}(\by)\bigg\rangle\geq m_{\text{ds}}\|\by\|_2^2 -b_{\text{ds}},
\end{split}
\end{equation*}
where $\frac{\mathrm{d} \mu_{\star}}{\mathrm{d}\bx}$ and $\frac{\mathrm{d} \nu_{\star}}{\mathrm{d}\by}$ are the probability densities for the probability measure $\mu_{\star}$ and $\nu_{\star}$, respectively; $\nabla \log \frac{\mathrm{d} \mu_{\star}}{\mathrm{d}\bx}(\bx)$ and $\nabla \log \frac{\mathrm{d} \nu_{\star}}{\mathrm{d}\by}(\by)$ are the score functions. 
\end{assump}

\emph{Remark:} The above assumption is standard and has been used in \citet{Maxim17}, which allows the densities to be non-convex in a ball with a radius depending on $b_{\text{ds}}$.  
Notably, it also implies the log-Sobolev inequality (LSI) with a bounded constant $C_{LS}$ \citep{lee2022convergence}.
 
\begin{assump}[$\epsilon$-approximation]\label{approx_score}
$\nabla \log \frac{\mathrm{d}\mu_{\star, k}}{\mathrm{d}\bx}(\bx)$ and $\nabla \log \frac{\mathrm{d}\nu_{\star, k}}{\mathrm{d}\by}(\by)$ are the $\epsilon$ approximation of score functions $\nabla \log \frac{\mathrm{d} \mu_{\star}}{\mathrm{d}\bx}(\bx)$ and $\nabla \log \frac{\mathrm{d} \nu_{\star}}{\mathrm{d}\by}(\by)$ at the $k$-th iteration, respectively
    \begin{equation*}
    \begin{split}
        \label{approximate_error_supp}
        &\Big\|\nabla \log \frac{\mathrm{d}\mu_{\star, k}}{\mathrm{d}\bx} - \nabla \log \frac{\mathrm{d} \mu_{\star}}{\mathrm{d}\bx}\Big\|_{\infty}\leq \epsilon\\
        &\Big\|\nabla \log \frac{\mathrm{d}\nu_{\star, k}}{\mathrm{d}\by} - \nabla \log \frac{\mathrm{d} \nu_{\star}}{\mathrm{d}\by}\Big\|_{\infty}\leq \epsilon, 
    \end{split}
    \end{equation*}
\end{assump}

Such an assumption is closely related to the $\epsilon$-accurate score approximation in \citet{DSB, lee2022convergence} except that our focus is the marginals on $\mu_{\star}$ and $\nu_{\star}$ while theirs is the marginal density along the forward SDE \eqref{SGM-SDE-f}. To further extend the score approximation assumption from $L^{\infty}$ (uniformly accurate) to $L^2$ (in expectation), we can leverage the ``bad set'' idea \citep{lee2022convergence} or the Girsanov theorem \citep{Sitan_22_sampling_is_easy} to match the likelihood training framework better. Moreover, the errors in the two marginals do not need to be identical, and a unified $\epsilon$ is employed mainly for the sake of analytical convenience.

\begin{assump}[Lipschitz smoothness] \label{smooth_potential}
The score functions of marginal densities $\nabla\log \frac{\mathrm{d} \mu_{\star}}{\mathrm{d}\bx}$ are $\nabla\log \frac{\mathrm{d} \nu_{\star}}{\mathrm{d}\by}$ are both $L$-Lipschitz smooth.
\end{assump}

To sketch the proof, we first show a summation property of $\sum_{k\geq 1}^n \text{KL}(\pi_k | \pi_{k-1})$ without breaking the cyclical invariance property \citep{Nutz_22_func} in Lemma \ref{corollary_summable} such that
\begin{equation*}
    \sum_{k\geq 1}^n \text{KL}(\pi_k | \pi_{k-1})\leq \text{KL}( \pi_{\star}|\mathcal{G})-\text{KL}(\pi_0|\mathcal{G})+ O(n\epsilon).
\end{equation*}
Next, we prove $\text{KL}(\mu_{2k}|\mu_{\star, k})\leq \text{KL}(\pi_{2k}|\pi_{2k-1})$ and $\text{KL}(\nu_{\star, k}|\nu_{2k-1})\leq \text{KL}(\pi_{2k}|\pi_{2k-1})+O(\epsilon)$, which yields
\begin{equation*}
\begin{split}
    \sum_{k\geq 1}^n \text{KL}(\mu_{2k}|\mu_{\star, k})&\leq \text{KL}( \pi_{\star}|\mathcal{G})-\text{KL}(\pi_0|\mathcal{G})+ O(n\epsilon)\\
    \sum_{k\geq 1}^n \text{KL}(\nu_{\star, k}|\nu_{2k-1})&\leq \text{KL}( \pi_{\star}|\mathcal{G})-\text{KL}(\pi_0|\mathcal{G})+ O(n\epsilon).
\end{split}
\end{equation*}
Moreover, we obtain an approximately monotone-decreasing property in proposition \ref{monotone_property} as follows
\begin{equation*}
\begin{split}
    \text{KL}(\mu_{2k}|\mu_{\star, k}) &\leq \text{KL}(\mu_{2t+2}|\mu_{\star, k+1}) + O(\epsilon)\\
    \text{KL}(\nu_{\star, k}|\nu_{2k-1}) &\leq \text{KL}(\nu_{\star, k+1}|\nu_{2k+1})+O(\epsilon).
\end{split}
\end{equation*}
Finally, combining Lemma \ref{finite} and the fact that $\mu_{\star, k}$ (or $\nu_{\star, k}$) is $\epsilon$-close to $\mu_{\star}$ (or $\nu_{\star}$), our main theorem follows that:
\begin{theorem}[Approximately Sublinear Rate for Marginals]\label{main_theorem}
Given dissipative assumption \ref{Dissipativity}, $\epsilon$-approximate score assumption \ref{approx_score}, and smoothness assumption \ref{smooth_potential}, we have
\begin{equation*}
\begin{split}
    \text{KL}(\mu_{2k}|\mu_{\star}) &\leq \frac{\text{KL}( \pi_{\star}|\mathcal{G})-\text{KL}(\pi_0|\mathcal{G})}{k} + O({\epsilon}^{\frac{1}{2}} +k^{\frac{1}{2}}\epsilon)
    \\ \text{KL}(\nu_{\star}|\nu_{2k-1}) &\leq \frac{\text{KL}( \pi_{\star}|\mathcal{G})-\text{KL}(\pi_0|\mathcal{G})}{k}+O({\epsilon}^{\frac{1}{2}} +k^{\frac{1}{2}}\epsilon),
\end{split}
\end{equation*}
where the big-O notations are independent of $k$.
\end{theorem}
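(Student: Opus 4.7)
The plan is to follow the four-step road-map indicated in the discussion immediately preceding the theorem: (i) establish a telescoping summation bound on the coupling-level KL, (ii) reduce to marginal KLs by disintegration, (iii) promote the summation bound to a per-iterate $1/k$ bound via approximate monotonicity, and (iv) exchange the iterate's reference $\mu_{\star,k}$ for the true marginal $\mu_{\star}$. Step (iv) is where I expect the main obstacle to lie, since it is the only place where Assumption~\ref{approx_score} is used in a non-trivial analytic way.

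For step (i), I would exploit that in exact IPF each $\pi_{k+1}$ is the KL-projection of $\pi_k$ onto a convex marginal-constraint set, which gives the Pythagorean identity $\text{KL}(\pi_{\star}|\pi_k)=\text{KL}(\pi_{\star}|\pi_{k+1})+\text{KL}(\pi_{k+1}|\pi_k)$. Telescoping in $k$ together with the cyclical-invariance identity $\text{KL}(\pi_{\star}|\pi_0)=\text{KL}(\pi_{\star}|\mathcal{G})-\text{KL}(\pi_0|\mathcal{G})$ of \citet{Nutz_22_func} yields the bound exactly; under Assumption~\ref{approx_score} each aIPF projection only approximately saturates Pythagoras, so an $O(\epsilon)$ slack is incurred per step, accumulating to $O(n\epsilon)$. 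Step (ii) is a short computation: by disintegration, $\text{KL}(\pi_{2k}|\pi_{2k-1})$ splits as a marginal KL plus the integrated KL of the conditionals, so $\text{KL}(\mu_{2k}|\mu_{\star,k})\le \text{KL}(\pi_{2k}|\pi_{2k-1})$ follows from non-negativity of the conditional part, while the symmetric bound for $\nu$ picks up an extra $O(\epsilon)$ because aIPF only approximately enforces $\nu_{2k}=\nu_{\star}$.

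For step (iii) I would combine the approximate monotonicity $\text{KL}(\mu_{2(k+1)}|\mu_{\star,k+1})\le \text{KL}(\mu_{2k}|\mu_{\star,k})+O(\epsilon)$, which in the exact case is the contraction of alternating KL-projections onto convex sets and in the approximate case again draws on the $\epsilon$-score bound, with the cumulative bound from (i)--(ii). Summing a near-monotone non-negative sequence whose total mass is at most $C+O(n\epsilon)$ and then applying an AM--GM balance between the telescoping error and the monotonicity slack produces a per-iterate bound of the form $C/k+O(k^{1/2}\epsilon)$ matching the rate in the theorem.

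Step (iv), the main obstacle, requires transferring the KL divergence from $\mu_{\star,k}$ to $\mu_{\star}$. Writing
\begin{equation*}
\text{KL}(\mu_{2k}|\mu_{\star})=\text{KL}(\mu_{2k}|\mu_{\star,k})+\mathbb{E}_{\mu_{2k}}\!\Bigl[\log \tfrac{\mathrm d\mu_{\star,k}}{\mathrm d\mu_{\star}}\Bigr],
\end{equation*}
Assumption~\ref{approx_score} ensures $\|\nabla\log(\mathrm d\mu_{\star,k}/\mathrm d\mu_{\star})\|_\infty\le 2\epsilon$, so the integrand is a $2\epsilon$-Lipschitz function. Kantorovich--Rubinstein duality paired with $\mathbb{E}_{\mu_{\star}}[\log(\mathrm d\mu_{\star,k}/\mathrm d\mu_{\star})]\le 0$ (Jensen) bounds the cross-entropy term by $2\epsilon\,W_2(\mu_{2k},\mu_{\star})$. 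Talagrand's $T_2$ inequality, available through the log-Sobolev inequality implied by dissipativity (Assumption~\ref{Dissipativity}), then converts $W_2$ back into $\sqrt{\text{KL}(\mu_{2k}|\mu_{\star})}$, and a Young-type split yields $\text{KL}(\mu_{2k}|\mu_{\star})\le (1+O(\epsilon))\,\text{KL}(\mu_{2k}|\mu_{\star,k})+O(\epsilon^{1/2})$. Plugging into step (iii) closes the $\mu$-marginal bound, and the $\nu$-marginal argument is fully symmetric. The delicate point throughout is to propagate Assumption~\ref{smooth_potential} so that $C_{LS}$ and the Lipschitz constants entering these steps remain uniform in $k$; without this, the constants hidden in the big-$O$'s could degrade with the iteration count.
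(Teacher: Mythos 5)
Your proposal reproduces the paper's four-step skeleton — (i) a summation bound on $\sum_k \text{KL}(\pi_k|\pi_{k-1})$, (ii) reduction to marginal KLs by data processing, (iii) promotion to a per-iterate $1/k$ bound via approximate monotonicity and a finite-sum-plus-drift lemma, (iv) swapping the reference measure $\mu_{\star,k}$ for $\mu_\star$ — and so the architecture matches. But you take a genuinely different route in step (iv), and your bookkeeping of the $\epsilon^{1/2}$ term is off, which is worth spelling out.

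\paragraph{Step (iv) differs.} The paper replaces $\mu_{\star,k}$ by $\mu_\star$ through a direct estimate: it writes
\begin{equation*}
\text{KL}(\mu_{2k}|\mu_\star) \le \text{KL}(\mu_{2k}|\mu_{\star,k}) + \int \log \frac{\dd\mu_{\star,k}}{\dd\mu_\star}\,\dd\mu_{2k},
\end{equation*}
and bounds the cross term by $O(\epsilon)$ via Lemma~\ref{control_diff_measure_v2}, which is a Taylor-type estimate on the log-ratio under dissipativity and Lipschitzness of the score difference. You instead invoke Kantorovich--Rubinstein duality, Talagrand's $T_2$ transportation inequality (via LSI from Assumption~\ref{Dissipativity}), and a Young split. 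This works, but with one caveat: an untuned Young split $c\sqrt{a}\le\eta a+c^2/(4\eta)$ with $\eta=\Theta(1)$ produces a multiplicative constant strictly greater than $1$ in front of $\text{KL}(\mu_{2k}|\mu_{\star,k})$, which inflates the $1/k$ leading term; with $\eta=\Theta(\epsilon)$ the coefficient becomes $1+O(\epsilon)$ and the additive remainder is in fact $O(\epsilon)$, not $O(\epsilon^{1/2})$ — your $O(\epsilon^{1/2})$ from this step is an over-estimate, and anything weaker than $1+O(\epsilon)$ multiplicatively would break the stated leading term. A side benefit of your route is that Talagrand $T_2$ is applied only to $\mu_\star$ (whose dissipativity is directly assumed), whereas the paper's Lemma~\ref{control_diff_measure_v2} requires tail control on the weighting measure $\mu_{2k}$, a regularity of the iterates that the paper assumes tacitly.

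\paragraph{Where $\epsilon^{1/2}$ actually enters.} In the paper, the additive $O(\epsilon^{1/2})$ arises in step (iii), from the sequence lemma (Lemma~\ref{finite} / Lemma~\ref{unbounded_cost}): for a non-negative sequence with $\sum_{i\le n}x_i\le C+O(n\epsilon)$ and $x_{i+1}\le x_i+\epsilon$, the per-term bound is $x_k\le C/(k+1)+\sqrt{2C\epsilon}+O(k^{1/2}\epsilon)+\epsilon$, where $\sqrt{2C\epsilon}=O(\epsilon^{1/2})$ comes from balancing the total mass $C$ against the monotonicity slack, and $k^{1/2}\epsilon$ from $\sqrt{(n\epsilon)\cdot\epsilon}$. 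Your step (iii) records only $O(k^{1/2}\epsilon)$ and omits this $\epsilon^{1/2}$; you then relocate $\epsilon^{1/2}$ to step (iv). The final rates coincide, but the attribution is inaccurate, and if you carry out step (iv) tightly (getting $O(\epsilon)$ there), step (iii) must supply the $\epsilon^{1/2}$ for the theorem as stated.

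\paragraph{Step (i).} Your Pythagorean telescoping phrasing is equivalent to the paper's direct computation via the potentials $\varphi_n,\psi_n$ and the identity $\mathbb{E}^{\pi_\star}[\varphi_n\oplus\psi_n]=\mu_\star(\varphi_n)+\nu_\star(\psi_n)$; the $O(\epsilon)$ per-step slack you attach because aIPF projects onto $\Pi(\cdot,\nu_{\star,k})$ rather than $\Pi(\cdot,\nu_\star)$ is precisely what the paper's Lemma~\ref{control_diff_measure} and~\ref{control_diff_measure_v2} quantify. Also note the paper's identity $\text{KL}(\pi_\star|\pi_0)=\text{KL}(\pi_\star|\mathcal{G})-\text{KL}(\pi_0|\mathcal{G})$ is specific to $\varphi_0=0$ (the first projection only changes $\psi$), a point you should make explicit if you keep this phrasing. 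Overall the proposal is correct but substitutes a heavier optimal-transport argument for the paper's elementary tail estimate in the final exchange-of-reference step.
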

The proof is presented in Appendix \ref{main_result_proof}, which provides the first-ever evidence of the convergence of the aIPF algorithm with approximate projections. Our analysis suggests that to achieve an $\epsilon$-accurate target, the iteration should be greater than $\Omega(\frac{1}{\epsilon})$, although early stopping may be necessary to avoid excessive perturbations.  It is worth noting that \emph{order $O(k^{\frac{1}{2}})$ is more preferable than linear-order or expansive upper bounds} in \citet{SBP_max_llk} (when $K\geq 1$).  However, we acknowledge this result is not entirely practical without the bounded-cost-function assumption. We believe the square-root order can be further refined by obtaining a tighter convergence rate \citep{Nutz_sinkhorn_order2, sinkhorn_exp_general}. This refinement can be left as future work.

Moreover, the convergence of the minimized cost \eqref{SBP_classical_main} potentially facilitates the estimation of score functions. However, it involves a trade-off between computation and accuracy. Such a trade-off can be used to establish instances where Schr\"{o}dinger bridge is faster than SGMs.

\subsection{Connections between aIPF and FB-SDE}

The complexity analysis of SBP hinges on the convergence of aIPF based on entropic optimal transport; however, solving the exact integrals in Algorithm \ref{sinkhorn} is far from trivial and heavily relies on score-based sampling techniques. In the next section, we present the likelihood training of SBP to connect with aIPF and build the conditional variant for applications in probabilistic time series imputations. To facilitate reading, we visualize the connections between the convergence analysis of aIPF and the likelihood training of FB-SDE in Figure \ref{fig:diagram} below.

\begin{figure}[ht]
\centering
\includegraphics[width=0.48\textwidth]{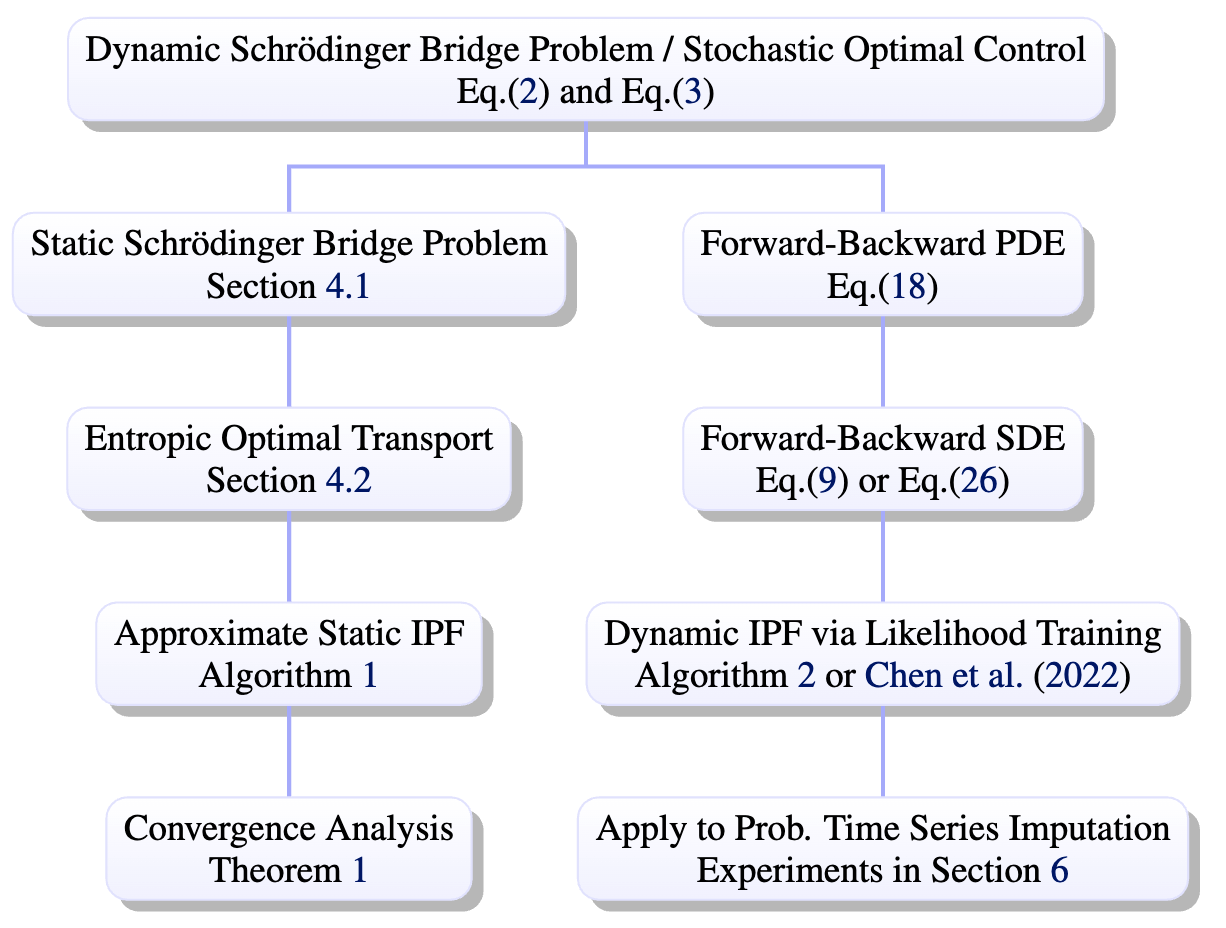}
\caption{Relation between IPF, SBP, SOC, and FB-SDE.
}
\vspace{-0.1in}
\label{fig:diagram}
\end{figure}

 \begin{figure*}[ht]
\centering
\vspace{0.1in}
\includegraphics[width=\textwidth]{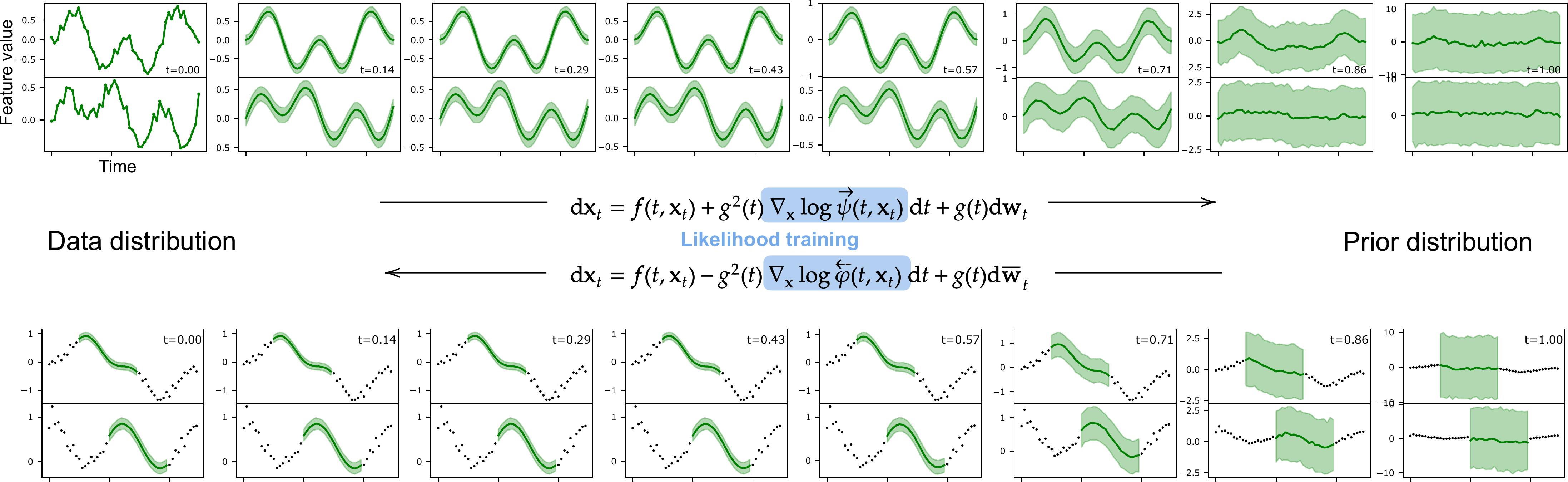}
\vspace{-0.1in}
\caption{Demo of conditional Schr\"{o}dinger bridge for imputation (CSBI). The example shows two correlated time-series features, the dark dots are condition values, and the green band represents an 80\% confidence interval of the imputed missing values, which starts from the prior distribution with very large uncertainty on the right to narrow and smooth band on the left matching the data distribution very well. The score functions $\nabla \log \overrightarrow\psi$ and $\nabla \log \overleftarrow\varphi$ in FB-SDEs \eqref{bs_sde_formulation} are trained via divergence-based likelihood. 
}
\label{fig:intro}
\end{figure*}

\section{Likelihood Training with Applications to Probabilistic Time Series
Imputation}
\label{Sec: method}

In this section, we solve SBP via likelihood training of FB-SDEs and briefly present the conditional Schr\"{o}dinger
bridge method for probabilistic time series
imputation (CSBI).

\subsection{Likelihood Training via Schr\"{o}dinger Bridge}
\label{Sec:SBP-llk}

Solving SBP is often intractable. We show it could be transformed into computation-friendly FB-SDEs. We sketch the main results here and detail derivations in appendix \ref{Preliminaries_SBP}.

Rewrite the SOC perspective of SBP \eqref{SBP_classical_main} with $\varepsilon=\frac{1}{2}$ and constraints \eqref{FKP_eqn_supp} into a Lagrangian \citep{Chen21}:
    \begin{align}
    &\mathcal{L}(\rho, \bu, \phi):=\int_0^T \int_{\mathbb{R}^d} \frac{1}{2}\|\bu(\bx_t,t)\|^2_2\rho(\bx_t, t) \notag\\ 
    &+\phi(\bx_t, t) \bigg(\frac{\partial \rho}{\partial t}+\nabla \cdot(\rho (\bbf+g\bu))-\frac{g^2 \Delta \rho}{2} \bigg) \mathrm{d}\bx_t \mathrm{d}t\notag,
\end{align}
where $\phi(\bx_t, t)$ denotes a Lagrangian multiplier \citep{Chen21}. Further, consider the log transformation based on score functions $(\nabla\log \overrightarrow\psi, \nabla\log \overleftarrow\varphi)$
 \begin{align}
     \overrightarrow\psi(\bx_t, t) &= \exp(\phi(\bx_t, t)/g^2)\notag\\
     \overleftarrow\varphi(\bx_t, t)&=\rho^{\star}(\bx_t, t)/\overrightarrow\psi(\bx_t, t),\notag
 \end{align}
where $\rho^{\star}(\bx, t)$ is the optimal density conditional on the optimal control $\bu^{\star}:= \nabla \phi(\bx, t)$.  
Now we obtain the FB-SDEs \citep{forward_backward_SDE}:

\begin{proposition}
The forward-backward SDE (FB-SDE) associated with the problem \eqref{SBP_classical_main} with $\epsilon=\frac{1}{2}$ and conditional constraints follows 
\begin{subequations}\label{bs_sde_formulation}
\begin{equation}
\begin{aligned}
\footnotesize
\mathrm{d} \bx_t=\left[\bbf(\bx_t, t) +g(t)^2\nabla\log\overrightarrow\psi(\bx_t, t)\right]\mathrm{d}t+g(t) \mathrm{d} \bm{\mathrm{w}}_t, \label{bs_sde_formulation_A} \\
\end{aligned}
\end{equation}
\begin{equation}
\begin{aligned}
\footnotesize
\mathrm{d} \bx_t=\left[\bbf(\bx_t, t) -g(t)^2\nabla\log\overleftarrow\varphi(\bx_t, t)\right]\mathrm{d}t+g(t) \mathrm{d} \bar{\bm{\mathrm{w}}}_t  \label{bs_sde_formulation_B}. \\
\end{aligned}
\end{equation}
\end{subequations}
\end{proposition}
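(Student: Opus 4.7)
The plan is to apply standard variational calculus to the Lagrangian, perform the Cole--Hopf--type log transformation to linearize the resulting Hamilton--Jacobi--Bellman system into Schrödinger-type equations, and then invoke Anderson's time-reversal theorem to obtain the backward component.

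\textbf{Step 1 (Variational analysis).} First I would integrate the constraint term by parts: since $\int \phi\,\nabla\cdot(\rho\, g\bu)\,\mathrm{d}\bx = -\int g\rho\, \bu\cdot\nabla\phi\,\mathrm{d}\bx$ under decay at infinity, the Lagrangian becomes a functional whose first variation in $\bu$ is $\rho(\bu - g\nabla\phi)$. Setting this to zero pointwise yields the optimal control
$$\bu^{\star}(\bx_t,t) = g(t)\,\nabla\phi(\bx_t,t).$$
Substituting back and taking the first variation in $\rho$ produces the HJB equation
$$\partial_t\phi + \bbf\cdot\nabla\phi + \tfrac{1}{2}g^2\Delta\phi + \tfrac{1}{2}g^2\|\nabla\phi\|_2^2 = 0,$$
while the constraint itself is the Fokker--Planck--Kolmogorov (FKP) equation satisfied by $\rho^{\star}$ under the optimally controlled dynamics.

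\textbf{Step 2 (Log transformation and forward SDE).} Next I would apply the transformation $\overrightarrow{\psi} = \exp(\phi)$ (adjusted per the paper's convention so that $\nabla\log\overrightarrow{\psi} = \nabla\phi$). A direct calculation shows that the quadratic nonlinearity in the HJB equation cancels, leaving the linear PDE
$$\partial_t \overrightarrow{\psi} + \bbf\cdot\nabla\overrightarrow{\psi} + \tfrac{1}{2}g^2\Delta\overrightarrow{\psi} = 0.$$
Plugging $\bu^{\star} = g\nabla\log\overrightarrow{\psi}$ into the controlled SDE $\mathrm{d}\bx_t = [\bbf + g\bu^{\star}]\mathrm{d}t + g\,\mathrm{d}\bm{\mathrm{w}}_t$ delivers \eqref{bs_sde_formulation_A} immediately.

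\textbf{Step 3 (Factorization and backward SDE).} Defining $\overleftarrow{\varphi} = \rho^{\star}/\overrightarrow{\psi}$, I would verify by substitution into the FKP equation for $\rho^{\star}$ (and using the linear PDE for $\overrightarrow{\psi}$) that $\overleftarrow{\varphi}$ satisfies the adjoint linear equation
$$\partial_t \overleftarrow{\varphi} + \nabla\cdot(\bbf\,\overleftarrow{\varphi}) - \tfrac{1}{2}g^2\Delta\overleftarrow{\varphi} = 0,$$
so that $\rho^{\star} = \overrightarrow{\psi}\cdot\overleftarrow{\varphi}$ is the Schrödinger factorization. Finally I would apply Anderson's reverse-time theorem to \eqref{bs_sde_formulation_A}: the time reversal has drift $\bbf + g^2\nabla\log\overrightarrow{\psi} - g^2\nabla\log\rho^{\star}$, and using $\nabla\log\rho^{\star} = \nabla\log\overrightarrow{\psi} + \nabla\log\overleftarrow{\varphi}$ collapses this to $\bbf - g^2\nabla\log\overleftarrow{\varphi}$, yielding \eqref{bs_sde_formulation_B}.

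\textbf{Main obstacle.} The bookkeeping in Step 1 (sign conventions and the precise interaction between the regularizer $\varepsilon = 1/2$ and the $g(t)$-dependent diffusion) is the primary source of error; with time-dependent $g$, any mismatch between the convention used to define $\overrightarrow{\psi}$ and the one implicit in the HJB linearization will propagate a stray factor of $g^2$ into the score terms. The other delicate point is justifying Anderson's reversal, which requires enough regularity on $\rho^{\star}$ so that $\nabla\log\rho^{\star}$ is well-defined and integrable along paths; this will be handled by appealing to the dissipativity and smoothness assumed in Assumptions~\ref{Dissipativity}--\ref{smooth_potential}, together with the boundary pinning $\bx_0\sim\mu_{\star}$, $\bx_T\sim\nu_{\star}$ which translates into the Schrödinger system $\overrightarrow{\psi}_0\overleftarrow{\varphi}_0 = \mu_{\star}$, $\overrightarrow{\psi}_T\overleftarrow{\varphi}_T = \nu_{\star}$.
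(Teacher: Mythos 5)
Your proposal matches the paper's own derivation (Appendix A.1) step for step: Lagrangian variational analysis to get $\bu^\star = g\nabla\phi$ and the HJB equation, Cole--Hopf transformation to linearize into the Schr\"odinger system, factorization $\rho^\star = \overrightarrow\psi\,\overleftarrow\varphi$, and Anderson time reversal for the backward SDE. Your flagged concern about conventions is legitimate --- the main text writes $\overrightarrow\psi = \exp(\phi/g^2)$ while the appendix uses $\overrightarrow\psi = \exp(\phi/(2\varepsilon))$, which with $\varepsilon = 1/2$ gives $\exp(\phi)$ as you use; the latter is the one consistent with the cancellation of the quadratic term under time-dependent $g$, and you resolve it correctly.
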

where $\bx_0\sim \mu_{\star}$ and $\bx_T \sim \nu_{\star}$.

\begin{algorithm*}[ht]
\caption{Likelihood training of conditional Schr\"{o}dinger bridge for imputation (CSBI)}\label{alg2_CSBI}
\label{alg-train}
\begin{algorithmic}[1]
\STATE \textbf{Input:} samplers of joint space $p_{\mathrm {prior}}(\bx)$, $ p_{\mathrm {obs}}(\bx)$, and masks.
\STATE Set output of $\overrightarrow\bz_{t}^{\theta}$ as zero and warmup train $\overleftarrow\bz_{t}^{\omega}$ using score matching.
\REPEAT
\STATE  Update cached forward trajectories following \eqref{bs_sde_formulation_A} with certain frequency.
\STATE Compute ${\mathcal{L}}^{\mathrm{SBP}}_{\omega}$ 
    \eqref{eq:llk-ipf-b}  with $\bx_0\sim p_{ \mathrm{obs}}(\bx)$ together with $\bM_{\mathrm{cond}}$, $\bM_{\mathrm{target}}$.
\STATE Take gradient step 
$\nabla {\mathcal{L}}^{\mathrm{SBP}}_{\omega}$ 
    and update $\overleftarrow\bz_{t}^{\omega}$.
\STATE Update cached backward trajectories of \eqref{bs_sde_formulation_B} with certain frequency.
\STATE Compute ${\mathcal{L}}^{\mathrm{SBP}}_{\theta}$ \eqref{eq:llk-ipf-f},
    with $\bx_T\sim p_{\mathrm{prior}}(\bx)$, $\bM_{\mathrm{cond}}=\textbf{0}$, $\bM_{\mathrm{target}}=\textbf{1}$.
\STATE Take gradient step  
$\nabla {\mathcal{L}}^{\mathrm{SBP}}_{\theta}$ 
    and update $\overrightarrow\bz_{t}^{\theta}$.
\UNTIL \textbf{a stopping criterion}
\end{algorithmic}
\end{algorithm*}

Define $\overrightarrow\bz_{t}=g(t) \nabla \log\overrightarrow\psi(\bx_{t}, t) $ and
$ \overleftarrow\bz_{t}= g(t)\nabla \log\overleftarrow\varphi(\bx, t)$. It\^{o}'s lemma (see Theorem 3 \citep{forward_backward_SDE}) leads to the diffusion of $\overrightarrow\bz_{t}$ and $\overleftarrow\bz_{t}$ in \eqref{bs_sde_formulation}.

We use models $\overrightarrow\bz_{t}^{\theta}$ and $\overleftarrow\bz_{t}^{\omega}$ to learn the forward policy $\overrightarrow\bz_t$ and backward policy $\overleftarrow\bz_t$ and refer to the objective of data likelihood as $\mathcal{L}^{\text{SBP}}_{\theta,\omega}$.  In the context of imputation problems with conditional and target entries, maximizing the likelihood is equivalent to optimizing the backward policy $\overleftarrow\bz_{t}^{\omega}$ and forward policy $\overrightarrow\bz_{t}^{\theta}$ as follows \citep{forward_backward_SDE}:

\begin{subequations}
\begin{align}
&{\mathcal{L}}^{\mathrm{SBP}}_{\omega}(\bx_0)= {
    - \widehat{\E}_{\bx_t \sim \text{\eqref{bs_sde_formulation_A}}} \bigg[\frac{1}{2} 
    {\| \overleftarrow\bz_{t}^{\omega} \circ \bM_{\mathrm{target}} \|}_2^2}+ \label{eq:llk-ipf-b}  \\
 &  g\nabla \cdot \left( \overleftarrow\bz_{t}^{\omega} \circ \bM_{\mathrm{target}} \right)
     {
    + (\overrightarrow\bz_t \circ \bM_{\mathrm{target}})^\intercal 
    (\overleftarrow\bz_{t}^{\omega} \circ \bM_{\mathrm{target}})\bigg]
    } \notag \\
& \small{
    {\mathcal{L}}^{\mathrm{SBP}}_{\theta}(
    \bx_T) = 
    - \widehat{\E}_{\bx_t \sim \text{\eqref{bs_sde_formulation_B}}} \bigg[\frac{1}{2} {\| \overrightarrow\bz_{t}^{\theta}\|}_2^2
    + g\nabla \cdot \overrightarrow\bz_{t}^{\theta}
    + \overleftarrow\bz_t^\intercal \overrightarrow\bz_{t}^{\theta} \bigg]
    }, \label{eq:llk-ipf-f}
\end{align}\label{eq:llk-ipf}
\end{subequations}

where $\widehat{\E}$ denotes the empirical expectations of the sampled trajectories according to the FB-SDEs \eqref{bs_sde_formulation}; $\bM_{\mathrm{target}}$ is the conditional mask to be clarified in section \ref{Sec:joint-space}; $\nabla\cdot$ denotes the divergence (for clarity, $\nabla$ is the gradient.). The masks and conditions are not required in Eq.\eqref{eq:llk-ipf-f}, because the backward SDEs start from the known prior. Since simulating the full sample trajectory is costly, we apply the caching-trajectory strategy \citep{DSB, forward_backward_SDE} to improve the efficiency. Now, we present the practical method in Algorithm \ref{alg-train} and refer to the conditional Schr\"{o}dinger bridge for imputation as CSBI. Similar to \citet{DSB}, this algorithm can be viewed as a dynamic implementation of the IPF algorithm.

During the inference, conditional sampling follows the joint distribution learning by applying the backward policy \citep{score_sde}. 
The Langevin corrector \citep{score_sde, forward_backward_SDE} can also be used to improve performance.
See details in Appendix \ref{inference_supp}.

\subsection{Joint Space of SBP for Time Series Imputation }\label{Sec:joint-space}

{
Time series imputation task requires filling missing values in arbitrary entries given partial observations in random positions,
where the condition-target relation usually varies from sample to sample.
This requires the model to capture both temporal and feature-wise dependency at the same time. 
Next, we present our framework based on divergence objectives.
The joint distribution learning of $\bx := (\bx_{ \text{target}},\bx_{ \text{cond}})$ is the following.
\begin{align}\label{eq: joint-constrain}
    &\begin{cases}
    \bx_0\sim \mu_{\star}=p_{ \text{target}| \text{cond}}(\bx)  p_{ \text{cond}}(\bx)\\
    \bx_T\sim \nu_{\star}=p_{\text{prior-} \text{target}}(\bx)  p_{ \text{cond}}(\bx),
    \end{cases}
\end{align}
where $p_{\text{target}| \text{cond}}(\bx) = p(\bx_{ \text{target}}|\bx_{ \text{cond}})$ is the target conditional distribution, $p_{\text {prior-target} }(\bx)$ is the prior distribution of target values, and $p_{\text {cond}}(\bx) = p(\bx_{ \text{cond}})$ is the data-dependent distribution of observations being conditioned on. 

\begin{figure}[ht]
\centering
\includegraphics[width=0.48\textwidth]{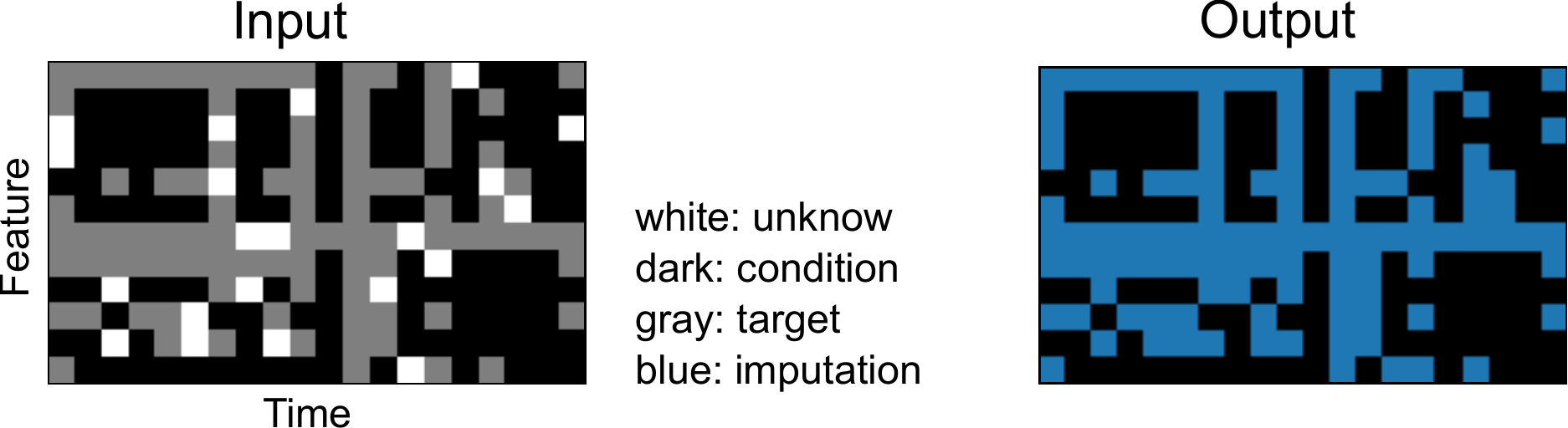}
\vspace{-0.2in}
\caption{Demonstration of entry types and masks. The usage for SBP is described in section \ref{Sec:joint-space}}
\label{fig:mask}
\end{figure}

\paragraph{Masking for conditional inference } 
The irregular condition-target relation is indicated by observation mask $\bM_{\mathrm{obs}}$, condition mask $\bM_{\mathrm{cond}}$, and target mask $\bM_{\mathrm{target}}$ (Figure \ref{fig:mask}).
$\bM_{\mathrm{obs}}$ covers all ground true values,
unknown entries is complementary to $\bM_{\mathrm{obs}}$ without ground truths,
$\bM_{\mathrm{target}}$ is for the imputation target,
$\bM_{\mathrm{cond}}$ indicates the input condition for the model, which is a subset of $\bM_{\mathrm{obs}}$.
When the model is trained or evaluated, $\bM_{\mathrm{target}}$ is usually set as part of $\bM_{\mathrm{obs}}$ so the performance can be calculated by comparing the imputed values and the ground truths.
When the model is deployed, $\bM_{\mathrm{target}}$ can also cover the unknown entries. See more details on masks in Appendix \ref{subsec:imputation_formulation}.

\vspace{-0.1in}

\begin{table*}[ht]
\caption{Model evaluation: 
The metrics include root mean square error (RMSE), mean absolute error (MAE), and continuous ranked probability score (CRPS).
A smaller metric is better. Results are obtained from 5 folds. 
}
\label{tab:results}
\vspace{0.1in}
\centering
\begin{tabular}{l||lll||lll||lll }
\hline
 & \multicolumn{3}{c}{PM2.5} & \multicolumn{3}{c}{PhysioNet 0.1} 
 & \multicolumn{3}{c}{PhysioNet 0.5} \\ \hline
Metrics & 
\multicolumn{1}{l}{RMSE} & \multicolumn{1}{l}{MAE} & CRPS & 
\multicolumn{1}{l}{RMSE} & \multicolumn{1}{l}{MAE} & CRPS & 
\multicolumn{1}{l}{RMSE} & \multicolumn{1}{l}{MAE} & CRPS \\ \hline

V-RIN & \multicolumn{1}{l}{ 40.1 } & \multicolumn{1}{l}{  25.4 } & 0.53  & 
    \multicolumn{1}{l}{ 0.63 } &  \multicolumn{1}{l}{  0.27 } &  0.81 & 
    \multicolumn{1}{l}{ 0.69 } & \multicolumn{1}{l}{ 0.37 } &  0.83  \\ \hline

Multitask GP & \multicolumn{1}{l}{ 42.9 } & \multicolumn{1}{l}{ 34.7 } & 0.27  & 
    \multicolumn{1}{l}{ 0.80 } &  \multicolumn{1}{l}{ 0.46 } & 0.49 & 
    \multicolumn{1}{l}{ 0.84 } & \multicolumn{1}{l}{ 0.51 } &  0.56 \\ \hline

GP-VAE & \multicolumn{1}{l}{ 43.1  } & \multicolumn{1}{l}{ 26.4 } &  0.41 & 
    \multicolumn{1}{l}{ 0.73 } &  \multicolumn{1}{l}{ 0.42 } & 0.58 & 
    \multicolumn{1}{l}{ 0.76 } & \multicolumn{1}{l}{ 0.47 } & 0.66 \\ \hline

CSDI  & \multicolumn{1}{l}{ 19.3 } & \multicolumn{1}{l}{9.86} & \textbf{0.11}  &   
    \multicolumn{1}{l}{ 0.57} &  \multicolumn{1}{l}{ 0.24} &  0.26 &  
    \multicolumn{1}{l}{ 0.65} & \multicolumn{1}{l}{ 0.32} & \textbf{0.35}  \\ \hline  
%
\textbf{CSBI (ours)} & \multicolumn{1}{l}{ \textbf{19.0} } & \multicolumn{1}{l}{\textbf{9.80} } & \textbf{0.11} & 
    \multicolumn{1}{l}{ \textbf{0.55}} &  \multicolumn{1}{l}{ \textbf{0.23} } &  \textbf{0.25} & 
    \multicolumn{1}{l}{ \textbf{0.63}} & \multicolumn{1}{l}{ \textbf{0.31}} &  \textbf{0.35} \\ \hline

\end{tabular}%
\end{table*}

\section{Experiments}\label{main_experiments}
In this section, we evaluate the performance of CSBI through one synthetic data and two real datasets. \footnote[2]{ \url{https://github.com/morganstanley/MSML/tree/main/papers/Conditional_Schrodinger_Bridge_Imputation}.}

\subsection{Datasets}
\paragraph{Synthetic Data}

We first test our algorithm on a simple synthetic dataset. The time series data has $K=8$ features and $L=50$ time steps. The data is created by adding signals, sinusoidal curves of various frequencies, and random noise.
Next, data entries are removed randomly mimicking the missed observed values (unknown entries). 
The observed entries are split into conditions and artificial targets. 
20 consecutive time points of each feature are selected as the artificial targets.
More details can be found in Appendix \ref{sec:appendix_dataset}.
Examples are shown in Figure \ref{fig:sinusoid}.

\begin{figure}[ht]
\centering
\includegraphics[width=0.48\textwidth]{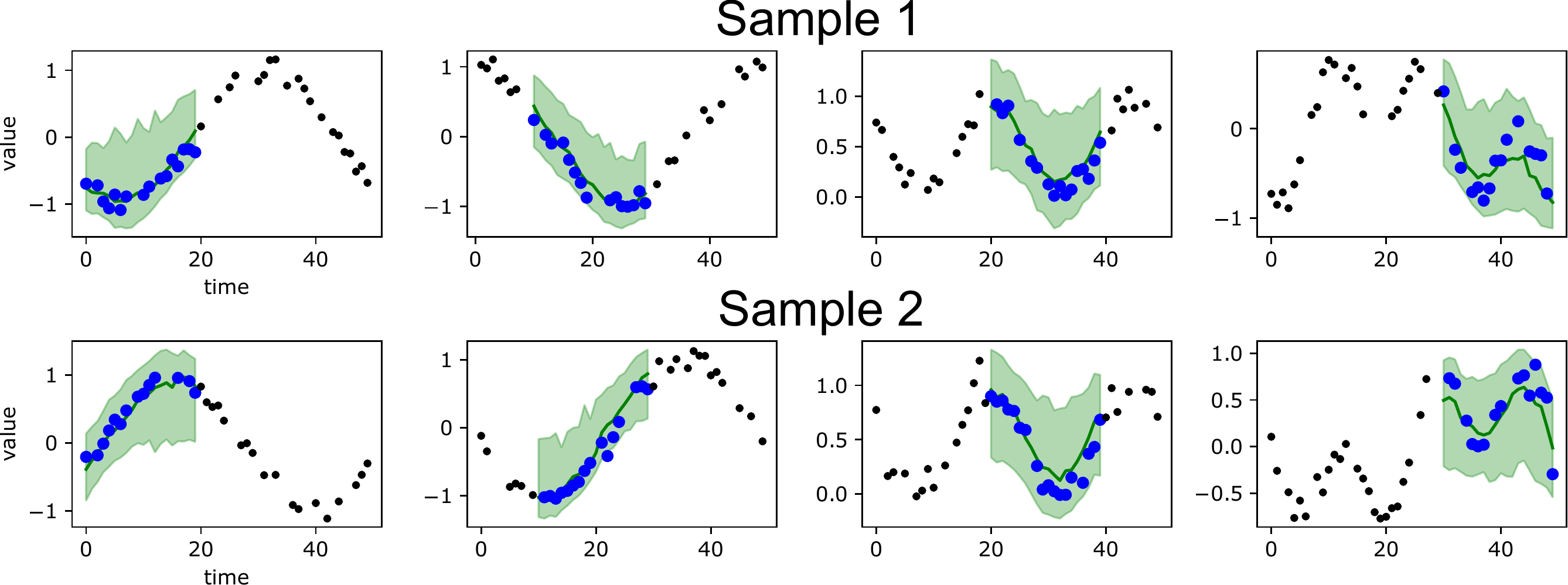}
\caption{A demo with 4 features and 50 time steps. Dark dots are conditions, blue dots are the ground truth of the missing values, and the error bar shows the 80\% confidence. The imputation is conducted for the targets (blue dots) and the unknowns (gaps between blue dots). The target mask of the last feature is at the end of the time window which is equivalent to the prediction task. 
}
\label{fig:sinusoid}
\vspace{-0.2in}
\end{figure}

\paragraph{Environmental Data} It is composed of the hourly sampled PM2.5 air quality index (with unit $\mu g/m^3$) from 36 monitoring stations for 12 months \citep{U_Air}. The time window has $K=36$ features and $L=36$ time points. The raw data has 13\% missing values (the portion of unknown entries).
The target entries only come from the observed entries.
See demonstrations in Appendix \ref{sec:appendix_imputation_eg}.

\paragraph{Healthcare Data}
Another dataset widely used in time series imputation literature is the PhysioNet Challenge 2012 \citep{phy_2012}. It has 4000 clinical time series with $K=35$ features and $L=48$ time points for 48 hours from the intensive care unit (ICU).
The raw data is sparse with 80\% missing values (the portion of unknown entries) making the imputation very challenging.
We further randomly select 10\% and 50\% out of observed values as the targets.
Demonstrations are shown in Appendix \ref{sec:appendix_imputation_eg}.

\subsection{Model Pipeline}
In this section, we briefly describe the pipeline of the framework.
More details about the neural networks, training procedure, inference, baseline models, and evaluation can be found in Appendix \ref{appendix:experiment}.

As described in section \ref{Sec:SBP-llk} and algorithm \ref{alg-train}, 
we use two separate neural networks to model the forward or backward policy 
$\overrightarrow{\bz}_t$ and $\overleftarrow{\bz}_t$.
The backward network needs to handle partially observed input and conduct conditional inference. More specifically, the backward policy has format $\overleftarrow{\bz}_t(t, \bX_{t,\mathrm{cond}}, \bM_{\mathrm{cond}})$ which takes in diffusion time, conditions, and outputs the policy of the whole time window (its outputs at condition positions are usually ignored).
While the forward network, as an assistant for training the backward policy, does not need to process partial input, and we use a modified U-Net as the neural network \citep{ronneberger2015u}.
In both networks, the diffusion time is incorporated through embedding.
Similar to the design \citep{CSDI}, the backward policy handles the input with irregular conditions based on the transformer, where the condition information is encoded through channel concatenation, feature index embedding, and time index embeddings (using the time point index of the time window, not the actual time of the time series to have a fair comparison with baseline models).

Our baseline models include V-RIN \citep{v_rin}, multitask Gaussian process (multitask GP) \citep{multitask_GP}, GP-VAE \citep{GP_VAE}, and the state-of-the-art model CSDI \citep{CSDI}. Our model is evaluated using 100 samples. We report the mean absolute error (MAE) and root mean square error (RMSE); in addition, we include the continuous ranked probability score (CRPS) 
to measure the quality of the imputed distribution that is calculated using all samples.

\subsection{Evaluation}
\paragraph{Synthetic Data} We try to answer a key question here: 
\begin{center}
    {\it \textcolor{darkblue}{Does lower transport costs facilitate estimations of score functions and yield samples of higher quality?}}
\end{center}

To answer the question in a consistent framework, we compare CSBI with a CSBI variant by forcing $\nabla\log \overrightarrow\psi\equiv 0$ in Eq.\eqref{bs_sde_formulation}, where the latter (denoted by CSBI$_0$) is, in theory, equivalent to the SGM-based CSDI. 
We observe in Figure \ref{fig:dsm_vs_sb_empirical} that all criteria of CSBI converge to small errors, however, CSBI$_0$ yields a rather crude performance when the terminal time $T$ in Eq.\eqref{bs_sde_formulation} is small, which helps answer the question affirmatively. See details in Appendix \ref{sec:dsm_vs_sb_empirical}.

\begin{figure}[ht]
\centering
\includegraphics[width=0.4\textwidth]{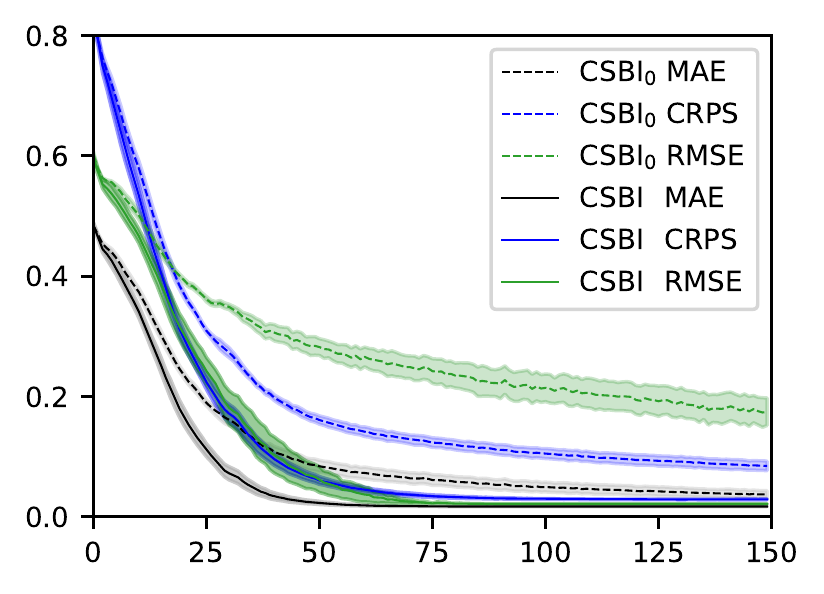}
\caption{Comparison between CSBI$_0$ (equivalent to the SGM-based CSDI in theory) and CSBI in a consistent framework.
Both models are trained with 150 iterations under the same settings.
}
\label{fig:dsm_vs_sb_empirical}
\end{figure}

\paragraph{Healthcare and Environmental Data} We observe in Table \ref{tab:results} that the recurrent imputation networks in V-RIN achieve the worst performance; the GP-based methods, such as  multitask GP and GP-VAE, can model the uncertainty accurately, however, it is still inferior to CSDI in these two datasets; we suspect one reason is that GP-based methods rely more on distributional assumptions; in addition, the VAE structure needs to take the whole window (conditions and dummy missing values) as the input to generate the latent states, as such, the dummy values at the target entries may affect the performance. By contrast, our CSBI model achieves state-of-the-art performance and slightly outperforms the CSDI model, although the latter is already highly optimized. 
Despite the optimized transport cost, we didn't achieve a significant improvement. One reason is due to the large variance issue via the trace estimator \citep{Hutchinson89}. See discussions in appendix \ref{limitations} for details.

\subsection{Time Series Forecasting} 

Time series prediction is a natural application of our current framework, where the condition mask is substituted with the context window and the target mask with the future window. Our method accommodates missing values in the context window during both training and inference, eliminating the need to fill these gaps with dummy values.
The training and inference remain the same as the imputation task.
Table \ref{tab:prediction} shows the results using two public datasets: Solar and Exchange \citep{lai2018modeling}.
Baseline models include GP-copula \citep{salinas2019high}, Vec-LSTM-low-rank-Copula (Vec) \citep{salinas2019high}, TransMAF \citep{rasul2021autoregressive}.
Our method achieves competitive performance compared to other baseline models. For a comprehensive understanding, we provide detailed information in Appendix \ref{sec:appendix_prediction}.
\vspace{-0.23in}

\begin{table}[H]
\caption{Time series prediction task evaluated by CRPS.}
\vspace{-0.1in}
\label{tab:prediction}
\begin{center}
{\footnotesize
\begin{tabular}{lcccc}
\toprule
Methods & GP-copula	 & Vec & TransMAF & \textbf{CSBI (ours)} \\
\midrule
Exchange &  \textbf{0.008} & 0.009 & 0.012 & \textbf{0.008} \\
Solar    &  0.371 & 0.384 & 0.368 & \textbf{0.365} \\
\bottomrule
\end{tabular}
}
\end{center}
\vspace{-0.2in}
\end{table}

\section{Conclusion}

Schr\"{o}dinger bridge algorithm is gaining popularity in generative models, however, to the best of our knowledge, there is no prior work studying the convergence based on the IPF algorithm with approximate projections. To bridge the gap between theoretical analysis and empirical training, we provide the first approximation analysis and motivate future research to obtain a tighter upper bound. We also draw connections to demystify the connections between IPF and FB-SDEs, which sheds light on the complexity analysis for future algorithm development. 

For applications to probabilistic time series imputation, we  propose a conditional Schr\"{o}dinger bridge algorithm based on divergence-based likelihood training and the method is able to tackle missing values in random positions. Empirically, the proposed algorithm is tested on multiple datasets; the great performance indicates the effectiveness of our algorithm in time series imputation.  The flexible formulation further shows a potential to extend the linear Gaussian prior to more general priors \citep{CSGLD, AWSGLD} to yield more efficient algorithms.


\bibliography{mybib, mybib_other}
\bibliographystyle{icml2023}

\newpage
\appendix
\onecolumn
\raggedbottom
 
\begin{Large}
\begin{center}
    \textbf{Supplimentary Material for \textit{``Provably Convergent Schr\"{o}dinger Bridge with Applications to Probabilistic Time Series Imputation''}}
\end{center}
\end{Large}
$\newline$
In section \ref{Preliminaries}, we lay out the preliminary knowledge of Schr\"{o}dinger Bridge; In section \ref{convergence}, we  establish the main convergence result for Schr\"{o}dinger Bridge based on approximated scores; In section \ref{appendix:experiment}, we provide the experimental details based on a synthetic dataset, PM2.5, and PhysioNet data.

\section{Preliminaries}
\label{Preliminaries}
\subsection{From Schr\"{o}dinger Bridge problem (SBP) to FB-SDE}
\label{Preliminaries_SBP}
The stochastic-optimal-control perspective of SBP (see section 4.4 in \citet{Chen21} and \citep{Pavon_CPAM_21, Caluya21}) proposes to minimize 
\begin{align}
    &\inf_{\bu\in \mathcal{U}} \E\bigg\{\int_0^T \frac{1}{2}\|\bu(\bx,t)\|^2_2 \mathrm{d}t \bigg\} \notag\\
    \text{s.t.} &\ \ \mathrm{d} \bx_t=\left[\bbf(\bx, t)+g(t)\bu(\bx,t)\right]\mathrm{d}t+\sqrt{2\varepsilon} g(t)\mathrm{d} \mathbf{w}_t \label{control_diffusion}\\
    &\ \ \bx_0\sim  \mu_{\star} ,\ \  \bx_T\sim  \nu_{\star} \notag
    ,
\end{align}
where $\mathcal{U}$ is a set of control variables $\bu:\mathbb{R}^d\times [0,T]\rightarrow \mathbb{R}^d$; the state-space of $\bx$ is $\mathbb{R}^d$ by default; $\bbf: \mathbb{R}^d\times [0,T]\rightarrow \mathbb{R}^d$ is the drift or vector field; $\mathbf{w}_t$ is the standard Brownian motion in $\mathbb{R}^d$. The expectation is taken w.r.t the joint state PDF $\rho(\bx, t)$ given initial and terminal conditions; $\varepsilon$ is a scalar and is also related to the regularizer in the EOT formulation. Rewrite SBP into a variational formulation \citep{Chen21}, we have
\begin{align}
    &\inf_{\bu\in \mathcal{U}, \rho} \int_0^T \int_{\mathbb{R}^d} \frac{1}{2}\|\bu(\bx,t)\|^2_2 \rho(\bx, t)\mathrm{d}\bx \mathrm{d}t \label{variational_form_supp} \\
     \text{s.t.} &\ \ \frac{\partial \rho}{\partial t}+\nabla \cdot(\rho (\bbf+g \bu))=\varepsilon g^2\Delta \rho\label{FKP_eqn_supp}\\
     &\ \ \rho(\bx, 0)= \frac{\dd\mu_{\star}}{\dd\bx},\ \  \rho(\by, T)=\frac{\dd\nu_{\star}}{\dd\by},\notag
\end{align}
Note that Eq.\eqref{FKP_eqn_supp} is the Fokker-Planck equation for the corresponding controlled diffusion process \eqref{control_diffusion} based on decision variables $(\rho, \bu)\in \mathcal{P}(\mathbb{R}^d)\times \mathcal{U}$ and $\mathcal{P}(\mathbb{R}^d)$ is the set of probability measures on $\mathbb{R}^d$.

Consider the Lagrangian of \eqref{variational_form_supp} and introduce  $\phi(\bx, t): \mathbb{R}^d\times [0,T]\rightarrow \mathbb{R}$ as a Lagrangian multiplier \citep{chen_21}
\begin{align}
    \mathcal{L}(\rho, \bu, \phi):&=\int_0^T \int_{\mathbb{R}^d} \frac{1}{2}\|\bu(\bx,t)\|^2_2\rho(\bx, t) + \phi(\bx, t) \bigg(\frac{\partial \rho}{\partial t}+\nabla \cdot(\rho (\bbf+g\bu))-\varepsilon g^2\Delta \rho\bigg) \mathrm{d}\bx \mathrm{d}t\notag,\\
    &=\int_0^T \int_{\mathbb{R}^d} \bigg( \frac{1}{2}\|\bu(\bx,t)\|^2_2\ - \frac{\partial \phi}{\partial t}-\langle \nabla \phi, \bbf+g\bu \rangle -\varepsilon g^2\Delta \phi  \bigg)\rho(\bx, t) \mathrm{d}\bx \mathrm{d}t + \underbrace{\int_{{\mathbb{R}}^d} \rho \phi \dd \bx |_{t=0}^{T}}_{\text{constant w.r.t $\bu$}} \label{un_contrained_},
\end{align}
where the second equation is obtained through integration by parts with respect to $t$ and $\bx$, respectively.

Minimizing with respect to $\bu$, we get the optimal control $\bu^{\star}$ as follows
\begin{equation}\label{score_primal}
    \bu^{\star}(\bx, t)=g(t)\nabla\phi(\bx, t).
\end{equation}

Plugging Eq.\eqref{score_primal} into Eq.\eqref{un_contrained_}, we have
\begin{align*}
    \mathcal{L}(\rho, \bu, \phi)=\int_0^T \int_{\mathbb{R}^d} \bigg(- \frac{\partial \phi}{\partial t}-\frac{1}{2}\|g(t)\nabla\phi(\bx, t)\|^2_2 -\langle \nabla \phi, \bbf \rangle -\varepsilon g^2\Delta\phi \bigg)\rho(\bx, t) \mathrm{d}\bx \mathrm{d}t.
\end{align*}

Minimizing the control cost means that $\frac{\partial \phi}{\partial t}+\varepsilon g^2\Delta\phi + \langle \nabla \phi, \bbf \rangle=-\frac{1}{2}\|g(t)\nabla\phi(\bx, t)\|^2_2$.

Given the optimal control $u^{\star}$, the above PDE is known as the \emph{Hamilton–Jacobi–Bellman} (HJB) PDE. Since the HJB PDE is non-linear due to the presence of $\frac{1}{2}\|g(t)\nabla\phi(\bx, t)\|^2_2$, we make it linear through the Cole-Hopf transformation
\begin{subequations}\label{CH_transform}
\begin{align}
    \overrightarrow\psi(\bx, t)&=\exp\bigg(\frac{\phi(\bx, t)}{2\varepsilon}\bigg)\label{CH_transform_a}\\
    \overleftarrow\varphi(\bx, t)&=
    \rho^{\star}(\bx, t)/\overrightarrow\psi(\bx, t),\label{CH_transform_b}
\end{align}
\end{subequations}
where $\rho^{\star}(\bx, t)$ is the optimal density of \eqref{variational_form_supp} conditional on the optimal control $\bu^{\star}$. We now can verify that the transformed variables $(\overrightarrow\psi, \overleftarrow\varphi)$ solve the \emph{Schr\"{o}dinger system} and obtain the backward-forward Kolmogorov equations
\begin{equation}
\begin{split}
    \label{heat_forward_backward}
\frac{\partial \overrightarrow\psi}{\partial t}&+\langle \nabla \overrightarrow\psi, \bbf \rangle +\varepsilon g^2\Delta\overrightarrow\psi=0, \\
\frac{\partial \overleftarrow\varphi}{\partial t}& + \nabla \cdot (\overleftarrow\varphi \bbf) -\varepsilon g^2 \Delta \overleftarrow\varphi=0,
\end{split}
\end{equation}
under the constraint that 
\begin{equation}
\begin{split}\label{product_measures}
\overrightarrow\psi(\bx, t=0)\overleftarrow\varphi(\bx, t=0)&=\frac{\dd\mu_{\star}}{\dd\bx}, \ \ \overrightarrow\psi(\by, t=T)\overleftarrow\varphi(\by, t=T)=\frac{\dd\nu_{\star}}{\dd\by}.
\end{split}
\end{equation}

Next, applying  the solution of Eq.\eqref{heat_forward_backward} leads to the following equations by the Chapman-Kolmogorov equations 
\begin{equation}
\begin{split}\label{Schrodinger_system}
    \overrightarrow\psi(\bx, t)&=\int_{\mathbb{R}^d} \text{Ker}_{\varepsilon}(t, \bx, T, \by)\overrightarrow\psi(\by, T) \mathrm{d}\by, \ \ \ \overleftarrow\varphi(\by, t)=\int_{\mathbb{R}^d} \text{Ker}_{\varepsilon}(0, \bx, t, \by)\overleftarrow\varphi(\bx, 0) \mathrm{d}\bx,
\end{split}
\end{equation}
where $\text{Ker}_{\varepsilon}(s, \bx, t, \by)$ is the Markov kernel associated with the diffusion $\mathrm{d}\bx_t = \bbf(\bx_t, t) \mathrm{d}t + \sqrt{2\varepsilon}g(t) \mathrm{d} \mathbf{w}_t$; closed-forms of $\text{Ker}_{\varepsilon}$ is in general intractable; some concrete cases follow that
\begin{equation}  
\label{kernel_}
\text{Ker}_{\varepsilon}(s, \bx_s, t, \bx_t)=\left\{  
             \begin{array}{lr}  
             (2\pi \varepsilon(t-s))^{-n/2} \exp\bigg(-\frac{\big\|\bx_t-e^{-\gamma (t-s)} \bx_s\big\|_2^2}{2\varepsilon (1-e^{-2\gamma (t-s)})}\bigg)  & \text{VP-SDE: $\bbf(\bx_t, t)=-\gamma \bx_t$}, g(t)\equiv 1, \\  
              & \\
             (4\pi \varepsilon(t-s))^{-n/2} \exp\big(-\frac{\|\bx_t-\bx_s\|_2^2}{4\varepsilon (t-s)}\big) & \text{VE-SDE: $\bbf(\bx_t, t)=0, g(t)\equiv 1.\quad\ \ \  $} 
             \end{array}
\right.  
\end{equation}

In view of Eq.\eqref{CH_transform}, the optimal decision variables $(\rho, \bu)$ can be obtained as follows 
\begin{align}
    &\rho^{\star}(\bx, t)=\overrightarrow\psi(\bx, t)\overleftarrow\varphi(\bx, t), \ \ \quad \ \ \bu^{\star}(\bx, t)=2\varepsilon g(t)\nabla \log \overrightarrow\psi(\bx, t).\label{optimal_u}
\end{align}

Combining Eq.\eqref{product_measures} and Eq.\eqref{Schrodinger_system}, we solve a variant of \emph{Schr\"{o}dinger equations}  as follows
\begin{equation}
\begin{split}\label{schrodinger_eqn_supp}
\overleftarrow\varphi(\bx, 0)\int_{\mathbb{R}^d} e^{-c_{\varepsilon}(\bx, \by)}\overrightarrow\psi(\by, T) \mathrm{d}\by&=\frac{\dd\mu_{\star}}{\dd\bx}, \ \ \ \overrightarrow\psi(\by, T)\int_{\mathbb{R}^d} e^{-c_{\varepsilon}(\bx, \by)}\overleftarrow\varphi(\bx, 0) \mathrm{d}\bx=\frac{\dd\nu_{\star}}{\dd\by}, \\
\end{split}
\end{equation}
where 
$c_{\varepsilon}(\cdot, \cdot)$ is the potential energy of the kernel $K_{\varepsilon}(0, \bx, T, \by)$ defined as follows
\begin{equation}
\begin{split}\label{def_cost_varphi_psi}
    &\qquad c_{\varepsilon}(\bx, \by)=-\log \text{Ker}_{\varepsilon}(0, \bx, T, \by).
\end{split}
\end{equation}

Combining Eq.\eqref{score_primal} and Eq.\eqref{CH_transform} and replacing $\bu$  with $\overrightarrow\psi$ (Eq.\eqref{optimal_u}) into the forward diffusion process \eqref{control_diffusion}, we have
\begin{align}\label{forward_diffusion_simplified}
    \mathrm{d} \bx_t&=\left[\bbf(\bx_t, t)+2\varepsilon g(t)^2\nabla\log\overrightarrow\psi(\bx_t, t)\right]\mathrm{d}t+\sqrt{2\varepsilon}g(t) \mathrm{d} \mathbf{w}_t.
\end{align}

Following \citet{Anderson82, forward_backward_SDE} to reverse the forward diffusion \eqref{forward_diffusion_simplified}, we obtain the backward diffusion:
\begin{equation*}
\begin{split}
    \mathrm{d} \bx_t&=\big[\bbf(\bx_t, t)+2\varepsilon g(t)^2\nabla\log\overrightarrow\psi(\bx_t, t)- 2\varepsilon g(t)^2\nabla \log \rho^{\star}(\bx_t, t)\big]\mathrm{d}t+\sqrt{2\varepsilon}g(t)\mathrm{d} \mathbf{w}_t\\
    &=\left[\bbf(\bx_t, t)-2\varepsilon g(t)^2\nabla\log\overleftarrow\varphi(\bx_t, t)\right]\mathrm{d}t+\sqrt{2\varepsilon}g(t) \mathrm{d} \bar{\mathbf{w}}_t,
\end{split}
\end{equation*}
where the second equality follows since $\log \overrightarrow\psi(\cdot, t) + \log\overleftarrow\varphi(\cdot, t)=\log\rho^{\star}(\cdot, t)$ by Eq.\eqref{CH_transform_b}.

\begin{proposition}
Given the score functions $(\overrightarrow\psi, \overleftarrow\varphi)$ that solve the \emph{Schr\"{o}dinger system} 
 \begin{align*}
    \begin{cases}
    \frac{\partial \overrightarrow\psi}{\partial t}+\langle \nabla \overrightarrow\psi, \bbf \rangle +\varepsilon g^2\Delta\overrightarrow\psi=0 \\[3pt]
    \frac{\partial \overleftarrow\varphi}{\partial t}+\nabla\cdot (\overleftarrow\varphi \bbf)-\varepsilon g^2 \Delta \overleftarrow\varphi=0.
    \end{cases}
    \text{\ \ s.t. } \overrightarrow\psi(\bx, 0) \overleftarrow\varphi(\bx, 0) = \frac{\dd\mu_{\star}}{\dd\bx},\ \ ~\overrightarrow\psi(\by, T) \overleftarrow\varphi(\by, T) = \frac{\dd\nu_{\star}}{\dd\by}.
\end{align*}
    
\emph{Schr\"{o}dinger system} yields the desired forward-backward stochastic differential equation (FB-SDE)
\begin{subequations}
\begin{align}
\mathrm{d} \bx_t&=\left[\bbf(\bx_t, t) + 2\varepsilon g(t)^2\nabla\log\overrightarrow\psi(\bx_t, t)\right]\mathrm{d}t+\sqrt{2\varepsilon}g(t) \mathrm{d} \mathbf{w}_t, \ \  \bx_0\sim \mu_{\star}\\
\mathrm{d} \bx_t&=\left[\bbf(\bx_t, t) -2\varepsilon g(t)^2 \nabla\log\overleftarrow\varphi(\bx_t, t)\right]\mathrm{d}t+\sqrt{2\varepsilon} g(t) \mathrm{d} \bar{\mathbf{w}}_t,\ \  \bx_T \sim \nu_{\star}.
\end{align}\label{FB-SDE}
\end{subequations}
Setting $\varepsilon=\frac{1}{2}$ recovers the FB-SDE \eqref{bs_sde_formulation}. Part of the above derivation is standard and we present it here for the sake of self-containedness.

\end{proposition}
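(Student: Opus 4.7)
The plan is to build the two SDEs by combining three ingredients already in place in the excerpt: (i) the optimal control formula $\bu^{\star}(\bx,t)=g(t)\nabla\phi(\bx,t)$ from the Lagrangian minimization, (ii) the Cole--Hopf change of variables $\overrightarrow\psi=\exp(\phi/(2\varepsilon))$, $\overleftarrow\varphi=\rho^{\star}/\overrightarrow\psi$, and (iii) the Anderson time-reversal formula. The two PDEs of the Schr\"{o}dinger system together with the product boundary conditions then force the marginal densities and the boundary laws to line up correctly.

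First I would derive the forward half of \eqref{FB-SDE}. Starting from the controlled dynamics $\mathrm{d}\bx_t=[\bbf+g\bu]\mathrm{d} t+\sqrt{2\varepsilon}\,g\,\mathrm{d}\mathbf{w}_t$, substitute $\bu^{\star}=g\nabla\phi$ and then use $\nabla\phi=2\varepsilon\,\nabla\log\overrightarrow\psi$ (a direct consequence of $\overrightarrow\psi=\exp(\phi/(2\varepsilon))$) to obtain the drift $\bbf+2\varepsilon g^2\nabla\log\overrightarrow\psi$. This is exactly \eqref{FB-SDE}a. Next, I would verify that $\rho^{\star}=\overrightarrow\psi\,\overleftarrow\varphi$ is the marginal density of this forward SDE. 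This reduces to checking the Fokker--Planck equation $\partial_t\rho^{\star}+\nabla\cdot\big(\rho^{\star}(\bbf+2\varepsilon g^2\nabla\log\overrightarrow\psi)\big)=\varepsilon g^2\Delta\rho^{\star}$, which is obtained by expanding $\partial_t(\overrightarrow\psi\,\overleftarrow\varphi)$ via the product rule, substituting the two Schr\"{o}dinger PDEs for $\partial_t\overrightarrow\psi$ and $\partial_t\overleftarrow\varphi$, and simplifying; the cross terms involving $\langle\nabla\overrightarrow\psi,\nabla\overleftarrow\varphi\rangle$ match exactly because $\nabla\log\overrightarrow\psi=\nabla\overrightarrow\psi/\overrightarrow\psi$. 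The product boundary conditions $\overrightarrow\psi(\cdot,0)\overleftarrow\varphi(\cdot,0)=\mathrm{d}\mu_{\star}/\mathrm{d}\bx$ then identify the initial law as $\mu_{\star}$.

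For the backward half, I would invoke Anderson's time-reversal theorem: the reverse-time process of $\mathrm{d}\bx_t=\mu_t\,\mathrm{d} t+\sigma_t\,\mathrm{d}\mathbf{w}_t$ with marginal density $\rho_t$ has dynamics $\mathrm{d}\bx_t=\big[\mu_t-\sigma_t^2\nabla\log\rho_t\big]\mathrm{d} t+\sigma_t\,\mathrm{d}\bar{\mathbf{w}}_t$. Applied to the forward SDE with $\mu_t=\bbf+2\varepsilon g^2\nabla\log\overrightarrow\psi$, $\sigma_t^2=2\varepsilon g^2$, and $\rho_t=\rho^{\star}=\overrightarrow\psi\,\overleftarrow\varphi$, the drift becomes
\begin{equation*}
\bbf+2\varepsilon g^2\nabla\log\overrightarrow\psi-2\varepsilon g^2\,\nabla\log(\overrightarrow\psi\,\overleftarrow\varphi)
=\bbf-2\varepsilon g^2\nabla\log\overleftarrow\varphi,
\end{equation*}
using $\nabla\log(\overrightarrow\psi\,\overleftarrow\varphi)=\nabla\log\overrightarrow\psi+\nabla\log\overleftarrow\varphi$. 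This is exactly \eqref{FB-SDE}b, and the terminal condition $\bx_T\sim\nu_{\star}$ follows from the other Schr\"{o}dinger boundary condition $\overrightarrow\psi(\cdot,T)\overleftarrow\varphi(\cdot,T)=\mathrm{d}\nu_{\star}/\mathrm{d}\by$. Setting $\varepsilon=\tfrac{1}{2}$ recovers \eqref{bs_sde_formulation}.

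The main obstacle, and the only nontrivial algebraic step, is the Fokker--Planck verification that $\rho^{\star}=\overrightarrow\psi\,\overleftarrow\varphi$ is indeed the marginal of the controlled forward diffusion: one must carefully combine the forward heat equation for $\overrightarrow\psi$ (which carries a $+\varepsilon g^2\Delta\overrightarrow\psi$ with a transport term in $+\bbf$-direction) with the adjoint/Fokker--Planck-type equation for $\overleftarrow\varphi$ (which carries $-\varepsilon g^2\Delta\overleftarrow\varphi$ and a divergence in $\bbf$-direction) so that the Laplacian cross-terms produce exactly the extra $2\varepsilon g^2\nabla\cdot(\rho^{\star}\nabla\log\overrightarrow\psi)$ that the added control drift introduces in the Fokker--Planck equation. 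Beyond this computation, the argument is mechanical; the rest is bookkeeping of boundary conditions and an application of Anderson's theorem, which requires only mild regularity/integrability on $\bbf$, $g$, and the score fields to be rigorous.
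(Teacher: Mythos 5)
Your proposal is correct and follows essentially the same route as the paper: substitute the optimal control $\bu^{\star}=g\nabla\phi=2\varepsilon g\nabla\log\overrightarrow\psi$ from the Cole--Hopf transformation to get the forward SDE, then apply Anderson's time-reversal with marginal $\rho^{\star}=\overrightarrow\psi\,\overleftarrow\varphi$ and use $\nabla\log\rho^{\star}=\nabla\log\overrightarrow\psi+\nabla\log\overleftarrow\varphi$ to obtain the backward SDE, with the product boundary conditions fixing the laws at $t=0$ and $t=T$. The only difference is that you explicitly verify the Fokker--Planck equation for $\overrightarrow\psi\,\overleftarrow\varphi$ (and your cancellation of the cross terms checks out), whereas the paper obtains $\rho^{\star}=\overrightarrow\psi\,\overleftarrow\varphi$ by construction through the definition $\overleftarrow\varphi=\rho^{\star}/\overrightarrow\psi$; this is a presentational, not a substantive, difference.
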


\subsection{An Important Property of Static Schr\"{o}dinger Bridge Problem (SBP)}
\label{static_SB_property}
\begin{lemma}[Structure Property of Static SBP \citep{Compute_OT, Nutz22_note}]\label{solution_property}
Let $\mathcal{G}\backsim \mu_{\star}\otimes \nu_{\star}$, where $\backsim$ is the cyclical invariant property \citep{Nutz_22_func} and $\mathrm{d}\mathcal{G} \propto e^{-c_{\varepsilon}}\mathrm{d} (\mu_{\star} \otimes \nu_{\star})$. 
Suppose there is a unique coupling $\pi_{\star}$ for the static SBP $$\pi_{\star} = \argmin_{\pi\in \Pi(\mu_{\star}, \nu_{\star})} \text{KL}(\cdot| \mathcal{G}).$$

\begin{itemize}
    \item There exist measurable functions $ \varphi_{\star}$ and $ \psi_{\star}$ such that 
    \begin{equation}\label{formulation}
    \frac{\mathrm{d}\pi_{\star}}{\mathrm{d} \mathcal{G}}=e^{ \varphi_{\star}\oplus \psi_{\star}},
    \end{equation}
    where $ \varphi_{\star},  \psi_{\star}: \mathbb{R}^d\rightarrow [-\infty, \infty)$ are known as the Schr\"{o}dinger potentials. The $\oplus$ operator is defined as $( \varphi_{\star}\oplus  \psi_{\star})(\bx, \by)= \varphi_{\star}(\bx)+ \psi_{\star}(\by)$ for functions $ \varphi_{\star}$ and $ \psi_{\star}$. The summation of potentials is unique.
    
    \item Suppose there is a solution $\widehat{\pi}_0$ that admits a density formulation
    \begin{equation*}
    \frac{\mathrm{d}\widehat{\pi}_0}{\mathrm{d} \mathcal{G}}=e^{ \varphi_{\star}\oplus \psi_{\star}},
    \end{equation*}
    for functions $ \varphi_{\star}: \mathbb{R}^d\rightarrow [-\infty, \infty)$ and $ \psi_{\star}: \mathbb{R}^d\rightarrow [-\infty, \infty)$, it follows that $\widehat{\pi}_0$ is the Schr\"{o}dinger bridge.
\end{itemize}

\end{lemma}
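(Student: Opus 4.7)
The plan is to prove both bullets via standard tools from entropic optimal transport: Fenchel--Rockafellar duality for the existence of the potentials, combined with a clean pivot computation exploiting the shared-marginals structure for the sufficiency and uniqueness claim.

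For the first bullet, I would view $\min_{\pi\in\Pi(\mu_{\star},\nu_{\star})} \text{KL}(\pi|\mathcal{G})$ as a strictly convex program over couplings with the two linear equality constraints that the marginals equal $\mu_{\star}$ and $\nu_{\star}$, and introduce Lagrange multipliers $\varphi(\bx),\psi(\by)$ for them. Formal stationarity in $\pi$ gives $1+\log(\mathrm{d}\pi_{\star}/\mathrm{d}\mathcal{G})-\varphi_{\star}(\bx)-\psi_{\star}(\by)=0$, i.e.\ the multiplicative structure \eqref{formulation} after absorbing the constant $-1$ into one of the potentials. To make this rigorous, I would invoke the Fenchel--Rockafellar duality theorem for EOT as used in \citet{Compute_OT, Nutz22_note}: strong duality holds under mild integrability of $c_{\varepsilon}$, and any pair of dual optimizers $(\varphi_{\star},\psi_{\star})$---measurable and $\mu_{\star}$- (resp.\ $\nu_{\star}$-) a.s.\ finite---realizes the claimed exponential product form.

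For the second bullet, given $\hat{\pi}_0\in\Pi(\mu_{\star},\nu_{\star})$ with $\mathrm{d}\hat{\pi}_0/\mathrm{d}\mathcal{G} = e^{\varphi_{\star}\oplus\psi_{\star}}$, I would proceed directly via the pivot identity
\begin{equation*}
\text{KL}(\pi|\mathcal{G})-\text{KL}(\hat{\pi}_0|\mathcal{G}) = \text{KL}(\pi|\hat{\pi}_0) + \int \bigl(\varphi_{\star}(\bx)+\psi_{\star}(\by)\bigr)\,\mathrm{d}(\pi-\hat{\pi}_0)(\bx,\by),
\end{equation*}
obtained by adding and subtracting $\log(\mathrm{d}\hat{\pi}_0/\mathrm{d}\mathcal{G})$ inside the integrand defining $\text{KL}(\pi|\mathcal{G})$. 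The crucial observation is that the second term vanishes for every competitor $\pi\in\Pi(\mu_{\star},\nu_{\star})$: since the integrand is additively separable in $\bx$ and $\by$, its integral depends only on the marginals, and $\pi$ and $\hat{\pi}_0$ share the marginals $(\mu_{\star},\nu_{\star})$. Strict convexity of KL then forces $\text{KL}(\pi|\mathcal{G})\geq \text{KL}(\hat{\pi}_0|\mathcal{G})$ with equality iff $\pi=\hat{\pi}_0$, simultaneously yielding optimality of $\hat{\pi}_0$ and uniqueness of the minimizer, so $\hat{\pi}_0=\pi_{\star}$.

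The main obstacle lies in the rigorous existence of the dual optimizers in the first bullet: one must verify that $\varphi_{\star},\psi_{\star}$ are genuine measurable functions taking values in $[-\infty,\infty)$ rather than abstract multipliers, and that $e^{\varphi_{\star}\oplus\psi_{\star}}$ is $\mathcal{G}$-integrable. When $c_{\varepsilon}$ is unbounded (as with the Gaussian kernels in \eqref{kernel_}) or the marginals have unbounded support, this requires either a careful EOT duality argument or a limit of Sinkhorn iterates; a conservative route is simply to cite the classical results in \citet{Compute_OT, Nutz22_note} after checking integrability of $c_{\varepsilon}$ against $\mu_{\star}\otimes\nu_{\star}$. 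Once the first bullet is in hand, the remaining steps reduce to routine manipulations of KL.
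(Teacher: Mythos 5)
The paper itself does not prove Lemma~\ref{solution_property}: it cites the result from \citet{Compute_OT} and \citet{Nutz22_note} and uses it as a black box. Your proposal is therefore filling in a proof that the authors deliberately outsourced, and the route you sketch is essentially the standard one in the EOT literature that those references follow, so there is no ``different approach'' to compare against; the question is whether your sketch is a faithful and sound reconstruction.

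For the most part it is. The second bullet is the cleanest part of your argument and is correct in structure: writing
$\text{KL}(\pi|\mathcal{G})=\text{KL}(\pi|\widehat\pi_0)+\int(\varphi_\star\oplus\psi_\star)\,\mathrm{d}\pi$
and noting that the last integral depends only on the marginals immediately gives $\text{KL}(\pi|\mathcal{G})=\text{KL}(\widehat\pi_0|\mathcal{G})+\text{KL}(\pi|\widehat\pi_0)$, from which optimality and uniqueness of $\widehat\pi_0$ follow from nonnegativity of KL alone --- you do not actually need to invoke strict convexity separately, since $\text{KL}(\pi|\widehat\pi_0)=0$ iff $\pi=\widehat\pi_0$ already does the job. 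Two things you gloss over and should tighten. First, the decomposition $\mathrm{d}\pi/\mathrm{d}\mathcal{G}=(\mathrm{d}\pi/\mathrm{d}\widehat\pi_0)(\mathrm{d}\widehat\pi_0/\mathrm{d}\mathcal{G})$ presupposes $\pi\ll\widehat\pi_0$; because $\varphi_\star,\psi_\star$ may hit $-\infty$, $\widehat\pi_0$ can vanish on a $\mathcal{G}$-positive set, so you need to argue that any competitor with $\text{KL}(\pi|\mathcal{G})<\infty$ and the correct marginals is automatically absolutely continuous with respect to $\widehat\pi_0$, or simply restrict the comparison to such $\pi$ (the case $\text{KL}(\pi|\mathcal{G})=\infty$ being vacuous). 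Second, the ``integral depends only on the marginals'' step implicitly splits $\int(\varphi_\star\oplus\psi_\star)\,\mathrm{d}\pi$ into $\int\varphi_\star\,\mathrm{d}\mu_\star+\int\psi_\star\,\mathrm{d}\nu_\star$; if the potentials are not individually $\mu_\star$- or $\nu_\star$-integrable this split is illegal even though the sum is well defined. This is exactly the integrability delicacy you flag at the end, but it affects the second bullet as well, not just the existence step, so it deserves an explicit hypothesis or a reference to the precise integrability lemma in \citet{Nutz22_note}.

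For the first bullet, the Lagrangian heuristic you give (stationarity of $\pi\mapsto\text{KL}(\pi|\mathcal{G})$ subject to two marginal constraints yields $\log(\mathrm{d}\pi_\star/\mathrm{d}\mathcal{G})=\varphi_\star\oplus\psi_\star$ up to a constant) is the right intuition, and you correctly identify that turning it into an existence theorem requires either Fenchel--Rockafellar duality or a Sinkhorn-limit argument, plus care when $c_\varepsilon$ is unbounded. As written, this bullet is a pointer to the literature rather than a proof, which is acceptable here since the paper itself does the same. One genuine omission: the lemma also asserts that the \emph{sum} $\varphi_\star\oplus\psi_\star$ is unique, and your proposal never addresses this. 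It follows cheaply from the uniqueness of $\pi_\star$: if $e^{\varphi_1\oplus\psi_1}$ and $e^{\varphi_2\oplus\psi_2}$ are both versions of $\mathrm{d}\pi_\star/\mathrm{d}\mathcal{G}$, then $\varphi_1\oplus\psi_1=\varphi_2\oplus\psi_2$ $\mathcal{G}$-a.s., but you should state this rather than leave it implicit.
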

 

\subsection{Schr\"odinger Equations}\label{derive_schrodinger_eqn}

For any set $A\subset\mathbb{R}^d$, take the integral for the coupling $\Pi$ on with respect to the marginals
\begin{equation*}
\begin{split}
    \label{marginal_projections}
\int_A \mu_{\star}(\mathrm{d}\bx)=\iint_{A\times \mathbb{R}^d}  \pi_{\star}(\mathrm{d}\bx, \mathrm{d}\by)=\iint_{A\times \mathbb{R}^d} \underbrace{e^{\varphi_{\star}(\bx)+ \psi_{\star}(\by)}\mathcal{G}(\mathrm{d}\bx, \mathrm{d}\by)}_{\text{by Eq.\eqref{formulation}}}=\int_{A}\underbrace{\int_{ \mathbb{R}^d}e^{\varphi_{\star}(\bx)+ \psi_{\star}(\by)-c_{\varepsilon}(\bx, \by)}\nu_{\star}(\mathrm{d}\by)}_{ \text{the first marginal of $\pi_{\star}$ is $\mu_{\star}$}}\mu_{\star}(\mathrm{d}\bx)\\
\int_A \nu_{\star}(\mathrm{d}\by)=\iint_{\mathbb{R}^d\times A}  \pi_{\star}(\mathrm{d}\bx, \mathrm{d}\by)=\iint_{\mathbb{R}^d\times A} \underbrace{e^{\varphi_{\star}(\bx)+ \psi_{\star}(\by)}\mathcal{G}(\mathrm{d}\bx, \mathrm{d}\by)}_{\text{by Eq.\eqref{formulation}}}=\int_{A}\underbrace{\int_{ \mathbb{R}^d}e^{\varphi_{\star}(\bx)+ \psi_{\star}(\by)-c_{\varepsilon}(\bx, \by)}\mu_{\star}(\mathrm{d}\bx)}_{\text{the second marginal of $\pi_{\star}$ is $\nu_{\star}$}}\nu_{\star}(\mathrm{d}\by),
\end{split}
\end{equation*}
which implies the \emph{Schr\"{o}dinger equations}:
\begin{subequations}
\begin{align}
\int_{\mathbb{R}^d} e^{ \varphi_{\star}(\bx)+ \psi_{\star}(\by)-c_{\varepsilon}(\bx,\by)}\nu_{\star} (\mathrm{d}\by)=1 \quad \mu_{\star}\text{-}a.s., \qquad \int_{\mathbb{R}^d} e^{ \varphi_{\star}(\bx)+ \psi_{\star}(\by)-c_{\varepsilon}(\bx,\by)}\mu_{\star} (\mathrm{d}\bx)=1 \quad \nu_{\star}\text{-}a.s. 
\end{align}\label{SE12_supp}
\end{subequations}

In other words, if $( \varphi_{\star},  \psi_{\star})$ are Schr\"{o}dinger potentials, then $( \varphi_{\star},  \psi_{\star})$ is a solution of Eq.(\ref{SE12_supp}).

\section{Convergence Analysis for the Marginals}
\label{convergence}
\paragraph{Notations} $\pi_k:=(\mu_k, \nu_k)$ is the coupling at the $k$-th iteration with marginals $\mu_k$ and $\nu_k$; $\pi_{\star}$ is the optimal coupling with target marginals $\mu_{\star}$ and $\nu_{\star}$;  $\mu_{\star, k}$ and $\nu_{\star, k}$ denote the $\epsilon$-approximation of $\mu_{\star}$ and $\nu_{\star}$ via approximations. $ \varphi_{k_1}$ and $ \psi_{k_2}$ denote the potential functions and the coupling can be represented as $\pi_{k_1+k_2}:=\pi( \varphi_{k_1},  \psi_{k_2})$ by Lemma \ref{solution_property}.

We are interested in the convergence of the marginals. However, computing the integrals in Algorithm \ref{sinkhorn} is too expensive. To handle this issue, we first present the exact formulation of the approximate IPF algorithm in Algorithm \ref{sinkhorn_supp}. 

\begin{algorithm}[ht]\caption{One iteration of approximate IPF (aIPF) based on approximate measures $\mu_{\star, k}$ and $\nu_{\star, k}$. In view of Eq.\eqref{marginal_eqn}, it differs from Algorithm \ref{sinkhorn} in that $\mu_{\star}$ (or $\nu_{\star}$) is changed to $\mu_{\star, k}$ (or $\nu_{\star, k}$) for an exact equality.}\label{sinkhorn_supp}
\begin{align}
     \psi_k(\by)&=-\log \int_{\mathbb{R}^d} e^{ \varphi_k(\bx)-c_{\varepsilon}(\bx,\by)}\mu_{\star, k}(\mathrm{d}\bx),\qquad
     \varphi_{k+1}(\bx)=-\log \int_{\mathbb{R}^d} e^{ \psi_k(\by)-c_{\varepsilon}(\bx,\by)}\nu_{\star, k}(\mathrm{d}\by)\notag.
\end{align}
\end{algorithm}

In such a case, by Lemma \ref{solution_property}, the approximate couplings $\pi_{2k}$ and $\pi_{2k-1}$ follows that
\begin{equation*}
\begin{split}
\label{approx_marginal_def}
    \pi_{2k}:=\pi( \varphi_k,  \psi_k), \quad \pi_{2k-1}:=\pi( \varphi_k, \psi_{k-1}), \quad \forall k\geq 0,
\end{split}
\end{equation*}
where the approximate potential functions $ \varphi_k$ and $ \psi_k$ (and $ \psi_{k-1}$) are associated with the couplings $\pi_{2k}=(\mu_{2k}, \nu_{2k})$ and $\pi_{2k-1}=(\mu_{2k-1}, \nu_{2k-1})$ as follows 
\begin{align}
    \mathrm{d}\pi_{2k}=e^{\varphi_k\oplus \psi_k-c_{\varepsilon}}\mathrm{d}(\mu_{\star, k}\otimes \nu_{\star, k}), \quad  \mathrm{d}\pi_{2k-1}=e^{\varphi_k\oplus \psi_{k-1}-c_{\varepsilon}}\mathrm{d}(\mu_{\star, k}\otimes \nu_{\star, k-1}),\label{approx_potential}
\end{align}
where $\varphi_{-1}= \psi_0:=0$ and thus $\pi_{-1}=\mathcal{G}$.

\begin{lemma}\label{iterative_lemma}
For all $k \geq 0$ and $n \geq 0$, we have

\text{(i)} $\text{KL}(\pi_{2k}|\pi_{2k-1})$ and $\text{KL}(\pi_{2k+1}|\pi_{2k})$ satisfy the following equations
\begin{equation*}
\begin{split}
    \text{KL}(\pi_{2k}|\pi_{2k-1})&=\nu_{\star, k}( \psi_k+ \log d {\nu_{\star, k}}-\psi_{k-1} - \log d {\nu_{\star, k-1}})\\
    \text{KL}(\pi_{2k+1}|\pi_{2k})&=\mu_{\star, k+1}( \varphi_{k+1}+\log d {\mu_{\star, k+1}}- \varphi_k - \log d {\mu_{\star, k}}),
\end{split}
\end{equation*}

\text{(ii)} the summation of $\text{KL}(\pi_{2k}|\pi_{2k-1})$ and $\text{KL}(\pi_{2k+1}|\pi_{2k})$ follows that
\begin{equation}\label{key_inequality}
   \sum_{k=0}^{n-1} \text{KL}(\pi_{2k+1}|\pi_{2k})\leq  \mu_{\star}(\varphi_n) + O(n\epsilon), \quad  \sum_{k=0}^{n} \text{KL}(\pi_{2k}|\pi_{2k-1})\leq  \nu_{\star}(\psi_n)+ O(n\epsilon).
\end{equation} 
\end{lemma}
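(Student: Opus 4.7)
The statement has two parts that I would prove in sequence: (i) is a direct identity for each single-step KL divergence, and (ii) is a telescoping estimate for their partial sums.

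For (i), I would compute $d\pi_{2k}/d\pi_{2k-1}$ directly from the density representations in \eqref{approx_potential}. Passing to the common reference measure $\mu_{\star,k}\otimes \nu_{\star,k-1}$, a direct cancellation gives $d\pi_{2k}/d\pi_{2k-1} = e^{\psi_k - \psi_{k-1}}\cdot (d\nu_{\star,k}/d\nu_{\star,k-1})$, whose logarithm depends only on the $y$-coordinate. Integrating against $\pi_{2k}$ and using the crucial fact that the $y$-marginal of $\pi_{2k}$ equals $\nu_{\star,k}$---which is exactly the Schr\"odinger equation \eqref{SE1} applied to the aIPF output $\psi_k$---delivers the first identity. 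The second identity is the mirror statement with the roles of $(x,\varphi)$ and $(y,\psi)$ swapped, invoking \eqref{SE2} for the $x$-marginal of $\pi_{2k+1}$.

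For (ii), I would substitute (i) into each summand and split into a potential-difference piece and a density-KL piece:
\begin{equation*}
\sum_{k=0}^{n-1}\text{KL}(\pi_{2k+1}|\pi_{2k}) = \sum_{k=0}^{n-1}\bigl[\mu_{\star,k+1}(\varphi_{k+1}) - \mu_{\star,k+1}(\varphi_k)\bigr] + \sum_{k=0}^{n-1}\text{KL}(\mu_{\star,k+1}|\mu_{\star,k}).
\end{equation*}
Abel summation on the first piece, together with the initialization $\varphi_0\equiv 0$, produces $\mu_{\star,n}(\varphi_n) + \sum_{k=1}^{n-1}(\mu_{\star,k}-\mu_{\star,k+1})(\varphi_k)$, which I then rewrite as $\mu_\star(\varphi_n) + (\mu_{\star,n}-\mu_\star)(\varphi_n) + \sum_{k=1}^{n-1}(\mu_{\star,k}-\mu_{\star,k+1})(\varphi_k)$ to isolate the target term $\mu_\star(\varphi_n)$. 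The symmetric argument on $\psi_k$, using the first marginal of $\pi_{2k}$ via \eqref{SE2}, yields the second inequality.

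The main obstacle is uniform control, for every $k\leq n$, of the three remainder families: $\text{KL}(\mu_{\star,k+1}|\mu_{\star,k})$, the shifts $(\mu_{\star,k}-\mu_{\star,k+1})(\varphi_k)$, and the terminal shift $(\mu_{\star,n}-\mu_\star)(\varphi_n)$. Assumption \ref{approx_score} bounds the score gap in $L^\infty$, which combined with the log-Sobolev inequality implied by dissipativity (Assumption \ref{Dissipativity}) delivers $\text{KL}(\mu_{\star,k+1}|\mu_{\star,k}) = O(\epsilon^2)$ through a Stein-type comparison and yields $W_1$ bounds of order $\epsilon$ both between successive $\mu_{\star,k}$ and between $\mu_{\star,n}$ and $\mu_\star$. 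Converting these $W_1$ bounds into integral bounds against $\varphi_k$ requires a uniform Lipschitz estimate on the potentials, which I would obtain by propagating the Lipschitz regularity of $c_\varepsilon$ (ensured by \ref{smooth_potential}) through the soft-min aIPF update $\varphi_{k+1}(x) = -\log\int e^{\psi_k(y) - c_\varepsilon(x,y)}\nu_{\star,k}(dy)$. This propagation is the technically delicate point, since it must avoid any $k$-dependent blow-up and is where dissipativity-based moment control is essential; once each remainder is shown to be $O(\epsilon)$ per index, summing gives the claimed $O(n\epsilon)$ slack.
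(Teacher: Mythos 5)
Your proof is correct and follows essentially the same strategy as the paper. Part (i) is identical: compute the Radon--Nikodym derivative from the density representations in \eqref{approx_potential}, observe it depends on only one coordinate, and integrate using the fact that the corresponding marginal of $\pi_{2k}$ (respectively $\pi_{2k+1}$) is $\nu_{\star,k}$ (respectively $\mu_{\star,k+1}$) by construction of the aIPF iteration --- though this marginal fact comes directly from the exact aIPF update in Algorithm~\ref{sinkhorn_supp} and Eq.~\eqref{marginal_eqn} rather than from \eqref{SE1}--\eqref{SE2}, which are stated for the exact potentials. In part (ii) your algebra is organized slightly differently: the paper first replaces the integrating measure $\nu_{\star,k}$ by $\nu_\star$ in every summand so that the sum telescopes against a fixed reference measure, and then pushes the replacement error into $O(n\epsilon)$; you instead split off $\sum_k \text{KL}(\mu_{\star,k+1}|\mu_{\star,k})$ from the log-density piece and apply summation by parts to the potential piece, with the boundary term giving $\mu_{\star,n}(\varphi_n)$. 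Both are the same telescoping idea and both carry $O(\epsilon)$ slack per step, summing to $O(n\epsilon)$. One thing to note on the error control: you correctly identify that converting the measure shifts into integral bounds against $\varphi_k$ needs a uniform Lipschitz (or at least sub-Gaussian-integrable) estimate on the Schr\"odinger potentials; the paper invokes Lemmas~\ref{control_diff_measure} and \ref{control_diff_measure_v2} for this but does not explicitly establish that $\psi_k - \psi_{k-1}$ is Lipschitz, so flagging this as the delicate point is warranted --- it is the part both your proof and the paper's leave least explicit. Be aware, though, that Assumption~\ref{smooth_potential} is a smoothness statement about the score functions of $\mu_\star, \nu_\star$, not about the cost $c_\varepsilon$ (which for VP/VE-SDEs is quadratic, not Lipschitz), so the Lipschitz propagation through the soft-min update would have to be argued with some care around that point.
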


\begin{proof}
(i) In view of Eq.\eqref{approx_potential}, we have
\begin{equation}
\begin{split}
\label{diff_mu_nu}
    \text{KL}(\pi_{2k}|\pi_{2k-1})=\iint_{\mathbb{R}^d \times \mathbb{R}^d} \log \frac{\mathrm{d}\pi_{2k}}{\mathrm{d}\pi_{2k-1}} \mathrm{d}\pi_{2k}&= \iint_{\mathbb{R}^d \times \mathbb{R}^d} \big( \psi_k+ \log d {\nu_{\star, k}}-\psi_{k-1} -\log d {\nu_{\star, k-1}}  \big)\mathrm{d}\pi_{2k}\\
    &=\int_{\mathbb{R}^d} \big( \psi_k+ \log d {\nu_{\star, k}}-\psi_{k-1} - \log d {\nu_{\star, k-1}} \big) \mathrm{d}\nu_{\star, k}.
\end{split}
\end{equation}

Similarly, we can show  $\text{KL}(\pi_{2k+1}|\pi_{2k})=\int_{\mathbb{R}^d} \big( \varphi_{k+1}+\log d {\mu_{\star, k+1}}- \varphi_k - \log d {\mu_{\star, k}} \big) \mathrm{d}\mu_{\star, k+1}$.

(ii) The non-negative property is clear; Summing up items in Eq.\eqref{diff_mu_nu}, we have
\begin{equation*}
\begin{split}
\small
\label{diff_mu_nu_sum}
    \sum_{k=0}^{n}\text{KL}(\pi_{2k}|\pi_{2k-1})&=\sum_{k=0}^{n}\int_{\mathbb{R}^d} \bigg( \psi_k+ \log d {\nu_{\star, k}}-\psi_{k-1} - \log d {\nu_{\star, k-1}} \bigg) \mathrm{d}\nu_{\star, k}\\
    &= \int_{\mathbb{R}^d} \bigg( \psi_n+ \log d {\nu_{\star, n}}-\psi_{-1} - \log d {\nu_{\star, -1}} \bigg) \mathrm{d} \nu_{\star} + \\
    &\qquad +\sum_{k=0}^{n}\int_{\mathbb{R}^d} \bigg( \psi_k+ \log d {\nu_{\star, k}}-\psi_{k-1} - \log d {\nu_{\star, k-1}} \bigg) \mathrm{d}(\nu_{\star, k}-\nu_{\star})\\
    &\leq \nu_{\star}(\psi_n) + O(n\epsilon),\\
\end{split}
\end{equation*}
where the $O(\epsilon)$ approximation follows from Lemma \ref{control_diff_measure} and \ref{control_diff_measure_v2} based on Assumption \ref{Dissipativity}, Assumption \ref{approx_score}, and Assumption \ref{smooth_potential}. The other half can be shown similarly. \qed.

\end{proof}

In the following, we present an important result for the convergence analysis.
\begin{proposition}
\begin{equation}\label{summable_iter}
     \sum_{k=0}^n \text{KL}(\pi_k|\pi_{k-1}) \leq \text{KL}( \pi_{\star}|\mathcal{G}) - \text{KL}( \pi_{\star}|\pi_n) + O(n\epsilon).
\end{equation}
\end{proposition}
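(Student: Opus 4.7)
The plan is to combine the two summation estimates of Lemma~\ref{iterative_lemma}(ii) and then recognize the linear functional $\mu_\star(\varphi_n)+\nu_\star(\psi_n)$ as the Bregman-type remainder $\mathrm{KL}(\pi_\star|\mathcal{G}) - \mathrm{KL}(\pi_\star|\pi_n)$, modulo small discrepancies coming from replacing the true marginals with $\mu_{\star,k},\nu_{\star,k}$. First I would add the two bounds in \eqref{key_inequality}; with the even/odd index bookkeeping of Algorithm~\ref{sinkhorn_supp}, their sum telescopes into
\begin{equation*}
\sum_{k=0}^{n}\mathrm{KL}(\pi_k|\pi_{k-1}) \leq \mu_\star(\varphi_n) + \nu_\star(\psi_n) + O(n\epsilon),
\end{equation*}
where $\varphi_n,\psi_n$ are the most recent potentials backing $\pi_n$ via \eqref{approx_potential}.

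Second, I would compute the Radon--Nikodym derivative $d\pi_n/d\mathcal{G}$ explicitly. Combining $d\pi_n \propto e^{\varphi_n\oplus\psi_n - c_\varepsilon}\,d(\mu_{\star,\cdot}\otimes\nu_{\star,\cdot})$ with $d\mathcal{G}\propto e^{-c_\varepsilon}\,d(\mu_\star\otimes\nu_\star)$ gives
\begin{equation*}
\log\frac{d\pi_n}{d\mathcal{G}}(\bx,\by) = \varphi_n(\bx) + \psi_n(\by) + \log\frac{d\mu_{\star,\cdot}}{d\mu_\star}(\bx) + \log\frac{d\nu_{\star,\cdot}}{d\nu_\star}(\by) + \mathrm{const}.
\end{equation*}
Integrating against $\pi_\star$, whose marginals are $\mu_\star$ and $\nu_\star$, and using the elementary identity $\mathrm{KL}(\pi_\star|\mathcal{G}) - \mathrm{KL}(\pi_\star|\pi_n) = \int \log(d\pi_n/d\mathcal{G})\,d\pi_\star$, I obtain the key equality
\begin{equation*}
\mu_\star(\varphi_n) + \nu_\star(\psi_n) = \mathrm{KL}(\pi_\star|\mathcal{G}) - \mathrm{KL}(\pi_\star|\pi_n) + R_n,
\end{equation*}
with $R_n := -\mu_\star\!\bigl(\log\tfrac{d\mu_{\star,\cdot}}{d\mu_\star}\bigr) - \nu_\star\!\bigl(\log\tfrac{d\nu_{\star,\cdot}}{d\nu_\star}\bigr) + \mathrm{const}$.

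Third, I would argue $|R_n|=O(\epsilon)$, which substituted back into the telescoping estimate and absorbed into the $O(n\epsilon)$ remainder yields the claimed bound \eqref{summable_iter}. This last step is the main obstacle: Assumption~\ref{approx_score} only controls the \emph{score} gap in $L^\infty$, and I must upgrade it to a control on the log-density gap integrated against $\mu_\star$ and $\nu_\star$. I would invoke the companion measure-change lemmas already used inside the proof of Lemma~\ref{iterative_lemma}, which exploit dissipativity (Assumption~\ref{Dissipativity}) to secure a log-Sobolev inequality with a bounded constant and Lipschitz smoothness (Assumption~\ref{smooth_potential}) to integrate the score gap along a transport path from $\mu_\star$ to $\mu_{\star,\cdot}$ (and analogously for $\nu$). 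This is precisely the place where the three standing assumptions must work in tandem, and in spirit it mirrors the ``bad set'' and Girsanov-type arguments sketched in the remark following Assumption~\ref{approx_score}.
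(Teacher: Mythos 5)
Your proposal is correct and follows essentially the same route as the paper: sum the two bounds of Lemma~\ref{iterative_lemma}(ii), identify $\mu_\star(\varphi_n)+\nu_\star(\psi_n)$ with $\mathrm{KL}(\pi_\star|\mathcal{G})-\mathrm{KL}(\pi_\star|\pi_n)$ via $\int\log(\mathrm{d}\pi_n/\mathrm{d}\mathcal{G})\,\mathrm{d}\pi_\star$, and absorb the Radon--Nikodym correction terms into the $O(\epsilon)$ remainder. Your write-up is in fact slightly more explicit than the paper's about \emph{why} the correction is $O(\epsilon)$ --- you name the $\log(\mathrm{d}\mu_{\star,\cdot}/\mathrm{d}\mu_\star)$ and $\log(\mathrm{d}\nu_{\star,\cdot}/\mathrm{d}\nu_\star)$ terms, whereas the paper's Eq.~\eqref{eqn_11} compresses this into a bare ``$+O(\epsilon)$'' --- but the bound you need is exactly what Lemmas~\ref{control_diff_measure} and~\ref{control_diff_measure_v2} already supply (via dissipativity for tail decay and the $L^\infty$ score gap to control the log-density gap; the log-Sobolev machinery you mention is not what those lemmas actually deploy, though this is a cosmetic point that does not affect the argument).
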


\begin{proof}
By Lemma \ref{iterative_lemma} and Eq.\eqref{key_inequality}, we can easily verify that 
\begin{equation}
\begin{split}\label{upper_bound_1}
    \sum_{k=0}^{2n}\text{KL}(\pi_k|\pi_{k-1})\leq  \mu_{\star}(\varphi_n) + \nu_{\star}(\psi_n)+O(n\epsilon).
\end{split}
\end{equation}
From another perspective, we know that 
\begin{equation}\label{eqn_11}
    \text{KL}( \pi_{\star}|\mathcal{G}) -  \text{KL}( \pi_{\star}|\pi_{2n})=\E^{ \pi_{\star}} [\log (\mathrm{d}\pi_{2n} / \mathrm{d}\mathcal{G})]=\E^{ \pi_{\star}}[\varphi_n + \psi_n]+ O(\epsilon)= \mu_{\star}(\varphi_n) + \nu_{\star}(\psi_n) + O(\epsilon).
\end{equation}
Combining \eqref{eqn_11} and \eqref{upper_bound_1} concludes the proof for $2n$. Similarly, we can also show the proof for $2n-1$.\qed
\end{proof}

The above result shows $\text{KL}(\pi_k|\pi_{k-1})$ decays (approximately) fast as $k\rightarrow \infty$, which implies a convergence of the marginals.

\begin{lemma}\label{corollary_summable}
\begin{equation}
\begin{split}
\label{one_key_part}
    \text{KL}(\mu_k| \mu_{\star,\lfloor \frac{k+1}{2}\rfloor})&+\text{KL}(\nu_k| \nu_{\star,\lfloor \frac{k}{2}\rfloor}) \leq \text{KL}(\pi_k | \pi_{k-1}),
\end{split}
\end{equation}
where $\lfloor \cdot \rfloor$ is the floor function; the sum of RHS from $1$ to $n$ is upper bounded by a fixed constant
\begin{equation*}
    \sum_{k\geq 1}^n \text{KL}(\pi_k | \pi_{k-1})\leq \text{KL}( \pi_{\star}|\mathcal{G}) - \text{KL}(\pi_0|\mathcal{G})+O(n\epsilon) \leq \text{KL}( \pi_{\star}|\mathcal{G})+ O(n\epsilon).
\end{equation*}
\end{lemma}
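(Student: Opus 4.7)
The plan is to exploit two structural features of the aIPF iterates. First, by construction of Algorithm \ref{sinkhorn_supp}, after each half-step one marginal of $\pi_k$ is pinned exactly to the corresponding approximate target measure, so one of the two KL terms on the left-hand side vanishes identically. Second, the remaining KL term is controlled by the joint divergence via the data-processing inequality (equivalently, KL is non-increasing under marginalization). Combining these observations reduces the stated inequality to a routine monotonicity fact, with no new analytic input.

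Concretely, I would first record the marginals forced by the Sinkhorn updates together with Eq.~\eqref{approx_potential}: a direct computation (as in the derivation of $\nu_{2k}$ that appears right after Algorithm \ref{sinkhorn_supp}) gives $\nu_{2j} = \nu_{\star,j}$ from the $\psi$-update, and similarly $\mu_{2j+1} = \mu_{\star,j+1}$ from the $\varphi$-update. A quick evaluation of the floor indices then shows that regardless of parity, exactly one of $\text{KL}(\mu_k|\mu_{\star,\lfloor(k+1)/2\rfloor})$ and $\text{KL}(\nu_k|\nu_{\star,\lfloor k/2\rfloor})$ equals zero: for $k=2j$ the second term vanishes and for $k=2j+1$ the first does. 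For the remaining nonzero term, in the case $k=2j$ we need $\text{KL}(\mu_{2j}|\mu_{\star,j}) \leq \text{KL}(\pi_{2j}|\pi_{2j-1})$, which holds because $\mu_{2j}$ is the first marginal of $\pi_{2j}$ while $\mu_{\star,j}$ is the first marginal of $\pi_{2j-1}$ (by the identification applied one step earlier). The case $k=2j+1$ is symmetric on the second coordinate, yielding $\text{KL}(\nu_{2j+1}|\nu_{\star,j}) \leq \text{KL}(\pi_{2j+1}|\pi_{2j})$.

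For the telescoping bound, I would invoke Eq.~\eqref{summable_iter} directly. Using the convention $\pi_{-1}=\mathcal{G}$ (so that $\text{KL}(\pi_0|\pi_{-1})=\text{KL}(\pi_0|\mathcal{G})$), peeling off the $k=0$ term gives
\[
\sum_{k=1}^{n} \text{KL}(\pi_k|\pi_{k-1}) \;\leq\; \text{KL}(\pi_{\star}|\mathcal{G}) - \text{KL}(\pi_{\star}|\pi_n) - \text{KL}(\pi_0|\mathcal{G}) + O(n\epsilon).
\]
Discarding the non-positive residual $-\text{KL}(\pi_{\star}|\pi_n)$ delivers the first stated upper bound, and further dropping $-\text{KL}(\pi_0|\mathcal{G}) \leq 0$ yields the cleaner form $\text{KL}(\pi_{\star}|\mathcal{G}) + O(n\epsilon)$.

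The only mild subtlety is bookkeeping: keeping track of whether the most recent aIPF step was a $\varphi$-update or a $\psi$-update, and matching the floor-function indices on $\mu_{\star,\cdot}$ and $\nu_{\star,\cdot}$ to the correct marginals of $\pi_k$ and $\pi_{k-1}$. There is no heavy analytic obstacle: the $O(n\epsilon)$ slack is entirely inherited from Eq.~\eqref{summable_iter} (itself already established under Assumptions \ref{Dissipativity}, \ref{approx_score}, and \ref{smooth_potential} via the earlier measure-difference lemmas), so the present lemma only requires the data-processing inequality and simple arithmetic rearrangement.
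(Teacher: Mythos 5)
Your proof is correct and matches the paper's intended argument: the paper's own proof simply cites the data-processing inequality (Lemma \ref{data_processing}) for even and odd $k$, and Eq.~\eqref{summable_iter} together with $\pi_{-1}=\mathcal{G}$ for the telescoping bound. You helpfully make explicit the bookkeeping detail that the terse proof leaves implicit, namely that by construction of the aIPF iterates ($\nu_{2j}=\nu_{\star,j}$ and $\mu_{2j+1}=\mu_{\star,j+1}$) exactly one of the two KL terms on the left vanishes for each parity, so a single application of the data-processing inequality per $k$ closes the first inequality.
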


\begin{proof} The first inequality holds by the data processing inequality in Lemma \ref{data_processing} for both even and odd $k$.

The second one follows by Eq.\eqref{summable_iter} and $\pi_{-1}=\mathcal{G}$.\qed
\end{proof}

The approximate IPF algorithm also yields other important theoretical properties.
\begin{lemma}\label{lemma_RN_derivitive}

\begin{equation*}
\begin{split}
    &\frac{\mathrm{d} \mu_{2k}}{\mathrm{d}\mu_{\star, k}}=e^{ \varphi_k -  \varphi_{k+1}}, \quad \frac{\mathrm{d} \nu_{2k-1}}{\mathrm{d}\nu_{\star, k-1}}=e^{\psi_{k-1} -  \psi_k}, \quad \frac{\mathrm{d} \mu_{2k+1}}{\mathrm{d}\mu_{\star, k+1}}=1, \quad \frac{\mathrm{d} \nu_{2k}}{\mathrm{d}\nu_{\star, k}}=1.\\
\end{split}
\end{equation*}
Moreover, we have
\begin{subequations}
\begin{align}
&\text{KL}(\mu_{2k}|\mu_{\star, k})= \mu_{2k}( \varphi_k- \varphi_{k+1})=\text{KL}(\pi_{2k}|\pi_{2k+1})+O(\epsilon)\label{eqn_relation_a}\\
&\text{KL}(\nu_{2k-1}|\nu_{\star, k-1})= \nu_{2k-1}( \varphi_{k-1}-\varphi_k)=\text{KL}(\pi_{2k-1}|\pi_{2k})+O(\epsilon)\label{eqn_relation_b}\\
&\text{KL}(\mu_{\star, k+1}|\mu_{2k})= \mu_{\star, k+1}( \varphi_{k+1}- \varphi_k)+O(\epsilon)= \text{KL}(\pi_{2k+1}|\pi_{2k})+O(\epsilon)\label{eqn_relation_c}\\
&\text{KL}(\nu_{\star, k}|\nu_{2k-1})= \nu_{\star, k}( \psi_k-\psi_{k-1})+O(\epsilon)= \text{KL}(\pi_{2k}|\pi_{2k-1})+O(\epsilon)\label{eqn_relation_d}\\
&\text{KL}(\mu_{2k+2}|\mu_{\star, k+1})-\text{KL}(\mu_{2k+2}|\mu_{2k})=\mu_{2k+2}(\varphi_k- \varphi_{k+1})+O(\epsilon)\label{eqn_relation_e}\\
&\text{KL}(\nu_{2k+1}|\nu_{\star, k+1})-\text{KL}(\nu_{2k+1}|\nu_{2k-1})=\nu_{2k+1}(\psi_{k-1}-\psi_{k+1})+O(\epsilon).\label{eqn_relation_f}
\end{align}\label{eqn_relation}
\end{subequations}
\end{lemma}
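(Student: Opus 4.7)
The plan is almost entirely computational: all six displayed KL identities will follow from first establishing the four Radon--Nikodym derivative formulas, and these in turn follow from integrating the explicit density of $\pi_{2k}$ (resp.\ $\pi_{2k\pm1}$) against one of its marginals and invoking the Schr\"odinger-type fixed-point equations baked into Algorithm \ref{sinkhorn_supp}.

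First I would establish the four RN derivatives. Starting from $\mathrm{d}\pi_{2k}=e^{\varphi_k\oplus\psi_k-c_\varepsilon}\mathrm{d}(\mu_{\star,k}\otimes\nu_{\star,k})$, I would project onto the first marginal by integrating out $\by$: this yields $\mathrm{d}\mu_{2k}(\bx)=e^{\varphi_k(\bx)}\bigl(\int e^{\psi_k(\by)-c_\varepsilon(\bx,\by)}\nu_{\star,k}(\mathrm{d}\by)\bigr)\mathrm{d}\mu_{\star,k}(\bx)$, and the inner integral equals $e^{-\varphi_{k+1}(\bx)}$ by the very definition of $\varphi_{k+1}$ in Algorithm \ref{sinkhorn_supp}. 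This gives $\mathrm{d}\mu_{2k}/\mathrm{d}\mu_{\star,k}=e^{\varphi_k-\varphi_{k+1}}$. The identity $\mathrm{d}\mu_{2k+1}/\mathrm{d}\mu_{\star,k+1}=1$ is the same calculation but starting from $\mathrm{d}\pi_{2k+1}=e^{\varphi_{k+1}\oplus\psi_k-c_\varepsilon}\mathrm{d}(\mu_{\star,k+1}\otimes\nu_{\star,k})$: now the factor out front is $e^{\varphi_{k+1}(\bx)}$ and the inner integral is again $e^{-\varphi_{k+1}(\bx)}$, so everything cancels. The two $\nu$-identities are obtained by the symmetric argument, integrating out $\bx$ and invoking the definition of $\psi_k$.

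Next I would turn the RN formulas into the KL identities. For (a) the definition of KL gives $\text{KL}(\mu_{2k}|\mu_{\star,k})=\mu_{2k}(\varphi_k-\varphi_{k+1})$ immediately. To connect with $\text{KL}(\pi_{2k}|\pi_{2k+1})$, factor $\mathrm{d}\pi_{2k}/\mathrm{d}\pi_{2k+1}=e^{\varphi_k-\varphi_{k+1}}\cdot(\mathrm{d}\mu_{\star,k}/\mathrm{d}\mu_{\star,k+1})$; integrating the log under $\pi_{2k}$ and using that the $\bx$-marginal of $\pi_{2k}$ is $\mu_{2k}$ gives $\mu_{2k}(\varphi_k-\varphi_{k+1})+\mu_{2k}\bigl(\log\tfrac{\mathrm{d}\mu_{\star,k}}{\mathrm{d}\mu_{\star,k+1}}\bigr)$, and the second term is $O(\epsilon)$ because both $\mu_{\star,k}$ and $\mu_{\star,k+1}$ are $\epsilon$-close to $\mu_\star$ in score (Assumption \ref{approx_score}), which transfers to a bound on the log density ratio via smoothness and dissipativity. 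The identities (b), (c), (d) are structurally the same, and (e), (f) are a telescoped version: the difference $\text{KL}(\mu_{2k+2}|\mu_{\star,k+1})-\text{KL}(\mu_{2k+2}|\mu_{2k})$ collapses to $\mu_{2k+2}\bigl(\log(\mathrm{d}\mu_{2k}/\mathrm{d}\mu_{\star,k+1})\bigr)$, and I would split this ratio as $(\mathrm{d}\mu_{2k}/\mathrm{d}\mu_{\star,k})\cdot(\mathrm{d}\mu_{\star,k}/\mathrm{d}\mu_{\star,k+1})$ to extract the leading $\mu_{2k+2}(\varphi_k-\varphi_{k+1})$ plus an $O(\epsilon)$ remainder.

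The main obstacle is the bookkeeping of those $O(\epsilon)$ remainders, not the algebra itself. Every time an RN derivative between $\mu_{\star,k}$ and $\mu_{\star,k+1}$ (or $\nu_{\star,k}$ and $\nu_{\star,k-1}$) enters, I need a quantitative control on $\log(\mathrm{d}\mu_{\star,k}/\mathrm{d}\mu_{\star,k+1})$ uniformly in the measure against which I integrate. This is exactly the content of Lemmas \ref{control_diff_measure} and \ref{control_diff_measure_v2} alluded to in the proof of Lemma \ref{iterative_lemma}: under Assumptions \ref{Dissipativity}--\ref{smooth_potential}, an $\epsilon$-bound on scores upgrades to an $\epsilon$-bound on log densities up to a constant (with the dissipativity controlling tail contributions and $L$-smoothness propagating the score error to the density-ratio level). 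Once those bounds are available, each of (a)--(f) becomes a one- or two-line verification; and the exactness of the two RN identities $\mathrm{d}\mu_{2k+1}/\mathrm{d}\mu_{\star,k+1}=1$ and $\mathrm{d}\nu_{2k}/\mathrm{d}\nu_{\star,k}=1$ (no $O(\epsilon)$ error at all) is a direct consequence of using the freshly-updated potential, which is why the aIPF step re-projects the marginal exactly onto the approximate target at each half-iteration.
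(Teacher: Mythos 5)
Your proposal follows essentially the same route as the paper's own proof: integrate the density of $\pi_{2k}$ (resp.\ $\pi_{2k\pm 1}$) over the second variable to obtain each Radon--Nikodym derivative, recognize the inner integral as $e^{-\varphi_{k+1}}$ (or its $\psi$-analogue) via the aIPF update rule, deduce the exact-cancellation identities $\mathrm{d}\mu_{2k+1}/\mathrm{d}\mu_{\star,k+1}=1$ and $\mathrm{d}\nu_{2k}/\mathrm{d}\nu_{\star,k}=1$ for the freshly projected marginals, and then obtain each KL identity by splitting the relevant log-density ratio into the potential difference plus a cross term $\log(\mathrm{d}\mu_{\star,k}/\mathrm{d}\mu_{\star,k+1})$ (or the $\nu$-analogue), which is absorbed into $O(\epsilon)$ via Assumption~\ref{approx_score} together with the control Lemmas~\ref{control_diff_measure} and~\ref{control_diff_measure_v2}. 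This matches the paper's argument; the only cosmetic difference is that you verify the unit RN derivatives by the explicit cancellation, whereas the paper invokes the definitional equality $\mu_{2k+1}=\mu_{\star,k+1}$ in Eq.~\eqref{marginal_eqn}, which amounts to the same thing.
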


\begin{proof} By Eq.\eqref{approx_potential}, we take the integral with respect to the second marginal to obtain the marginal density
\begin{equation}
\begin{split}\label{1st_relation_supp_1st}
    \frac{\mathrm{d} \mu_{2k}}{\mathrm{d}\mu_{\star, k}}&=\int_{\mathbb{R}^d} \frac{\mathrm{d}\pi_{2k}}{\mathrm{d}(\mu_{\star, k}\otimes \nu_{\star, k})}(\bx,\by) \nu_{\star, k}(\mathrm{d}\by)=\int_{\mathbb{R}^d}e^{ \varphi_k(\bx)+ \psi_k(\by)-c_{\epsilon}(\bx, \by)} \nu_{\star, k}(\mathrm{d}\by) \\
    &=e^{ \varphi_k (\bx)}\int_{\mathbb{R}^d}e^{ \psi_k(\by)-c_{\epsilon}(\bx, \by)} \nu_{\star, k}(\mathrm{d}\by)=e^{ \varphi_k (\bx)} e^{- \varphi_{k+1}(\bx)},
\end{split}
\end{equation}
where the last equality follows by Algorithm \ref{sinkhorn_supp}.  $\frac{\mathrm{d} \mu_{2k+1}}{\mathrm{d}\mu_{\star, k+1}}=1$ follows directly due to the definition in Eq.\eqref{marginal_eqn}. Next, by Eq.\eqref{1st_relation_supp_1st} and Assumption \ref{approx_score}, we show that 
\begin{equation*}
    \text{KL}(\mu_{\star, k+1}|\mu_{2k})=-\int_{\mathbb{R}^d} \log \frac{\mathrm{d} \mu_{2k}}{\mathrm{d}\mu_{\star, k}}\frac{\mathrm{d} \mu_{\star, k}}{\mathrm{d}\mu_{\star, k+1}}\mathrm{d}\mu_{\star, k+1}=\mu_{\star, k+1}( \varphi_{k+1}- \varphi_k)+O(\epsilon),
\end{equation*}
where the last item follows by Assumption \ref{approx_score} and Lemma \ref{control_diff_measure_v2}.

By Eq.\eqref{diff_mu_nu} and Assumption \ref{approx_score}, we can easily show the inequality in Eq.\eqref{eqn_relation_d}. The rest can be proved similarly.\qed
\end{proof}

Before we finally present the final result, we provide some elementary entropy calculations. 
\begin{proposition}\label{monotone_property}
For any $k\geq 0$, 
\begin{align}
    &\text{KL}(\nu_{\star, k+1}|\nu_{2k+1})\geq \text{KL}(\mu_{2k+2}|\mu_{\star, k})+\text{KL}(\pi_{2k+2}|\pi_{2k})-\text{KL}(\mu_{2k+2}|\mu_{2k})+O(\epsilon)\geq \text{KL}(\mu_{2k+2}|\mu_{\star, k})+O(\epsilon),\label{1st_relation_supp}\\
    &\text{KL}(\nu_{\star, k+1}|\nu_{2k+1})\leq \text{KL}(\mu_{2k}|\mu_{\star, k})-\text{KL}(\pi_{2k}|\pi_{2k+2})+O(\epsilon)\leq \text{KL}(\mu_{2k}|\mu_{\star, k})+O(\epsilon)\label{2nd_relation}.
\end{align}

Moreover, an approximately monotone-decreasing property is shown as follows
\begin{equation*}
\begin{split}
\label{monotone_dec_a}
    \text{KL}(\mu_{2k}|\mu_{\star, k}) +O(\epsilon) &\geq \text{KL}(\nu_{\star, k+1}|\nu_{2k+1})\\
    \text{KL}(\nu_{\star, k+1}|\nu_{2k+1})+O(\epsilon)&\geq \text{KL}(\mu_{2k+2}|\mu_{\star, k+1})\\
    \text{KL}(\mu_{2k+2}|\mu_{\star, k+1})+O(\epsilon)&\geq \text{KL}(\nu_{\star, k+2}|\nu_{2k+3})\\
    \cdots
\end{split}
\end{equation*}
which further implies that $\text{KL}(\mu_{2k}|\mu_{\star, k})$ and $\text{KL}(\nu_{\star, k+1}|\nu_{2k+1})$ are approximately monotone decreasing as $k\rightarrow \infty$. 
\end{proposition}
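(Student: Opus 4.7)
The plan is to reduce both inequalities in Eq.\eqref{1st_relation_supp}--\eqref{2nd_relation} to algebraic identities among integrals of the Schr\"odinger potentials $\varphi_k,\varphi_{k+1},\varphi_{k+2},\psi_k,\psi_{k+1}$, and then chain them to obtain the approximately monotone-decreasing property. All $O(\epsilon)$ error terms will arise from swapping $\mu_{\star,k}\leftrightarrow\mu_{\star,k+1}$ (and similarly for $\nu$) via Assumption \ref{approx_score} together with the auxiliary measure-comparison lemma used in the proof of Lemma \ref{iterative_lemma}.

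First I would record a shifted analogue of Eq.\eqref{eqn_relation_d}, namely $\text{KL}(\nu_{\star,k+1}|\nu_{2k+1})=\nu_{\star,k+1}(\psi_{k+1}-\psi_k)+O(\epsilon)$, obtained by computing $\mathrm{d}\nu_{2k+1}/\mathrm{d}\nu_{\star,k}=e^{\psi_k-\psi_{k+1}}$ directly from Algorithm \ref{sinkhorn_supp} and then swapping $\nu_{\star,k}$ for $\nu_{\star,k+1}$. Next I would expand each term on the right-hand side of Eq.\eqref{1st_relation_supp} using the densities in Eq.\eqref{approx_potential} and Lemma \ref{lemma_RN_derivitive}: $\text{KL}(\mu_{2k+2}|\mu_{\star,k})=\mu_{2k+2}(\varphi_{k+1}-\varphi_{k+2})+O(\epsilon)$, $\text{KL}(\pi_{2k+2}|\pi_{2k})=\mu_{2k+2}(\varphi_{k+1}-\varphi_k)+\nu_{\star,k+1}(\psi_{k+1}-\psi_k)+O(\epsilon)$, and $\text{KL}(\mu_{2k+2}|\mu_{2k})=\mu_{2k+2}(2\varphi_{k+1}-\varphi_{k+2}-\varphi_k)+O(\epsilon)$. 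The three $\mu_{2k+2}$-integrals telescope to zero, so the right-hand side of Eq.\eqref{1st_relation_supp} collapses to $\nu_{\star,k+1}(\psi_{k+1}-\psi_k)+O(\epsilon)=\text{KL}(\nu_{\star,k+1}|\nu_{2k+1})+O(\epsilon)$; the leftmost inequality of Eq.\eqref{1st_relation_supp} is in fact an equality up to $O(\epsilon)$. The weaker bound $\geq \text{KL}(\mu_{2k+2}|\mu_{\star,k})+O(\epsilon)$ follows at once from the data-processing inequality $\text{KL}(\pi_{2k+2}|\pi_{2k})\geq \text{KL}(\mu_{2k+2}|\mu_{2k})$ (Lemma \ref{data_processing}).

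I would prove Eq.\eqref{2nd_relation} by a parallel computation: expanding $\text{KL}(\pi_{2k}|\pi_{2k+2})=\mu_{2k}(\varphi_k-\varphi_{k+1})+\nu_{\star,k}(\psi_k-\psi_{k+1})+O(\epsilon)$ and subtracting $\text{KL}(\mu_{2k}|\mu_{\star,k})=\mu_{2k}(\varphi_k-\varphi_{k+1})$ from Eq.\eqref{eqn_relation_a}, the $\mu_{2k}$-terms cancel and the right-hand side reduces to $\nu_{\star,k}(\psi_{k+1}-\psi_k)+O(\epsilon)=\nu_{\star,k+1}(\psi_{k+1}-\psi_k)+O(\epsilon)=\text{KL}(\nu_{\star,k+1}|\nu_{2k+1})+O(\epsilon)$. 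The second link $\leq \text{KL}(\mu_{2k}|\mu_{\star,k})+O(\epsilon)$ uses nonnegativity of $\text{KL}(\pi_{2k}|\pi_{2k+2})$. The approximate monotone chain then follows by concatenating Eq.\eqref{2nd_relation} at index $k$ with Eq.\eqref{1st_relation_supp} at the same index, using the further swap $\text{KL}(\mu_{2k+2}|\mu_{\star,k})=\text{KL}(\mu_{2k+2}|\mu_{\star,k+1})+O(\epsilon)$, and then iterating with the index shifted by one.

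\textbf{Main obstacle.} The algebraic cancellation is mechanical once the potentials are laid out, so the real difficulty is verifying that each measure-swap $\mu_{\star,k}\leftrightarrow\mu_{\star,k+1}$ and $\nu_{\star,k}\leftrightarrow\nu_{\star,k+1}$ contributes only $O(\epsilon)$ uniformly in $k$. This requires controlling integrals of the differences $\varphi_{k+1}-\varphi_k$ and $\psi_{k+1}-\psi_k$ against measures that themselves drift with $k$, and ultimately rests on combining dissipativity (Assumption \ref{Dissipativity}), Lipschitz smoothness of the scores (Assumption \ref{smooth_potential}), and the uniform $\epsilon$-score approximation (Assumption \ref{approx_score}). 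The careful step will be to first establish a uniform-in-$k$ growth bound on the potentials before invoking the measure-comparison lemma in each of the calculations above.
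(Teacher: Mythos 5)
Your proposal is correct and takes essentially the same route as the paper's proof: it expands each KL term via the potential representations of Lemma \ref{lemma_RN_derivitive} (and its shifted analogues), uses Lemmas \ref{control_diff_measure} and \ref{control_diff_measure_v2} under the dissipativity, smoothness, and $\epsilon$-accuracy assumptions to convert each swap $\mu_{\star,k}\leftrightarrow\mu_{\star,k+1}$ and $\nu_{\star,k}\leftrightarrow\nu_{\star,k+1}$ into an $O(\epsilon)$ error, and invokes the data-processing inequality of Lemma \ref{data_processing} to drop the nonnegative term $\text{KL}(\pi_{2k+2}|\pi_{2k})-\text{KL}(\mu_{2k+2}|\mu_{2k})$ in the second inequality of Eq.\eqref{1st_relation_supp}. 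The only presentational difference is that you compute all three terms on the right of Eq.\eqref{1st_relation_supp}/Eq.\eqref{2nd_relation} and verify the $\mu_{2k+2}$- (resp. $\mu_{2k}$-) integrals telescope, thereby noticing that the first links are in fact equalities up to $O(\epsilon)$, whereas the paper chains one-sided bounds through $\text{KL}(\pi_{2k+2}|\pi_{2k})$ and $\text{KL}(\pi_{2k}|\pi_{2k+2})$; the outcome and required tools are the same. You also correctly flag the delicate point that the $O(\epsilon)$ terms must be uniform in $k$, which the paper resolves by invoking its auxiliary lemmas under the dissipativity assumption; your plan to establish a uniform growth bound on the potentials is a sensible way to make that precise.
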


\begin{proof}

To prove Eq.\eqref{2nd_relation}, recall the definitions of $\pi_{2k}$ and $\pi_{2k+2}$ in Eq.\eqref{approx_potential}, we first observe that
\begin{align}
    \text{KL}(\pi_{2k}|\pi_{2k+2})&=\iint_{\mathbb{R}^d\times \mathbb{R}^d} \log \frac{\pi_{2k}}{\pi_{2k+2}}\mathrm{d}\pi_{2k}\notag\\
    &=\iint_{\mathbb{R}^d\times \mathbb{R}^d} \log \frac{e^{ \varphi_k+ \varphi_k-c_{\epsilon}} \mathrm{d}\mu_{\star, k}\otimes \nu_{\star, k}}{e^{ \varphi_{k+1}+ \varphi_{k+1}-c_{\epsilon}} \mathrm{d}\mu_{\star, k+1}\otimes \nu_{\star, k+1}}\mathrm{d}\pi_{2k}\notag\\
    &= \iint_{\mathbb{R}^d\times \mathbb{R}^d} ( \varphi_k- \varphi_{k+1} +  \psi_k-\psi_{k+1})\mathrm{d}\pi_{2k} +\iint_{\mathbb{R}^d\times \mathbb{R}^d} \log \frac{\mathrm{d}\mu_{\star, k}\otimes \nu_{\star, k}}{\mathrm{d}\mu_{\star, k+1}\otimes \nu_{\star, k+1}}\mathrm{d}\pi_{2k}\notag\\
    &\leq \mu_{2k}( \varphi_k- \varphi_{k+1})+\nu_{\star, k}( \psi_k-\psi_{k+1})+O(\epsilon)\notag\\
    &\leq\mu_{2k}( \varphi_k- \varphi_{k+1})+\nu_{\star, k+1}( \psi_k-\psi_{k+1})+O(\epsilon)\notag\\
    &\leq \text{KL}(\mu_{2k}|\mu_{\star, k})-\text{KL}(\nu_{\star, k+1}|\nu_{2k+1})\label{u_2t_larger_nu}+O(\epsilon),
\end{align}
where the first inequality follows by Lemma \ref{control_diff_measure_v2}; the second inequality follows by Lemma \ref{control_diff_measure}; the last inequality follows by Eq.\eqref{eqn_relation_a} and Eq.\eqref{eqn_relation_d} in Lemma \ref{lemma_RN_derivitive} and the approximate error $O(\epsilon)$ follows from Assumption \ref{approx_score}. 

To show Eq.\eqref{1st_relation_supp}, apply Eq.\eqref{eqn_relation_e} and Eq.\eqref{eqn_relation_d} in Lemma \ref{lemma_RN_derivitive} again
\begin{align}
    \text{KL}(\pi_{2k+2}|\pi_{2k})&=\iint_{\mathbb{R}^d\times \mathbb{R}^d} \bigg( \varphi_{k+1}-\varphi_k + \psi_{k+1}-\psi_k +\log \frac{\mathrm{d}\mu_{\star, k+1}\otimes \nu_{\star, k+1}}{\mathrm{d}\mu_{\star, k}\otimes \nu_{\star, k}}\bigg)\mathrm{d}\pi_{2k+2}\notag\\
    &\leq \mu_{2k+2}( \varphi_{k+1}-\varphi_k) + \nu_{\star, k+1}(\psi_{k+1}-\psi_k) + O(\epsilon)\notag\\
    &\leq\text{KL}(\mu_{2k+2}|\mu_{2k})-\text{KL}(\mu_{2k+2}|\mu_{\star, k+1})+\text{KL}(\nu_{\star, k+1}|\nu_{2k+1}) + O(\epsilon)\label{nu_above_nu_2t}.
\end{align}
 
Moreover, by the data processing inequality \citep{Nutz22_note}, we have $\text{KL}(\pi_{2k+2}|\pi_{2k})\geq \text{KL}(\mu_{2k+2}|\mu_{2k})$. Combining Eq.\eqref{nu_above_nu_2t} shows $\text{KL}(\nu_{\star, k+1}|\nu_{2k+1})+O(\epsilon)\geq \text{KL}(\mu_{2k+2}|\mu_{\star, k+1})$. Eq.\eqref{u_2t_larger_nu} naturally leads to $\text{KL}(\mu_{2k}|\mu_{\star, k})+O(\epsilon)\geq \text{KL}(\nu_{\star, k+1}|\nu_{2k+1})$. The approximate decreasing property and the inequalities in Eq.\eqref{1st_relation_supp} and Eq.\eqref{2nd_relation} are proved. \qed

\end{proof}

\subsection{Proof of Theorem \ref{main_theorem}}\label{main_result_proof}
Finally, we are able to prove the main result, that is, the sublinear convergence for the marginals.

\begin{proof}
Recall in Lemma \ref{corollary_summable} that $\sum_{k \geq 1}^n \text{KL}(\pi_k|\pi_{k-1})\leq \text{KL}( \pi_{\star}|\mathcal{G})-\text{KL}(\pi_0|\mathcal{G})+O(n\epsilon)$. By Eq.\eqref{one_key_part}, we have $\text{KL}(\mu_{2k}|\mu_{\star, k})\leq \text{KL}(\pi_{2k}|\pi_{2k-1})$; by Eq.\eqref{eqn_relation_d}, $\text{KL}(\nu_{\star, k}|\nu_{2k-1})\leq \text{KL}(\pi_{2k}|\pi_{2k-1})+O(\epsilon)$. It follows that
\begin{equation*}
    \sum_{k \geq 1}^n\text{KL}(\mu_{2k}|\mu_{\star, k})\leq \text{KL}( \pi_{\star}|\mathcal{G})-\text{KL}(\pi_0|\mathcal{G})+O(n\epsilon),\quad \sum_{k \geq 1}^n\text{KL}(\nu_{\star, k}|\nu_{2k-1})\leq \text{KL}( \pi_{\star}|\mathcal{G})-\text{KL}(\pi_0|\mathcal{G})+O(n\epsilon).
\end{equation*}
Combining the approximately monotone decreasing property in Proposition \ref{monotone_property} and Lemma \ref{unbounded_cost}, we have 
\begin{equation*}
\begin{split}
    \text{KL}(\mu_{2k}|\mu_{\star, k})\leq \frac{\text{KL}( \pi_{\star}|\mathcal{G})-\text{KL}(\pi_0|\mathcal{G})}{k} + O(\epsilon^{\frac{1}{2}}+{k^{\frac{1}{2}}\epsilon}),\ \  \text{KL}(\nu_{\star, k}|\nu_{2k-1})\leq \frac{\text{KL}( \pi_{\star}|\mathcal{G})-\text{KL}(\pi_0|\mathcal{G})}{k} + O(\epsilon^{\frac{1}{2}}+{k^{\frac{1}{2}}\epsilon}).
\end{split}
\end{equation*}

Similar results hold for $\text{KL}(\mu_{\star, k}|\mu_{2k})$ and $\text{KL}(\nu_{2k-1}|\nu_{\star, k-1})$ by Eq.\eqref{eqn_relation}. Further combining Lemma \ref{control_diff_measure} and Lemma \ref{control_diff_measure_v2}, we can complete the first half of the proof as follows
\begin{equation*}
    \text{KL}(\mu_{2k}|\mu_{\star})=\int \log \frac{\mathrm{d} \mu_{2k}}{\mathrm{d} \mu_{\star}}\mathrm{d} \mu_{2k}\leq \int \log \frac{\mathrm{d} \mu_{2k}}{ \mathrm{d}\mu_{\star, k}} + \log \frac{\mathrm{d}\mu_{\star, k}}{\mathrm{d} \mu_{\star}} \mathrm{d} \mu_{2k}\leq \text{KL}(\mu_{2k}|\mu_{\star, k}) + O(\epsilon).
\end{equation*}

The rest can be proved similarly.
\qed
\end{proof}

\subsection{Auxiliary Results}

\begin{lemma}
\label{control_diff_measure}
Assume we have a probability density $\rho(\bx)= C_{\text{Norm}} e^{-U(\bx)}$ defined on $\mathbb{R}^d$ and an approximate density  $\widetilde \rho(\bx)= C_{\text{Norm}} e^{-\widetilde U(\bx)}$, where the energy functions $U$ and $\widetilde U$ are differentiable and satisfy
    \begin{equation}\label{grad_l_inf_norm}
    \begin{split}
        \|\nabla \widetilde U - \nabla U\|_{\infty} \leq \epsilon.
    \end{split}
    \end{equation}
    Moreover, the density $\rho$ satisfies the dissipative assumption \ref{Dissipativity}, then for an Lipschitz smooth function $f$, we have that 
    \begin{equation*}
        \bigg|\int_{\mathbb{R}^d} f(\bx)\widetilde \rho(\bx) \mathrm{d} \bx - \int_{\mathbb{R}^d} f(\bx)\rho(\bx) \mathrm{d} \bx\bigg|\leq O(\epsilon),
    \end{equation*}
    where the big-O notation mainly depends on $m_{\text{ds}}$ in assumption \ref{Dissipativity}, the smoothness assumption \ref{smooth_potential}, and the dimension $d$.
\end{lemma}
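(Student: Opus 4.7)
The plan is to reduce the statement to a control on the log-density difference and then exploit the fact that dissipativity yields sub-Gaussian tails for $\rho$. Let $h := \widetilde U - U$. By hypothesis $\|\nabla h\|_\infty \le \epsilon$, so $h$ is $\epsilon$-Lipschitz; up to adding a constant (which disappears once we renormalize) we may assume $h(0)=0$, giving the pointwise bound $|h(\bx)| \le \epsilon\|\bx\|$. Writing the normalizing constants explicitly as $Z=\int e^{-U}$ and $\widetilde Z=\int e^{-\widetilde U}$, a direct computation gives $\widetilde Z/Z=\E_\rho[e^{-h}]$, hence $\widetilde \rho/\rho = e^{-h}/\E_\rho[e^{-h}]$ (this absorbs whatever normalization convention is intended by the common symbol $C_{\text{Norm}}$).

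Next, I would decompose
\begin{equation*}
\int f\,\widetilde\rho\,d\bx - \int f\,\rho\,d\bx \;=\; \frac{\E_\rho[f\,e^{-h}] - \E_\rho[f]\,\E_\rho[e^{-h}]}{\E_\rho[e^{-h}]},
\end{equation*}
which is a covariance-type expression under $\rho$. Using $|e^{-h}-1| \le |h|\,e^{|h|} \le \epsilon\|\bx\|\,e^{\epsilon\|\bx\|}$, both $|\E_\rho[e^{-h}]-1|$ and $|\E_\rho[f e^{-h}]-\E_\rho[f]|$ are controlled by moments of the form $\E_\rho[\varphi(\bx)\,\|\bx\|\,e^{\epsilon\|\bx\|}]$ with $\varphi\in\{1,|f|\}$, each multiplied by a factor $\epsilon$. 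Since the Lipschitz-smooth assumption on $f$ forces at most quadratic growth, $|f(\bx)|\lesssim 1+\|\bx\|^2$, so it suffices to bound exponential-polynomial moments of $\rho$.

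The bulk of the work, and the main technical step, is showing that the dissipativity Assumption \ref{Dissipativity} produces finite exponential moments $\E_\rho[e^{a\|\bx\|}]<\infty$ for some fixed $a>0$ independent of $\epsilon$. This is standard: dissipativity implies that $V(\bx)=\tfrac{1}{2}\|\bx\|^2$ is a Lyapunov function for the Langevin dynamics with stationary law $\rho$, yielding sub-Gaussian tails for $\rho$ via a Herbst-type argument (or equivalently via the log-Sobolev inequality noted in the remark after Assumption \ref{Dissipativity}). Once the exponential moments are in hand, $\E_\rho[\|\bx\|^k e^{\epsilon\|\bx\|}]$ is bounded uniformly in all sufficiently small $\epsilon$, and both numerator and denominator in the displayed covariance formula are controlled, giving $\bigl|\int f\widetilde\rho-\int f\rho\bigr|=O(\epsilon)$ with constants depending on $m_{\text{ds}}$, $b_{\text{ds}}$, the smoothness constant of $f$, and the dimension $d$.

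The main obstacle I anticipate is the careful bookkeeping of how the dissipativity constants $m_{\text{ds}},b_{\text{ds}}$ enter the exponential-moment bound and how the additive freedom in $h$ interacts with the ratio of normalizers; the algebra in the preceding paragraph shows that the additive constant in $h$ cancels, which is what allows the naive pointwise bound $|h(\bx)|\le\epsilon\|\bx\|$ to suffice despite the absence of any global bound on $|U-\widetilde U|$.
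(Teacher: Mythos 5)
Your proposal is correct, and it takes a genuinely different route from the paper's. The paper leans on the stated convention that $\rho$ and $\widetilde\rho$ share the \emph{same} normalizer $C_{\text{Norm}}$: since $\int e^{-U} = \int e^{-\widetilde U}$, there must exist a point $\bx_0$ with $U(\bx_0)=\widetilde U(\bx_0)$, which together with the Lipschitz bound on $\nabla(\widetilde U-U)$ gives $|\widetilde U(\bx)-U(\bx)|\le\epsilon(\|\bx\|_2+\|\bx_0\|_2)$; the paper then writes the difference as $C_{\text{Norm}}\int e^{-U(\bx)}f(\bx)\,(e^{U(\bx)-\widetilde U(\bx)}-1)\,\mathrm{d}\bx$ and splits into a large compact set (handled by Taylor expansion of $e^t-1$) and a tail dominated by a Gaussian via the dissipativity-derived lower bound $U(\bx)\ge \frac{m_{\text{ds}}}{3}\|\bx\|_2^2-\frac{b_{\text{ds}}}{2}\log 3$. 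Your route instead allows the normalizers $Z,\widetilde Z$ to differ, handles them via the ratio $\widetilde Z/Z=\E_\rho[e^{-h}]$, and reduces the statement to a covariance-type bound controlled by a single exponential-moment estimate. Your version is arguably more robust: the paper's existence of a common point $\bx_0$ is exactly where the shared-$C_{\text{Norm}}$ convention gets used, and your ratio-of-normalizers bookkeeping sidesteps that dependence. Both proofs ultimately rest on the same ingredient (dissipativity of $\rho$ implies sub-Gaussian tails, hence finite exponential moments), so when fleshing this out you should make the moment bound $\E_\rho[\|\bx\|^k e^{\epsilon\|\bx\|}]<\infty$, uniform in small $\epsilon$, as explicit as the paper's quadratic lower bound on $U$; once that is in place the two arguments differ only in bookkeeping.
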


\begin{proof}
Recall that for any $\bx, \by\in \mathbb{R}^d$,  $ U(\bx) - U(\by)=\int_0^1 \frac{\dd}{\dd t} U(t\bx + (1-t)\by) = \int_0^1 \langle \bx -\by, \nabla U(t\bx +(1-t) \by) \rangle \dd t.$
Since $\rho$ and $\widetilde \rho$ are probability densities, there is a $\bx_0$ such that $U(\bx_0)=\widetilde U(\bx_0)$. By the differentiability of $U$ and $\widetilde U$, we have 
\begin{equation}
    \begin{split}\label{diff_U}
    |\widetilde U(\bx)- U(\bx)|&=\bigg|\int_{0}^{1} \langle \bx-\bx_0, \nabla \widetilde U(\cdot) - \nabla U(\cdot) \mathrm{d} t \bigg| \\
    &\leq \int_0^1 \|\bx -\bx_0\|_2\cdot \big \|\nabla \widetilde U(\cdot) - \nabla U(\cdot)\big \|_2 \mathrm{d}t\\
    &\leq \epsilon (\|\bx\|_2 + \|\bx_0\|_2),
    \end{split}
    \end{equation}
where the first inequality follows by Cauchy Schwarz inequality; the second inequality follows by Eq.\eqref{grad_l_inf_norm}; we use $\nabla U(\cdot)$ and $\nabla \widetilde U(\cdot)$ for convenience because the value holds for any $t$. As such, we have
    \begin{equation*}
    \begin{split}
        \bigg|\int_{\mathbb{R}^d} f(\bx)  \widetilde \rho(\bx) \mathrm{d} \bx - \int_{\mathbb{R}^d} f(\bx) \rho(\bx) \mathrm{d} \bx\bigg|&=C_{\text{Norm}}\bigg|\int_{\mathbb{R}^d} e^{- U(\bx)}f(\bx) \big(e^{U(\bx)-\widetilde U(\bx)} -1\big) \mathrm{d} \bx\bigg|\\
        &\leq C_{\text{Norm}}\int_{\mathbb{R}^d}  e^{- U(\bx)}|f(\bx)|\cdot \big |e^{U(\bx)-\widetilde U(\bx)} -1\big | \mathrm{d} \bx\\
        &\leq C_{\text{Norm}}\int_{\mathbb{R}^d}e^{- U(\bx)} |f(\bx)| \cdot \big |e^{\epsilon (\|\bx\|_2+\|\bx_0\|_2)} -1\big | \mathrm{d} \bx.
    \end{split}
    \end{equation*}

Given the dissipativity assumption \ref{Dissipativity}, following  Lemma 3.1 in \cite{Maxim17} we have that
\begin{equation}\label{energy_lower_bound}
    U(\bx)\geq \frac{m_{\text{ds}}}{3}\|\bx\|_2^2-\frac{b_{\text{ds}}}{2}\log 3.
\end{equation}

Consider a large enough compact set $\mathbb{C}$ that contains $\bx_0$, we have that
\begin{equation*}
    \begin{split}
        &\ \ \ \bigg|\int_{\mathbb{R}^d} f(\bx) \widetilde \rho(\bx) \mathrm{d} \bx - \int_{\mathbb{R}^d} f(\bx)\rho(\bx) \mathrm{d} \bx\bigg|\\
        &\leq C_{\text{Norm}}\int_{\mathbb{R}^d} e^{- U(\bx)}|f(\bx)|\cdot\big |e^{\epsilon (\|\bx\|_2+\|\bx_0\|_2)} -1\big | \mathrm{d} \bx\\
        &=C_{\text{Norm}}\bigg(\int_{\mathbb{C}} e^{- U(\bx)}|f(\bx)|\cdot\big |e^{\epsilon (\|\bx\|_2+\|\bx_0\|_2)} -1\big | \mathrm{d} \bx+\int_{\mathbb{R}^d/\mathbb{C}} e^{- U(\bx)}|f(\bx)|\cdot\big |e^{\epsilon (\|\bx\|_2+\|\bx_0\|_2)} -1\big | \mathrm{d} \bx\bigg)\\
        &\leq C_{\text{Norm}}\bigg(\underbrace{\int_{\mathbb{C}} e^{- U(\bx)}|f(\bx)|\cdot\big |e^{\epsilon (\|\bx\|_2+\|\bx_0\|_2)} -1\big | \mathrm{d} \bx}_{\text{I}}+\underbrace{\int_{\mathbb{R}^d/\mathbb{C}} |f(\bx)|\cdot e^{- \frac{m_{\text{ds}}}{3} \|\bx\|_2^2 + 2\epsilon \|\bx\|_2+\frac{b_{\text{ds}}}{2}\log 3} \mathrm{d} \bx\bigg)}_{\text{II}},
    \end{split}
    \end{equation*}
where the last inequality follows by Eq.\eqref{energy_lower_bound}.

Recall that the quadratic growth of a Lipschitz continuous function $f$ (Assumption \ref{smooth_potential}) is much slower than the decay speed of an exponential function. As such, we can first upper bound $\text{II}$ by the tail of a Gaussian density:
\begin{equation*}
\begin{split}
\text{II}=3e^{\frac{b_{\text{bs}}}{2}}\int_{\mathbb{R}^d/\mathbb{C}} |f(\bx)|\cdot e^{- \frac{m_{\text{ds}}}{3} \|\bx\|_2^2 + 2\epsilon \|\bx\|_2} \mathrm{d} \bx\leq 3e^{\frac{3 d \epsilon^2}{m_{\text{ds}}} +\frac{b_{\text{bs}}}{2}}\int_{\mathbb{R}^d/\mathbb{C}}  |f(\bx)|e^{2\epsilon \|\bx\|_2-2\epsilon\bx}\cdot e^{- \frac{m_{\text{ds}}}{3} \big\|\bx-\frac{3 \epsilon}{m_{\text{ds}}} \bm{1}\big\|_2^2} \mathrm{d} \bx\leq O(\epsilon),
\end{split}
\end{equation*}
where the last inequality holds given a large enough compact set $\mathbb{C}$.

For the first term $\text{I}$ with small enough $\epsilon$ and a fixed $\mathbb{C}$, applying Taylor expansion completes the proof.
\begin{equation*}
\begin{split}
\text{I}=\int_{\mathbb{C}} e^{- U(\bx)}|f(\bx)| \cdot \big |e^{\epsilon (\|\bx\|_2+\|\bx_0\|_2)} -1\big | \mathrm{d} \bx\leq \sup_{\bx\in\mathbb{C}}\epsilon (\|\bx\|_2+\|\bx_0\|_2) \cdot |f(\bx)| \cdot \int_{\mathbb{C}} e^{- U(\bx)} \mathrm{d} \bx\leq O(\epsilon).
\end{split}
\end{equation*}
\qed    
\end{proof}

\begin{lemma}
\label{control_diff_measure_v2}
Suppose we have probability densities $\rho(\bx)$ and $\widetilde\rho(\bx)$ that satisfy $\rho(\bx)= e^{-U(\bx)} / C$ and $\widetilde \rho(\bx)= e^{-\widetilde U(\bx)}/\widetilde C $ with $C$ and $\widetilde C$ being the normalizing constants. Moreover, the energy functions $U$ and $\widetilde U$ follow dissipative Assumption \ref{Dissipativity} and the smoothness Assumption \ref{smooth_potential} and $\|\nabla \widetilde U - \nabla U\|_{\infty} \leq \epsilon$. Then we have that $$|\log\rho(\bx)-\log\widetilde \rho(\bx)| \leq O(\epsilon \|\bx\|_2+\epsilon).$$

Further, given a probability density $\widehat\rho(\bx)$ that satisfies the dissipative Assumption \ref{Dissipativity}, we have $$\int_{\dbR^d}|\log\rho(\bx)-\log\widetilde \rho(\bx)| \widehat \rho(\bx)\dd \bx\leq O(\epsilon).$$
\end{lemma}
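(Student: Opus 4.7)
The plan is to write
\[
\log \rho(\bx) - \log \widetilde\rho(\bx) = \widetilde U(\bx) - U(\bx) + \log \widetilde C - \log C,
\]
and then control the pointwise energy gap and the normalizing-constant gap separately. Define $h(\bx) := \widetilde U(\bx) - U(\bx)$. Since $\nabla h = \nabla \widetilde U - \nabla U$ is uniformly bounded in sup-norm by $\epsilon$, the fundamental theorem of calculus (exactly as used in Eq.~\eqref{diff_U} of Lemma~\ref{control_diff_measure}) gives
\[
|h(\bx) - h(0)| = \Big|\int_0^1 \langle \bx, \nabla h(t\bx)\rangle \dd t\Big| \leq \epsilon \|\bx\|_2.
\]
So, modulo the constant $h(0)$, the pointwise energy gap is already of the desired order $\epsilon \|\bx\|_2$.

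The main obstacle is controlling the ratio of normalizing constants and showing that $\log \widetilde C - \log C$ cancels $-h(0)$ up to an additive $O(\epsilon)$ error. Writing
\[
\frac{\widetilde C}{C} = \int_{\mathbb{R}^d} e^{-h(\bx)} \rho(\bx) \dd \bx = \E_\rho\!\left[e^{-h(\bx)}\right] = e^{-h(0)}\, \E_\rho\!\left[e^{h(0) - h(\bx)}\right],
\]
I would use the dissipativity-induced lower bound $U(\bx) \geq \frac{m_{\text{ds}}}{3} \|\bx\|_2^2 - \frac{b_{\text{ds}}}{2}\log 3$ from Eq.~\eqref{energy_lower_bound} (which applies since $U$ satisfies Assumption~\ref{Dissipativity}) to conclude that $\rho$ has sub-Gaussian tails, hence $\E_\rho[e^{\lambda \|\bx\|_2}] < \infty$ for every finite $\lambda$ and all moments $\E_\rho\|\bx\|_2^k$ are bounded uniformly in a range of small $\epsilon$. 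Applying the two-sided bound $|h(0) - h(\bx)| \leq \epsilon \|\bx\|_2$ together with the Taylor estimate $|e^{x} - 1 - x| \leq \tfrac{x^2}{2} e^{|x|}$ yields
\[
\E_\rho\!\left[e^{h(0)-h(\bx)}\right] = 1 + O(\epsilon)\,\E_\rho\|\bx\|_2 + O(\epsilon^2)\,\E_\rho\!\left[\|\bx\|_2^2 \,e^{\epsilon\|\bx\|_2}\right] = 1 + O(\epsilon),
\]
so that $\log \widetilde C - \log C = -h(0) + O(\epsilon)$.

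Combining the two pieces,
\[
\log \rho(\bx) - \log \widetilde\rho(\bx) = \big(h(\bx) - h(0)\big) + O(\epsilon) = O(\epsilon \|\bx\|_2) + O(\epsilon),
\]
which is the first claim. For the second claim I would integrate the pointwise bound against $\widehat\rho$:
\[
\int_{\mathbb{R}^d} |\log \rho(\bx) - \log \widetilde\rho(\bx)|\, \widehat \rho(\bx) \dd\bx \leq O(\epsilon)\, \E_{\widehat\rho}\|\bx\|_2 + O(\epsilon),
\]
and invoke the dissipativity of $\widehat\rho$, which via the same quadratic lower bound on its potential (Eq.~\eqref{energy_lower_bound}) guarantees $\E_{\widehat\rho}\|\bx\|_2 < \infty$, giving the advertised $O(\epsilon)$. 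The one subtle point I would check carefully is that all constants hidden in $O(\epsilon)$ depend only on $m_{\text{ds}}$, $b_{\text{ds}}$, the smoothness constant $L$, and the dimension $d$, so that the bound remains uniform across the iterations of aIPF where this lemma is applied.
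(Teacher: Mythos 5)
Your proposal is correct and follows essentially the same route as the paper's proof: decompose $\log\rho - \log\widetilde\rho$ into the energy gap $\widetilde U - U$ (controlled pointwise to $O(\epsilon\|\bx\|_2)$ via the fundamental theorem of calculus and $\|\nabla\widetilde U - \nabla U\|_\infty \le \epsilon$) plus the log-normalizing-constant gap (controlled to $O(\epsilon)$ by integrating a first-order deviation against the sub-Gaussian tail induced by Assumption~\ref{Dissipativity}), then integrate against $\widehat\rho$. The only cosmetic difference is that you anchor at the origin and carry the offset $h(0)$ explicitly until it cancels against $\log\widetilde C - \log C$, whereas the paper invokes an agreement point $\bx_0$ with $U(\bx_0)=\widetilde U(\bx_0)$ as in Lemma~\ref{control_diff_measure}; your version is arguably cleaner here since, with $C \ne \widetilde C$, such an agreement point is only guaranteed up to a shift of $\widetilde U$ by a constant.
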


\begin{proof} (i) Similar to Eq.\eqref{diff_U}, we have $|\widetilde U(\bx)- U(\bx)|\leq \epsilon (\|\bx\|_2 + \|\bx_0\|_2)$. Combining $e^a\leq 1+2a$ for $a\leq 1$, it follows
\begin{equation*}
\begin{split}
    |\widetilde C - C| &\leq \int_{\dbR^d} e^{-U(\bx)}\big| e^{-\widetilde U(\bx)+U(\bx)} - 1\big| \dd \bx \leq \epsilon \underbrace{\int_{\dbR^d} e^{-U(\bx)}(2\big\| \bx\|_2+2\|\bx_0 \big\|_2) \dd \bx}_{\text{integrable}},
\end{split}
\end{equation*}
where the last item is integrable due to the fast tail decay by Assumption \ref{Dissipativity}. We can easily show that $|\frac{\widetilde C}{C}-1|\leq O(\epsilon)$ for small enough $\epsilon$. It concludes that
\begin{equation*}
\begin{split}
    |\log\rho(\bx)-\log\widetilde \rho(\bx)| \leq |\widetilde U(\bx)- U(\bx)| + \big|\log C - \log \widetilde C\big|\leq O(\epsilon \|\bx\|_2)+O(\epsilon)=O(\epsilon\|\bx\|_2+\epsilon).
\end{split}
\end{equation*}

(ii) Similar to Lemma \ref{control_diff_measure}, the second result holds directly due to the fast tail decay induced by Assumption \ref{Dissipativity}.
\qed    
\end{proof}

The following lemma is a restatement of Lemma 1.6 in \cite{Nutz22_note}.
\begin{lemma}[Data processing inequality]\label{data_processing}
    Let $P, Q\in \mathcal{P}(\Omega)$ and $K: \Omega\rightarrow  \mathcal{P}(\Omega')$ a Markov kernel. Assume $P'\in \mathcal{P}(\Omega')$  and $Q'\in \mathcal{P}(\Omega')$ are the second marginals 
    of $P\otimes K \in \mathcal{P}(\Omega\otimes \Omega')$ and $Q\otimes K \in \mathcal{P}(\Omega\otimes \Omega')$, respectively. Then we have
    \begin{equation*}
        \text{KL}(P'|Q')\leq \text{KL}(P\otimes K|Q\otimes K).
    \end{equation*}
\end{lemma}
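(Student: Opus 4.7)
The plan is to invoke the chain rule (disintegration) for relative entropy, which has already appeared earlier in the paper (see Section~\ref{static_SBP}). First I dispense with the trivial case: if $\text{KL}(P\otimes K \,|\, Q\otimes K) = +\infty$ the claim holds vacuously, so I may assume $P\otimes K \ll Q\otimes K$, which by projection forces $P' \ll Q'$.

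The main step is to disintegrate both joint measures along the second coordinate. Under the standard regularity (Polish / standard Borel spaces) implicit throughout the paper, there exist Markov kernels $L_P, L_Q: \Omega' \to \mathcal{P}(\Omega)$ such that $P\otimes K = P' \otimes L_P$ and $Q\otimes K = Q' \otimes L_Q$. The Radon-Nikodym density then factors as
\begin{equation*}
\frac{d(P\otimes K)}{d(Q\otimes K)}(x,y) \;=\; \frac{dP'}{dQ'}(y)\cdot \frac{dL_P(\cdot \mid y)}{dL_Q(\cdot \mid y)}(x),
\end{equation*}
and taking logarithms and integrating against $P\otimes K$ yields the chain rule identity
\begin{equation*}
\text{KL}(P\otimes K \,|\, Q\otimes K) \;=\; \text{KL}(P' \,|\, Q') \;+\; \int_{\Omega'} \text{KL}\!\left(L_P(\cdot\mid y) \,\big|\, L_Q(\cdot\mid y)\right) dP'(y).
\end{equation*}
Because the inner KL divergence is non-negative (Gibbs' inequality), dropping that term gives exactly the claimed bound $\text{KL}(P'|Q') \leq \text{KL}(P\otimes K \,|\, Q\otimes K)$.

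The only subtle point is the existence of the regular conditional kernels $L_P, L_Q$, which requires the measurable spaces to admit disintegrations; this is implicit in the setup and so is not a serious obstacle. As a disintegration-free alternative, one may invoke the Donsker-Varadhan variational characterization: for any bounded measurable $f:\Omega' \to \mathbb{R}$, the lift $\tilde f(x,y):=f(y)$ preserves both $\int f\, dP' = \int \tilde f\, d(P\otimes K)$ and $\int e^f dQ' = \int e^{\tilde f} d(Q\otimes K)$, so
\begin{equation*}
\text{KL}(P'|Q') \,=\, \sup_f \Big\{\textstyle\int f\, dP' - \log \int e^f\, dQ'\Big\} \,\leq\, \sup_g \Big\{\textstyle\int g\, d(P\otimes K) - \log \int e^g\, d(Q\otimes K)\Big\} \,=\, \text{KL}(P\otimes K \,|\, Q\otimes K),
\end{equation*}
where the inequality is immediate since the supremum over the larger class of test functions $g$ on $\Omega \times \Omega'$ dominates that over lifts.
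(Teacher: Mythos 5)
Your proof is correct. Note, however, that the paper does not actually prove Lemma~\ref{data_processing}: it is stated as a restatement of Lemma~1.6 in the cited lecture notes of Nutz, so there is no in-paper argument to compare against; you have supplied the self-contained proof the paper delegates to the reference. Your first route --- disintegrating $P\otimes K$ and $Q\otimes K$ over the second coordinate, factoring the Radon--Nikodym density, and invoking the chain rule for relative entropy before dropping the non-negative conditional term --- is the classical argument and is essentially the same mechanism the paper already uses elsewhere (the disintegration of $\text{KL}(\mathbb{P}|\mathbb{Q})$ in Section~\ref{static_SBP}), with the caveat you yourself flag that regular conditional kernels require a standard Borel / Polish setting, which is harmless here since the state space is $\mathbb{R}^d$. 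Your second route via the Donsker--Varadhan variational formula is a genuinely nice addition: lifting test functions $f(y)\mapsto \tilde f(x,y)=f(y)$ shows the supremum defining $\text{KL}(P'|Q')$ is taken over a subclass of the test functions defining $\text{KL}(P\otimes K|Q\otimes K)$, which proves the inequality on arbitrary measurable spaces with no disintegration at all; it is both more elementary in its prerequisites and more general than the chain-rule proof, at the cost of not producing the exact chain-rule identity (which is not needed for the lemma, though it is the identity the paper's other arguments lean on).
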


\begin{lemma}\label{finite}
Given a non-negative sequence $\{x_i\}_{0\leq i\leq N}$ such that $\sum_{i=0}^N x_i\leq C$ and $x_{i+1}\leq x_i+\epsilon$, we have 
\begin{equation*}
    x_i \leq \frac{C}{i+1}+\sqrt{2C\epsilon}+\epsilon.
\end{equation*}
\end{lemma}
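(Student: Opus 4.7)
The plan is to exploit a backward form of the ``approximate Lipschitz'' condition. Iterating $x_{j+1}\leq x_j+\epsilon$ $k$ times yields the pointwise lower bound $x_{i-k}\geq x_i-k\epsilon$ for every $k\in\{0,1,\ldots,i\}$. Since the $x_j$ are non-negative, this bound is only informative for $k$ in a window of width $J:=\min(i,\lfloor x_i/\epsilon\rfloor)$, on which the triangular sum gives the master inequality
\[
C\;\geq\;\sum_{j=0}^{N} x_j\;\geq\;\sum_{k=0}^{J}x_{i-k}\;\geq\;(J+1)x_i-\tfrac{\epsilon J(J+1)}{2}.
\]
Rearranged, this reads $(J+1)\bigl(x_i-J\epsilon/2\bigr)\leq C$, and the entire argument follows from analyzing this single inequality under two regimes for $x_i$.

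Next I split on whether the backward lower bound ``hits zero'' before the left end of the sequence. In the large regime $x_i\geq i\epsilon$ (equivalently $J=i$) the master inequality becomes $(i+1)x_i\leq C+i(i+1)\epsilon/2$, hence $x_i\leq \frac{C}{i+1}+i\epsilon/2$. Plugging the assumed lower bound $x_i\geq i\epsilon$ back into the same relation gives $i(i+1)\epsilon/2\leq C$, so $i\leq\sqrt{2C/\epsilon}$; substituting this back controls $i\epsilon/2$ by $\tfrac12\sqrt{2C\epsilon}$, yielding $x_i\leq \frac{C}{i+1}+\sqrt{2C\epsilon}$, which is within target. In the small regime $x_i<i\epsilon$ we have $J=\lfloor x_i/\epsilon\rfloor$, so $J\epsilon\leq x_i<(J+1)\epsilon$. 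Using $J+1\leq x_i/\epsilon+1$ and $J\leq x_i/\epsilon$ in the master inequality produces a self-referential quadratic
\[
\frac{x_i^2}{2\epsilon}\;\leq\;C+\frac{x_i}{2},
\]
equivalently $x_i^2-\epsilon x_i-2C\epsilon\leq 0$. The quadratic formula and the elementary bound $\sqrt{\epsilon^2+8C\epsilon}\leq\epsilon+2\sqrt{2C\epsilon}$ then give $x_i\leq\epsilon+\sqrt{2C\epsilon}$. The degenerate sub-case $J=0$ (i.e.\ $x_i<\epsilon$) gives $x_i\leq\epsilon$ trivially.

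Combining the two regimes, the larger of the two resulting upper bounds is dominated by $\frac{C}{i+1}+\sqrt{2C\epsilon}+\epsilon$, completing the proof. The main delicate point is bookkeeping around the floor $\lfloor x_i/\epsilon\rfloor$ when $x_i/\epsilon$ falls close to an integer, where the derived window width and the true value disagree by one; the additive slack $+\epsilon$ in the stated bound is precisely what absorbs this off-by-one effect, which is why the lemma is stated in this clean form rather than with a sharper but case-dependent constant.
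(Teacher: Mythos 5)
Your proof is correct, but it takes a genuinely different route from the paper. The paper treats the bound as an extremal problem: it fixes $i^*$, views the maximization of $x_{i^*}$ subject to $\sum_i x_i\leq C$, $x_i\geq 0$, $x_{i+1}\leq x_i+\epsilon$ as a linear program, argues that the optimizer must be a ``ramp'' ($x^*_i=x^*_{i-1}+\epsilon$ wherever positive up to $i^*$, and $x^*_i=0$ afterwards), and then splits into two scenarios according to whether the ramp reaches index $0$. You instead avoid any optimization argument: you sum the pointwise backward bounds $x_{i-k}\geq x_i-k\epsilon$ over the window $k\leq J=\min(i,\lfloor x_i/\epsilon\rfloor)$ to get $(J+1)\bigl(x_i-\epsilon J/2\bigr)\leq C$, and your two regimes ($J=i$ versus $J=\lfloor x_i/\epsilon\rfloor$) correspond exactly to the paper's Scenario 1 and Scenario 2. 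Your version is arguably more self-contained (no appeal to existence and structure of an LP optimum), at the cost of the quadratic manipulation; both yield slightly sharper constants than stated, absorbed by the $+\epsilon$ slack. One expository slip worth fixing: in the small regime, the two inequalities you cite ($J+1\leq x_i/\epsilon+1$ and $J\leq x_i/\epsilon$) do not by themselves produce the quadratic $x_i^2-\epsilon x_i-2C\epsilon\leq 0$ (with $J=0$ the master inequality only gives $x_i\leq C$); you also need the floor lower bound $J+1>x_i/\epsilon$ to lower-bound the term $(J+1)x_i$ by $x_i^2/\epsilon$. Since you define $J$ via the floor and explicitly discuss the off-by-one bookkeeping, this is a one-line fix rather than a gap; in fact, using $J\leq x_i/\epsilon$ and $J+1\geq x_i/\epsilon$ directly on the factored form gives the cleaner bound $x_i^2/(2\epsilon)\leq C$, i.e.\ $x_i\leq\sqrt{2C\epsilon}$, dispensing with the quadratic formula altogether.
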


\begin{proof}
Fix $0\leq i^* \leq N$, consider the optimization problem 
\begin{equation}\label{opt1}
\begin{aligned}
\max_{\boldsymbol x} \quad & x_{i^*}\\
\textrm{s.t.} \quad & \sum_{i=0}^N x_i\leq C \text{ and } x_{i}\geq 0 \text{ for $0\leq i\leq N$ }\\
  &x_{i+1}\leq x_i+ \epsilon \text{ for $0\leq i\leq N-1$ }.    \\
\end{aligned}
\end{equation}
The optimal solution exists as this is a linear programming with a bounded feasible region. Let  $x^*_{i^*}$ be the optimal value. Then, we must have 

\begin{enumerate}[label=\upshape(\Roman*)]
    \item\label{itfirst}$x^*_{i} = x^*_{i-1} +\epsilon$ for any $i\leq i^*$ where $x^*_{i-1} > 0$. 
    \item \label {itsecond}$x^*_{i}=0$ for $i>i^*$. 
\end{enumerate}

 To see \ref{itfirst}, suppose $x^*_{i-1}>0$ for some $i\leq i^*$ and $x^*_{i}< x^*_{i-1}+\epsilon$. Then we can decrease $x^*_{i-1}$ and increase each entry of $\{x^*_j\}_{i\leq j \leq i^*}$. Now the solution is still feasible but the objective value is increased, thus contracting the optimality of \eqref{opt1}.
 
 To see \ref{itsecond}, if $x^*_i=a>0$ for some $i>i^*$, we can set $x^*_i=0$ and increase each element of $\{x_j\}_{0\leq j \leq i^*}$ by $\frac{a}{i^*+1}$. Again, this would not violate any constraints.
 
Define $i^*_0\triangleq\min\{0\leq i\leq N: x^*_i>0\}$, the analysis can be broken down in two scenarios:

 \paragraph{Scenario 1} When $i^*_0=0$: by \ref{itfirst} and \ref{itsecond}, we must have  $x^*_0=c_0$ for some $c_0>0$, $x^*_i=c_0+\epsilon i$ for all $i\leq i^*$ and $x^*_i=0$ for all $i>i^*$. It follows that $\sum_{i=0}^N x^*_i=(i^*+1)c_0+\frac{i^*(1+i^*)\epsilon}{2}\leq C$,
 which implies $c_0\leq \frac{C}{i^*+1}-\frac{i^*\epsilon}{2}$ and $i^*\leq \sqrt{\frac{2C}{\epsilon}}.$
 As a result, the optimal value of $x^*_{i^*}$ satisfies
  \begin{equation*}
     x^*_{i^*}= (c_0+i^*\epsilon) \leq \frac{C}{i^*+1}+\frac{(i^*)\epsilon}{2} \leq \frac{C}{i^*+1}+\sqrt{C\epsilon}.
 \end{equation*}

\paragraph{Scenario 2} When $i^*_0>0$: by \ref{itfirst} and \ref{itsecond}, we have  $x^*_i=0$ for $i\notin [i_0^*, i^*]$ and $x^*_i=c_0+\epsilon (i-i^*_0)$ for all $i^*_0\leq i\leq i^*$ and some $0<c_0\leq \epsilon$. Define $I\triangleq i^*-i^*_0+1$, we have $\sum_{i=0}^N x^*_i=c_0 I+\frac{(I-1)I\epsilon}{2}\leq C$, which implies $(I-1)\leq \sqrt{\frac{2C}{\epsilon}}$. The optimal value of $x^*_{i^*}$  satisfies
\begin{equation*}
    x^*_{i^*}=c_0+(I-1)\epsilon \leq \epsilon+\sqrt{2C\epsilon}.
\end{equation*}

Combining the results of Scenario 1 and Scenario 2 completes the proof.\qed
\end{proof}

\begin{lemma}\label{unbounded_cost}
Given a non-negative sequence $\{x_i\}_{i\geq 0}$ such that $\sum_i x_i=C<\infty$ and $x_{i+1}\leq x_i+\epsilon$. We have 
\begin{equation*}
    x_i\leq \frac{C}{i+1}+\sqrt{2C\epsilon}+\epsilon.
\end{equation*}
\end{lemma}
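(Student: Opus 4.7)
The plan is to reduce the infinite-sequence statement directly to the finite-sequence bound already proved in Lemma \ref{finite}, without re-running the linear-programming argument. The key observation is that the finite lemma's hypothesis is only that the \emph{partial} sum $\sum_{j=0}^N x_j$ be at most $C$; we are free to choose any truncation length $N$ we like.

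Concretely, fix an arbitrary index $i \geq 0$. The plan is to pick any $N \geq i$ and consider the truncated sequence $(x_0, x_1, \dots, x_N)$. Non-negativity of the $x_j$ together with $\sum_{j=0}^{\infty} x_j = C < \infty$ yields
\[
\sum_{j=0}^{N} x_j \;\leq\; \sum_{j=0}^{\infty} x_j \;=\; C,
\]
and the increment condition $x_{j+1} \leq x_j + \epsilon$ for $0 \leq j \leq N-1$ is inherited from the infinite sequence. Thus the finite sequence satisfies exactly the hypotheses of Lemma \ref{finite} with the same constants $C$ and $\epsilon$, and applying that lemma at position $i$ gives
\[
x_i \;\leq\; \frac{C}{i+1} + \sqrt{2C\epsilon} + \epsilon.
\]
Since $i$ was arbitrary, this bound holds for every $i \geq 0$, which is the claim.

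There is essentially no obstacle here: the proof is a one-line reduction. The only subtlety worth flagging is that one must use the inequality form $\sum_{j=0}^N x_j \leq C$ (rather than equality) in Lemma \ref{finite}, which is why that lemma was stated with $\leq C$ in the first place. No new estimates, no new case analysis, and no additional assumptions are required, so the corollary is truly a free consequence of the finite version.
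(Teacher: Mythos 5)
Your proof is correct and leans on the same key ingredient as the paper's — Lemma \ref{finite} — but your reduction is cleaner. The paper first fixes one $N$ with $\sum_{i>N} x_i < \epsilon$, handles $i > N$ separately via the trivial bound $x_i \leq \epsilon$ (which is dominated by the target bound), and then applies Lemma \ref{finite} only for $i \leq N$. You instead let the truncation length $N$ depend on the index $i$ of interest and apply Lemma \ref{finite} to any finite prefix containing position $i$; since the finite lemma only assumes a partial-sum inequality $\sum_{j=0}^N x_j \leq C$, the hypotheses are automatically inherited, and the case split disappears. Both routes are valid and short, but yours makes explicit the point that the finite lemma's hypothesis is deliberately an inequality, which is what lets the corollary be a ``free'' consequence; the paper's version carries a superfluous tail estimate that does no real work.
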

\begin{proof}
1) Since $\sum_i x_i=C<\infty$, we have $N$ such that $\sum_{i> N} x_i<\epsilon$ which implies $x_i\leq \epsilon$ for $i> N$; 2) For $0\leq i \leq N$, we have $\sum_{i=1}^Nx_i\leq C$, applying Lemma \ref{finite} shows  
$x_i\leq\frac{C}{i+1}+\sqrt{2C\epsilon}+\epsilon $ for $0\leq i \leq N$.  \qed
\end{proof}

\section{Experimental Details} \label{appendix:experiment}

\subsection{Conditional-inference framework}
\label{subsec:imputation_formulation}

Consider an arbitrary window of a \emph{multivariate} time series of some  fixed length $L$ and $K$ features (variates): ${\bx}_{\mathrm{data}} \in\mathbb{R}^{K\times L}$ from the full training dataset.
The entries of this window are labeled by \textit{observation}, \textit{condition},  \textit{target}, and \textit{unknown} (one entry can have multiple labels). Observations represent all known values from the raw data;
in many cases, the raw data has missing values, so the complementary of \textit{observation} is \textit{unknown};
the condition entries are presented to the model as partial information of the window and are part of the observations.

To evaluate the model, the target entries are randomly selected from the observations, but these are hidden from the model as artificial ``missing" values. The performance metrics are calculated by comparing the imputed values and the known observations as ground truth. The locations of observation, condition, and target missing values in a time series window are indicated by binary masks
$\bM_{\mathrm{obs}}, \bM_{\mathrm{cond}}, \bM_{\mathrm{target}} \in\{0,1 \}^{K\times L}$.
Their values can thus be obtained through Hadamard product
$\bx_{\mathrm{obs}} ={\bx}_{\mathrm{data}} \circ \bM_{\mathrm{obs}}$, 
$\bx_{\mathrm{cond}} ={\bx}_{\mathrm{data}} \circ \bM_{\mathrm{cond}}$ and $\bx_{\mathrm{target}} ={\bx}_{\mathrm{data}} \circ \bM_{\mathrm{target}}$, respectively. The masks may change from window to window.
Note that $\bM_{\mathrm{cond}} \circ \bM_{\mathrm{target}} \equiv \textbf{0}$ and  $\bM_{\mathrm{cond}} \circ \bM_{\mathrm{obs}} \equiv \bM_{\mathrm{cond}}$.
$\bM_{\mathrm{cond}} + \bM_{\mathrm{target}}$ is not necessarily equal to $\bM_{\mathrm{obs}}$ or $\textbf{1}^{K\times L}$.
The unknown entries do not have ground true values, as shown in Figure. \ref{fig:mask}. Having formulated the general multivariate time series imputation task as a probabilistic model, the imputation task is treated as a conditional generative model and the goal is to sample according to $p(\bx_{\mathrm{target}} | \bx_{\mathrm{cond}}, \bM_{\mathrm{cond}}, \bM_{\mathrm{target}})$.

\subsection{Datasets} \label{sec:appendix_dataset}
\paragraph{Synthetic dataset}
Each sample has $K=8$ features and $L=50$ time points. The signal has a simple temporal and feature structure.
The signals are a mixture of sinusoidal curves.
\begin{align*}
\mathrm{signal}_1(t) =& \sin(2\pi t) \qquad\qquad\qquad\quad \mathrm{signal}_5(t) = \sin^2(2\pi t) \cos(2\pi t) + 0.3 t  \\
\mathrm{signal}_2(t) =& \cos(2\pi t) \qquad\qquad\qquad\quad \mathrm{signal}_6(t) = \sin^3(2\pi t) -0.3t\\
\mathrm{signal}_3(t) =& \sin^2(2\pi t) \qquad\qquad\qquad\ \ \  \mathrm{signal}_7(t) = \cos^2(2\pi t) e^{-0.1t} - 0.2 t\\
\mathrm{signal}_4(t) =& 2 \sin^2(2\pi t) \cos(2\pi t) \qquad\ \  \mathrm{signal}_8(t) = \cos^2(2\pi t) \sin(2\pi t) e^{0.4t} + 0.2t.
\end{align*}

The noisy data is created by randomly shifting the phases and adding Gaussian noise,
\begin{equation*}
\bx_{\mathrm{data},i}(k, t) = \mathrm{signal}_k(t + \omega_i) + \sigma_{\mathrm{noise}} \cdot \varepsilon_{i,k,t},
\quad k=1,...,K,\quad t=1,...,L
\label{eq:sinusoid_dynamic}
\end{equation*}
where $i$ is the sample index, $\varepsilon_{i,k,t} \stackrel{i.i.d.}{\sim} N(0, 1)$,
$\omega_i \stackrel{i.i.d.}{\sim} \mathrm{Uniform}(0, 1)$.
The phase of each sample is random $\omega_i \stackrel{i.i.d.}{\sim} \mathrm{Uniform}[0, 1]$, and all features in a sample share the same phase shift.
Imputing the missing values requires inferring the phase of the signal based on a partially observed noisy signal which imposes the learning of the dependency between conditions and targets to handle the imputation task.
Once raw data is created, some time points are randomly removed, mimicking the missed observed values (unknown entries). 
The observed entries are split into conditions and artificial missing values (targets). 
20 consecutive time points of each feature are selected as artificial missing values (targets).

\paragraph{Real datasets}
The model is applied to real datasets such as air quality PM2.5 \citep{U_Air} and PhysioNet \citep{phy_2012}.
The air quality data has $K=36$ features and $L=36$ time points. The raw data has 13\% missing values (the portion of the unknown entries).
The PhysioNet data has 4000 clinical time series with $K=35$ features and $L=48$.
The raw data is sparse with 80\% missing values. We further randomly select 10\% and 50\% out of observed values as the targets.
The preprocess and time window splitting follow the previous work \citep{CSDI}.
Both real datasets have large dimensions (in terms of $K\times L$) than the synthetic data.

\subsection{SDEs} \label{sec:appendix_sde}

\paragraph{VESDE}
The forward SDE is $\mathrm{d} \bx_t 
= \sqrt{\frac{\mathrm{d} \sigma^2(t) }{\mathrm{d} t} }
= g(t) \mathrm{d} \mathbf{w}_t
= \sigma_{\min} \left(\frac{ \sigma_{\max} }{\sigma_{\min}} \right)^t
\sqrt{2\log \frac{ \sigma_{\max} }{\sigma_{\min}} }
\mathrm{d} \mathbf{w}_t.$
The variance term is $\sigma\left(t \right)
= \sigma_{\min} \left(\frac{ \sigma_{\max} }{\sigma_{\min}} \right)^t,
\quad t \in (0, 1]$. $\sigma_{\min}$ is usually set as a very small value close to zero.
$\sigma_{\max}$ is set as a much larger value than the variance of the data so 
$p(\textbf{x}_t | \textbf{x}_0)$ is closer to normal distribution as $t$ approximates $T=1$. 

\paragraph{VPSDE}
The forward SDE is $\mathrm{d} \bx_t = -\frac{1}{2}\beta(t) \bx_t \mathrm{d}t
+ \sqrt{\beta(t)} \mathrm{d} \mathbf{w}_t$, where $\beta(t) = \beta_{\min} + t (\beta_{\max} - \beta_{\min}),
\quad t \in (0, 1].$
A straightforward numerical scheme follows that
\begin{align*}
\bx_{i+1}
\approx& \left(1 - \frac{1}{2}\beta^{\text{VPSDE}}(i/N)\Delta \right) \bx_i
    + \sqrt{\beta^{\text{VPSDE}}(i/N)\Delta } \cdot \boldsymbol{\varepsilon}.
\end{align*}
which is adopted in \citet{score_sde}.

\subsection{Model details} \label{sec:appendix_models}
The key structure is the backward policy for generating imputed values; The forward policy aims to reduce the transport cost.


\paragraph{Transformer for the backward policy} The diagram below shows the major transformations of the neural network.
The tuple $(B,C,K,L)$ represents the shape of a tensor, where $B$ is the batch size, $C$ is the number of channels, $K$ is the number of features, $L$ is the number of time points. The backward policy takes $\bx_{\mathrm{cond}}, \bM_{\mathrm{cond}}$ as the input.
\begin{equation*}
\begin{aligned}
& \bx_{\mathrm{cond}}  \xrightarrow{\mathrm{input}}
(B,K,L) \xrightarrow[]{\mathrm{unsqueeze}}
(B,1,K,L) \xrightarrow[]{\mathrm{stem}} 
(B,C-1,K,L) \xrightarrow[]{\mathrm{concatenate}\; \bM_{\mathrm{cond}} } \\
& (B,C,K,L) \xrightarrow[\text{step 1}]{\mathrm{add\; embedding} }
(B,C,K,L) \xrightarrow[\times N_{\mathrm{layer}}]{\mathrm{transformer\; blocks} }
(B,C,K,L) \xrightarrow[]{\mathrm{output \; projection}} 
(B,K,L).
\end{aligned}
\end{equation*}
In step 1, the embedding is a concatenation of feature index, time index, and the condition mask with shape $(B, C_{\mathrm{feature}} + C_{\mathrm{time}} + 1, K, L)$.
The time index is for the time series not for the SDE diffusion time.
The feature embedding is the same for all batches and all time point, and the time embedding is the same for all batches and all features.
Then the embedding is projected to $C$ channels.
The diffusion time embedding is added in the transformer blocks.
The model stacks $N_{\mathrm{layer}}$ transformer blocks with residual connections.
The diagram of the main component is the following,
\begin{equation*}
\begin{aligned}
\xrightarrow{\mathrm{input}}
& (B,C,K,L) \xrightarrow[\text{step 1}]{\mathrm{reshape}}
(BK, L,C) \xrightarrow[\text{step 2}]{\mathrm{time\; transformer}}
(BK, L,C) \xrightarrow[\text{step 3}]{\mathrm{reshape}} \\
&(B,C,K,L) \xrightarrow[\text{step 4}]{\mathrm{reshape}}
(BL, K,C) \xrightarrow[\text{step 5}]{\mathrm{feature\; transformer}}
(BL, K,C)  \xrightarrow[]{\mathrm{reshape}}
(B,C,K,L)
\end{aligned}
\end{equation*}
Each block has two transformers, one is for temporal information and the other is for feature information.
In step 2, the time transformer encoder performs along the $L$ dimension as the sequence and treats $C$ as the embedding.
The size of the attention matrix is $L\times L$.
In step 1, the feature dimension is reshaped into batch dimension meaning all features share the same transformer function. 
Similarly, in step 5, the feature transformer performs along features $K$ as the sequence and uses $C$ as the embedding. This is the reason why step 4 reshapes time dimension $L$ into batch dimension.
The size of the attention matrix is $K\times K$.
Our model has $N_{\mathrm{layer}}=4$ transformers blocks. Each transformer block has 64 channels, 8 attention heads.
Totally it has 414 thousand parameters.


\paragraph{U-Net for the forward policy} The forward policy does not hand missing values.
We use the U-Net for the forward policy \citep{ronneberger2015u, song2019generative}. It has skip connections from the down-scaling branch to the up-scaling branch on each scale. Our model has 3 down-scaling layers and 3 up-scaling layers with 32 channels and 664K parameters.

\paragraph{Activation functions} It is important to note that the loss function of the model involves the calculation of the divergence with respect to the data. We use \emph{SiLU} instead of \emph{ReLU} to avoid vanishing gradients.

\subsection{Training}
Compared to the denoising score matching method, the Schr\"odinger bridge method involves optimizing both forward and backward SDEs and sampling non-linear forward SDE,
which makes it harder to train, perform inference, debug, and tune the model.
In both approaches, the inference only requires the backward SDE and the procedure is similar.

\paragraph{Hyperparameters}
The model is warmed up using SGM for about 6000 iterations with batch size 64. We use AdamW as the optimizer. The alternative training has 40 stages with each stage running 480 iterations. The trajectories are sampled every 80 iterations. The learning rates for the forward and backward steps are $2\times10^{-6}$ and $2\times10^{-5}$, respectively. The exponential decay scheduler is adopted to improve stability.  We use VESDE as the base SDE as introduced in section \ref{sec:appendix_sde}.
$\sigma_{\max} = 20, \sigma_{\min}=0.001$.
We use 100 discretization steps. 
The prior distribution for VESDE is $N(0, \sigma_{\max}^2 \textbf{I})$.

\subsection{Inference }\label{inference_supp}
The imputation task requires conditional sampling $p(\bx_{\mathrm{target}} | \bx_{\mathrm{cond}}, \bM_{\mathrm{cond}}, \bM_{\mathrm{target}})$.
The conditional inference model needs to process partially observed information $\bx_{\mathrm{obs}}$ and the condition mask $\bM_{\mathrm{obs}}$. 





\paragraph{Conditional inference}
The conditional inference, more specifically the imputation or inpainting in our case \citep{score_sde, CSDI}, is the following,

\begin{algorithm}[H]
\caption{Conditional inference based on VESDE}

\begin{algorithmic}[1]
\STATE \textbf{Input:} trained backward policy $\overleftarrow\bz$, $\bx_{\mathrm{cond}}$, $\bM_{\mathrm{cond}}$,
hyperparameters $\sigma_{\min}$, $\sigma_{\max}$.
\STATE Draw sample $\bx_T \sim N(\textbf{0}, \sigma_{\max}^2 \textbf{I})$
\FOR{$i \gets T$ to $1$}
    \STATE $t = i/T$, 
        $\varepsilon \sim N(0, \textbf{I})$,
        $\Delta=1/T$,
        $g = \sigma_{\min} \left(\frac{ \sigma_{\max} }{\sigma_{\min}} \right)^t
        \sqrt{2\log \frac{ \sigma_{\max} }{\sigma_{\min}}}$
    \STATE $\bx_i = \bx_{\mathrm{cond}} \circ \bM_{\mathrm{cond}} 
        + \bx_i \circ (\textbf{1} - \bM_{\mathrm{cond}})$
    \STATE $\bx_{i-1} = \bx_{i}
        + [- \bbf(t)
        + g \overleftarrow\bz(t, \bx_{i}, \bM_{\mathrm{cond}})]\Delta
        + g \sqrt{\Delta} \varepsilon $
\ENDFOR
\end{algorithmic}
\label{alg:predictor_corrector}
\end{algorithm}


\subsection{Limitations }\label{limitations}
The marginal improvement doesn't mean minimizing the transport cost via the control variable $\bu$ is not promising; by contrast, our model is limited by other complications such as the divergence approximations. How to reduce the variance and computation workload of the Hutchinson estimators \citep{Hutchinson89, FFJORD} will be essential to improve the performance; other interesting updates can be seen in \citet{Chen_divergence_free}.

\subsection{Empirical verification} \label{sec:dsm_vs_sb_empirical}
In this section, we empirically compare the convergence of CSBI and CSBI$_0$ using the synthetic data as described in Appendix \ref{sec:appendix_dataset}. To make a fair comparison, CSBI$_0$ is trained following Eq.\eqref{eq:llk-ipf} by \emph{forcing $\overrightarrow\bz_{t} \equiv 0$}, which is equivalent in theory to score matching loss \citep{forward_backward_SDE}.
Two models share the same settings with a constant learning rate except that our method trains the forward policy in each iteration.
Since CSBI$_0$ needs more forward diffusion steps to converge to the ideal prior distribution, it may have poor performance when the number of diffusion steps is insufficient or the variance of the diffusion is small.
As a comparison, SBP can overcome such an issue by minimizing the transport cost through the forward SDE \citep{DSB}.
In this experiment, the number of diffusion steps is 20, and the variance of the forward diffusion is small $\sigma_{\mathrm{max}}=0.3$ using VESDE as described in Appendix \ref{sec:appendix_sde}.

\subsection{Time series prediction } \label{sec:appendix_prediction}
Our model can be easily adopted for time series prediction task by simply manipulating the masks. The condition mask in our model corresponds to the context window in the prediction task, and the target mask is equivalent to the future window.
Our method allows missing values in the context window during both training and inference. 
The training and inference procedures remain the same as the imputation task.

We evaluate the performance of the model using two public datasets: Solar and Exchange \citep{lai2018modeling}. Details of the datasets are shown in Table \ref{tab:prediction_dataset}.
Baseline models include GP-copula \citep{salinas2019high}, Vec-LSTM-low-rank-Copula (Vec) \citep{salinas2019high}, TransMAF \citep{rasul2021autoregressive}. The performance of the baseline models is from the reference therein.

\begin{table}[H]
\caption{Properties of the datasets.}
\centering
\begin{tabular}{lccccc}
\toprule
Datasets & Dimension & Frequency & Total time points & Context length & Prediction length \\
\midrule
Exchange &  8 & Daily & 6,071 &  48  & 30 \\
Solar    &  137 & Hourly & 7,009 &  80 & 24 \\
\bottomrule
\end{tabular}
\label{tab:prediction_dataset}
\end{table}

\subsection{Imputation examples} \label{sec:appendix_imputation_eg} 
In this section, we demonstrate an example of the diffusion process from the prior distribution to the final data distribution using the PM2.5 dataset as shown in Figure \ref{fig:pm25_diff_demo}.
Figure \ref{fig:pm25_eg} and \ref{fig:physio_eg_1} present more examples of the imputed data distributions.
Figure \ref{fig:pm25_eg}, \ref{fig:pm25_eg1}, and \ref{fig:pm25_eg_2} illustrate the irregularity of the time series imputation, where the missing values can location anywhere in the window.
In Figure \ref{fig:pm25_eg}, the top left feature has much more missing values than the feature in row 3 column 1.
As a comparison, Figure \ref{fig:pm25_eg1} provides a different layout of missing values and conditions.
All these imputations are handled by one model not by models trained separately with different masks.

\begin{figure}[H]
\centering
\includegraphics[width=\textwidth]{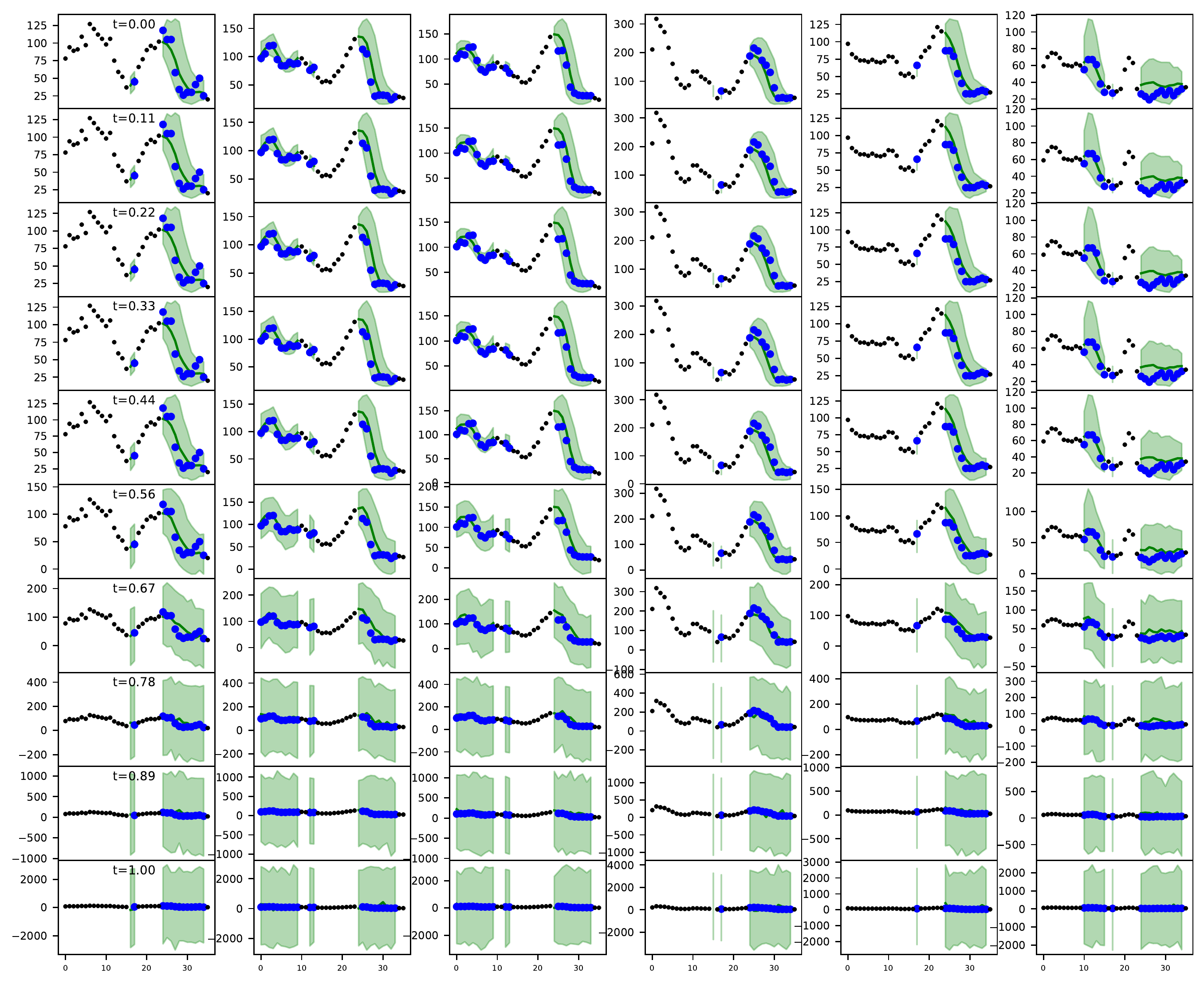}
\vskip -0.15in
\caption{Demonstration of the backward diffusion process for conditional inference using the air quality PM2.5 dataset.
The diffusion process starts from the prior distribution at the bottom ($t=1$), and the backward diffusion will converge to the data distribution at the top ($t=0$).
Each column is one feature, each row is one diffusion time.
The dark dots are $\bx_{\mathrm{obs}}$,
the blue dots are true values of $\bx_{\mathrm{target}}$,
the green band is 80\% confidence interval of imputation.
The imputation is performed using the normalized data; the figure shows the time series in the original scale.
}
\label{fig:pm25_diff_demo}
\end{figure}

\begin{figure}[H]
\centering
\includegraphics[width=\textwidth]{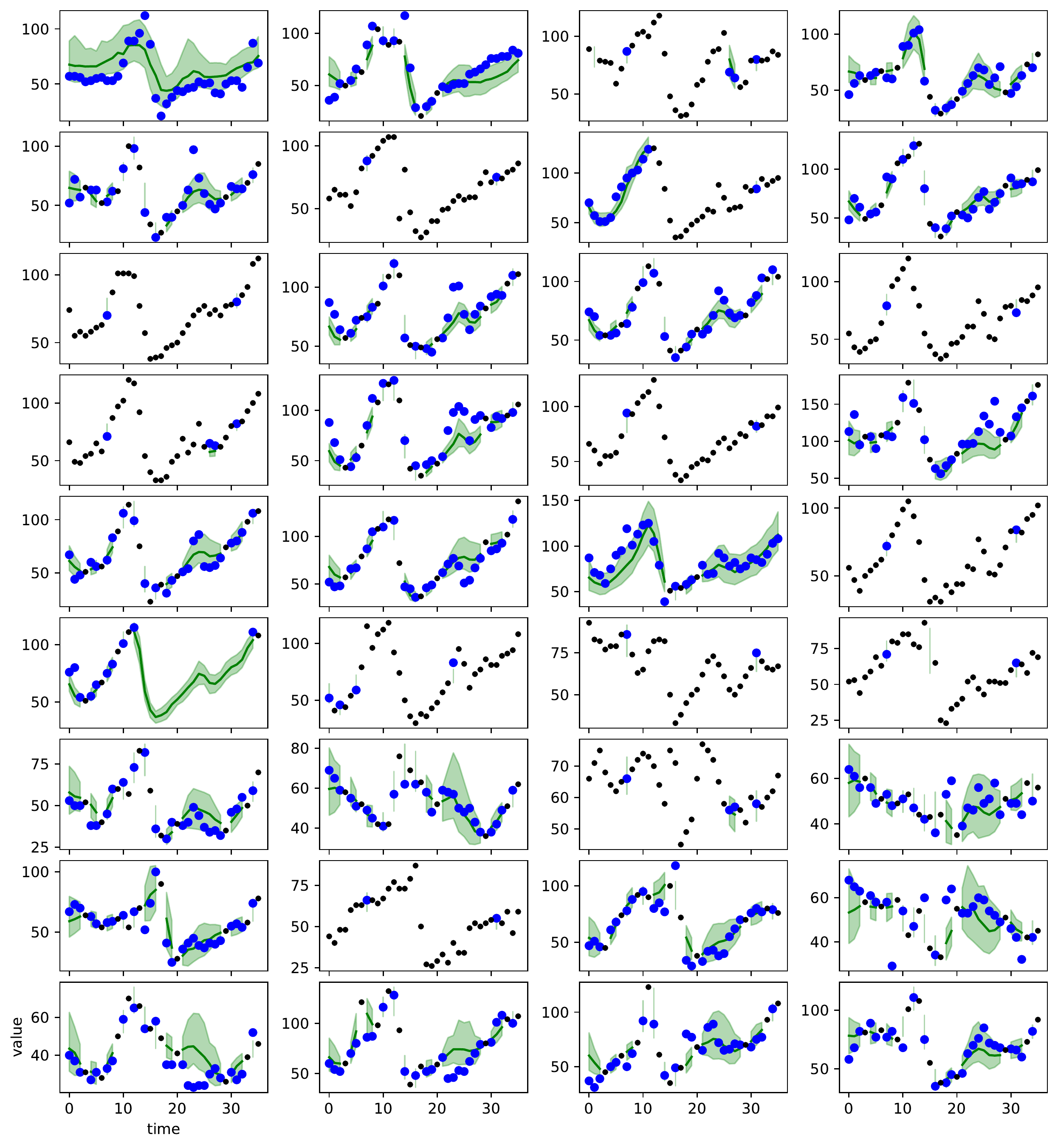}
\vskip -0.15in
\caption{Example of imputation for PM2.5 dataset.
The figure shows one sample with 36 features in each subplot and 36 time points.
Smaller dark dots are conditions, larger blue dots are ground true values of the artificial missing values, the green belt shows the 80\% confidence interval and the median curve of the imputed values. The time series data do not have observations at every time point, the missing ones are the unknowns.
}
\label{fig:pm25_eg}
\end{figure}

\begin{figure}[H]
\centering
\includegraphics[width=\textwidth]{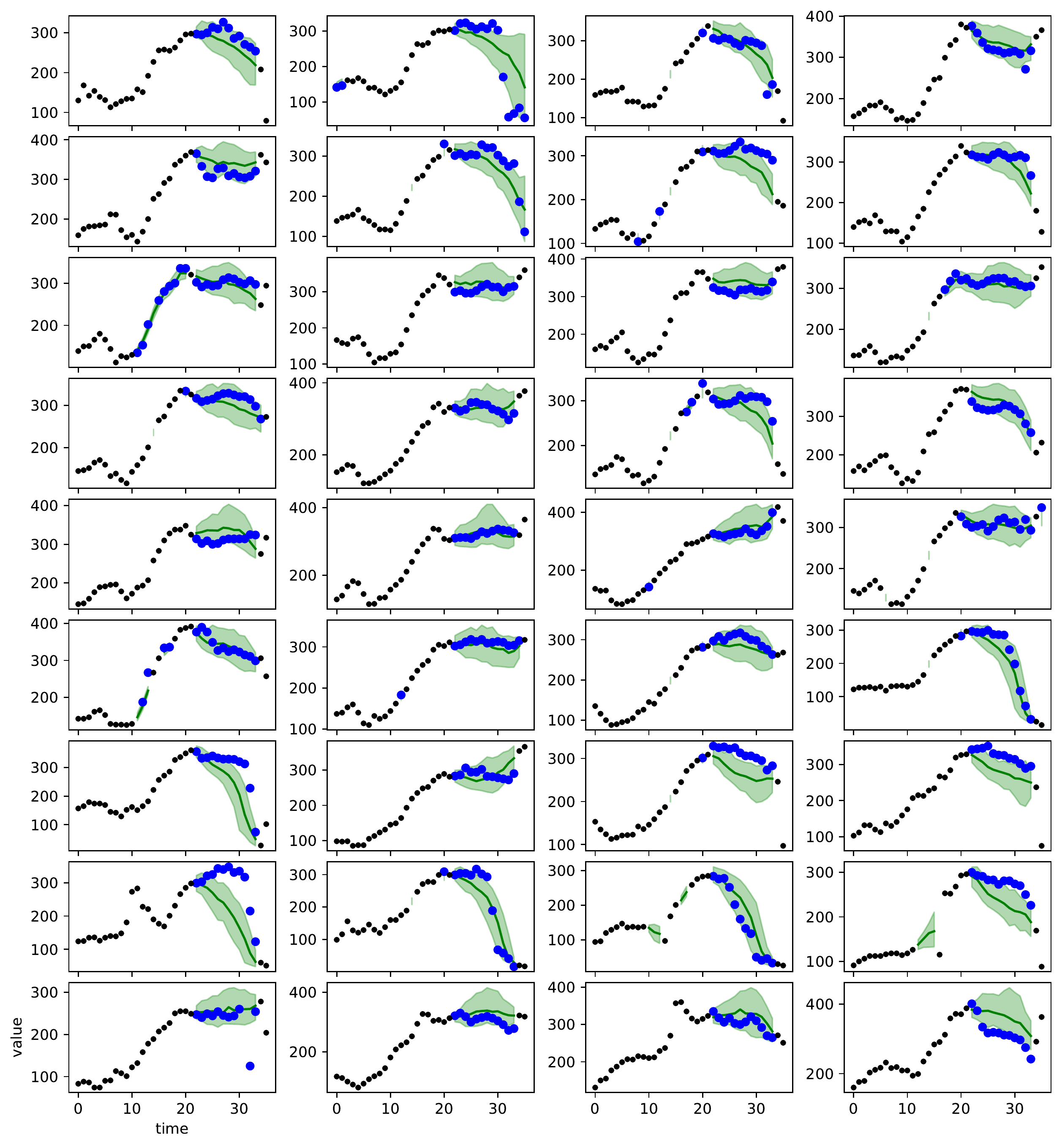}
\vskip -0.15in
\caption{Example of imputation for PM2.5 dataset. The figure shows one sample with 36 features in each subplot and 36 time points.
Smaller dark dots are conditions, larger blue dots are ground true values of the artificial missing values, the green belt shows the 80\% confidence interval and the median curve of the imputed values. The time series data do not have observations at every time point, the missing ones are the unknowns.
}
\label{fig:pm25_eg1}
\end{figure}

\begin{figure}[H]
\centering
\includegraphics[width=\textwidth]{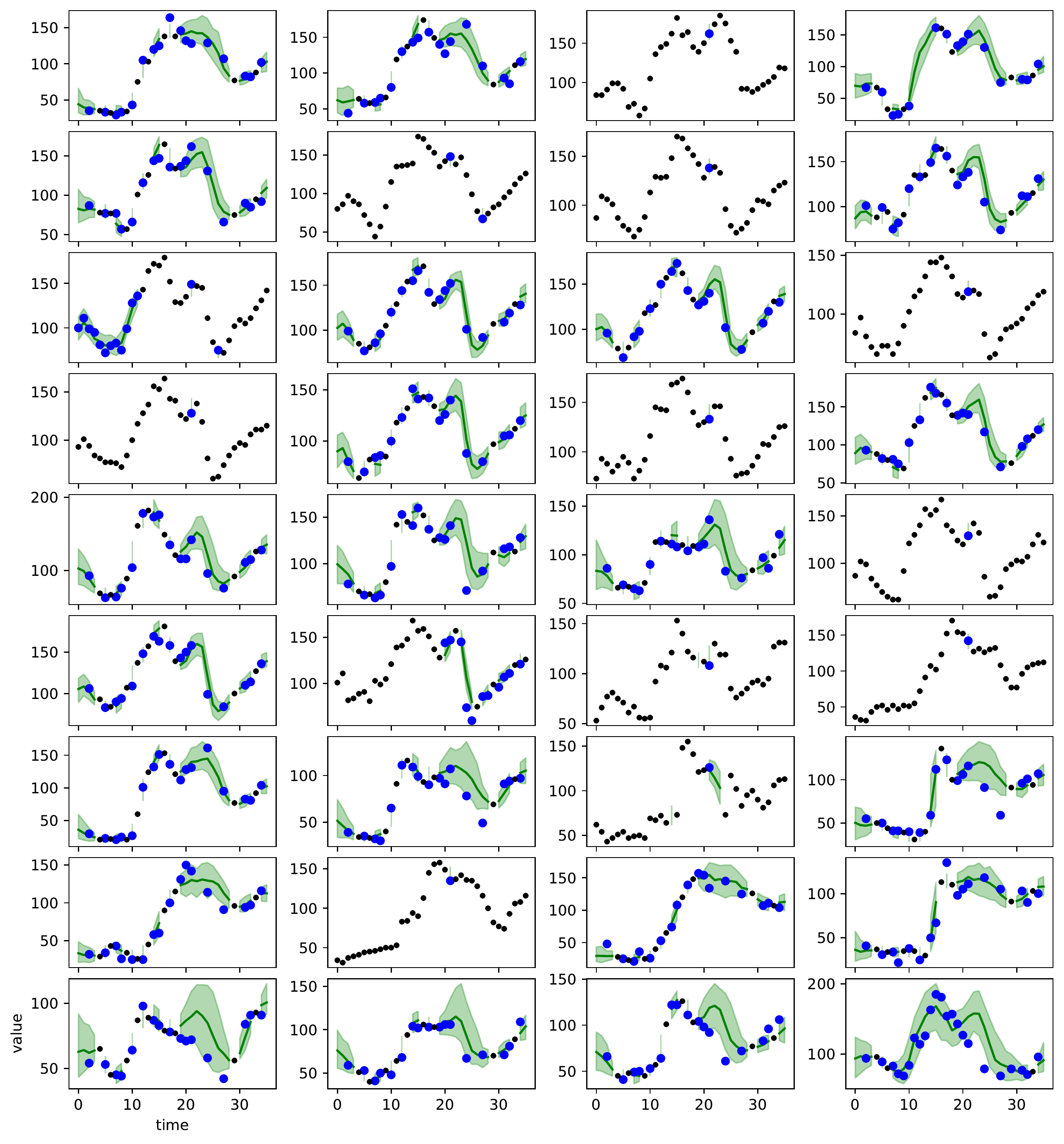}
\vskip -0.15in
\caption{Example of imputation for PM2.5 dataset. The figure shows one sample with 36 features in each subplot and 36 time points.
Smaller dark dots are conditions, larger blue dots are ground true values of the artificial missing values, the green belt shows the 80\% confidence interval and the median curve of the imputed values. The time series data do not have observations at every time point, the missing ones are the unknowns.
}
\label{fig:pm25_eg_2}
\end{figure}

\begin{figure}[H]
\centering
\vskip -0.15in
\includegraphics[scale=0.47]{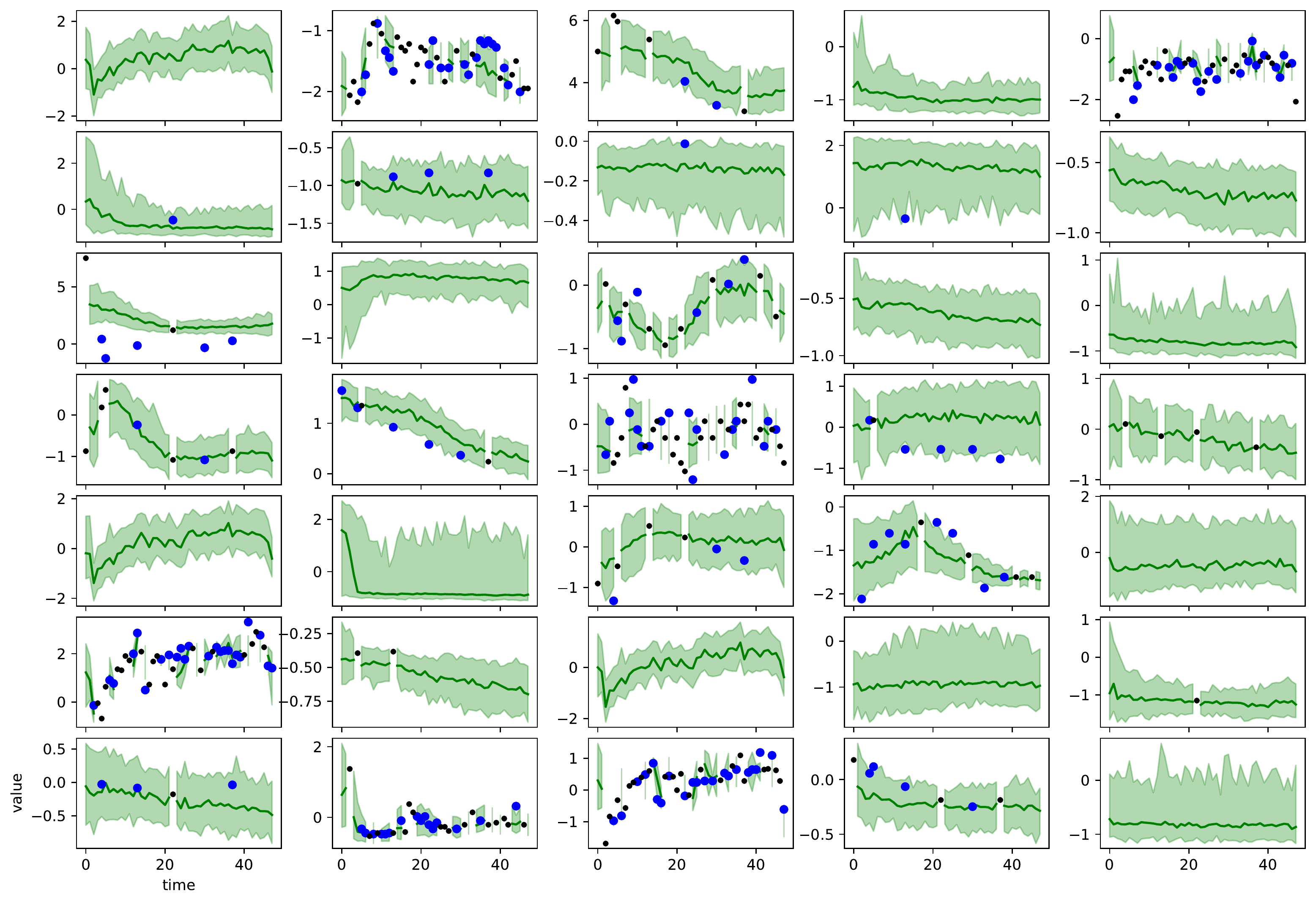}
\vskip -0.15in
\caption{Example of imputation for PhysioNet dataset with artificial missing ratio 0.1.
The missing ratio is the portion of selected artificial missing values among observations.
The figure shows one sample with 35 features in each subplot and 48 time points.
Smaller dark dots are conditions,
larger blue dots are ground true values of the artificial missing values,
the green belt shows the 80\% confidence interval and the median curve of the imputed values.
The time series data do not have observations at every time point, the missing ones are the unknowns.
As the data is sparse with 80\% of unknowns, the imputation has a very wide confidence band.
}
\label{fig:physio_eg}
\end{figure}

\begin{figure}[H]
\centering
\vskip -0.15in
\includegraphics[scale=0.47]{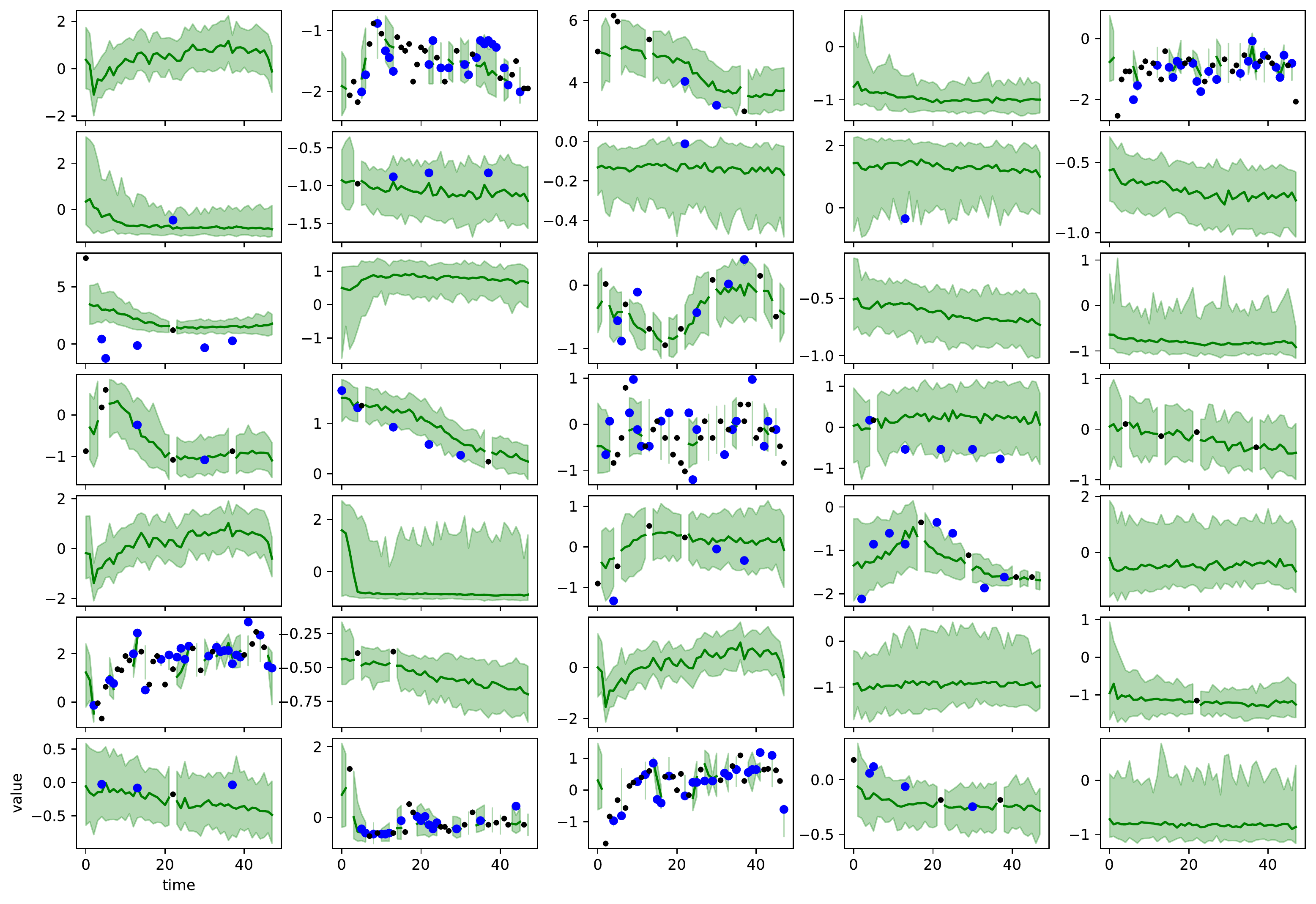}
\vskip -0.2in
\caption{Example of imputation for PhysioNet dataset with artificial missing ratio 0.5. 
}
\label{fig:physio_eg_1}
\end{figure}





\end{document}